\relax
\documentclass[letterpaper]{article} 
\usepackage{aaai20}  
\usepackage{times}  
\usepackage{helvet} 
\usepackage{courier}  
\usepackage[hyphens]{url}  
\usepackage{graphicx} 
\urlstyle{rm} 
  
\usepackage{graphicx}  
\frenchspacing  
\setlength{\pdfpagewidth}{8.5in}  
\setlength{\pdfpageheight}{11in}  
\setcounter{secnumdepth}{0}
 \pdfinfo{
/Title (Learning Query Inseparable ELH Ontologies)
/Author (Ana Ozaki, Cosimo Persia,  Andrea Mazzullo)
}

\usepackage{amsmath,amssymb}

\usepackage{amsthm}
\usepackage{xspace}
\usepackage{stmaryrd}
\usepackage{wasysym}
\usepackage{microtype}
\usepackage{array}
\usepackage[colorinlistoftodos,obeyFinal]{todonotes}
\usepackage{booktabs}
\usepackage{csquotes}
\usepackage{enumitem}

\usepackage{tikz}
\usetikzlibrary{backgrounds}
\usetikzlibrary{calc}
\usetikzlibrary{positioning,fit}
\usepgfmodule{plot}

\usepackage{tikz}
\definecolor{Black}  {RGB}{0,0,0}

\pgfdeclarelayer{edgelayer}
\pgfdeclarelayer{nodelayer}
\pgfsetlayers{edgelayer,nodelayer,main}

\tikzstyle{none}=[inner sep=0pt]

\tikzstyle{rn}=[circle,fill=Red,draw=Black,line width=0.8 pt]
\tikzstyle{gn}=[circle,fill=White,draw=Black,line width=0.8 pt]
\tikzstyle{yn}=[circle,fill=Yellow,draw=Black,line width=0.8 pt]
\tikzstyle{simple}=[circle,fill=White,draw=Black]
\tikzstyle{newstyle1}=[circle,fill=Black,draw=Black,line width=0.3 pt,inner sep=0pt]
\tikzstyle{simple2}=[-,dashed,draw=Black]
\tikzstyle{simpledotted}=[-,dotted,draw=Black]
\tikzstyle{simple}=[-,draw=Black,line width=2.000]
\tikzstyle{arrow}=[-,draw=Black,postaction={decorate},decoration={markings,mark=at position .5 with {\arrow{>}}},line width=2.000]
\tikzstyle{tick}=[-,draw=Black,postaction={decorate},decoration={markings,mark=at position .5 with {\draw (0,-0.1) -- (0,0.1);}},line width=2.000]
\tikzstyle{newstyle2}=[-latex,draw=Black]
\tikzstyle{newstyle3}=[->,dotted,draw=Black]
\tikzstyle{newstyle6}=[->,dotted,draw=Black]

 \usepackage{boxedminipage}
\usepackage[ruled,vlined]{algorithm2e}
\usepackage{mathtools}
\usepackage{color} 
\usepackage{thm-restate} 
\usepackage{colortbl}
\usepackage{pifont}
\usepackage{array}
\usepackage{multirow}
\usepackage{scalerel,stackengine}

\newcommand{\mypar}[1]{\medskip\noindent\textbf{#1.}}

\newcommand{\Nbb}{\ensuremath{\mathbb{N}}\xspace}

\newcommand{\individuals}[1]{\mathsf{ind}(#1)}

\def\rConstrHelper(#1,#2,#3,#4){{#1}\ \substack{#2 \\ #3}\ {#4}}

\newcommand{\sample}{\ensuremath{S}\xspace}
\newcommand{\prob}{\ensuremath{\Dmc}\xspace}
\newcommand{\batch}{\ensuremath{\Bmc}\xspace}

\newcommand{\indiv}{\ensuremath{\mathsf{ind}}\xspace}
\newcommand{\homo}{\ensuremath{h}\xspace}
\newcommand{\homsubset}{\ensuremath{\Xmc}\xspace}
\newcommand{\poly}{\ensuremath{\mathit{p}}\xspace}
\def\apeqA{\SavedStyle\sim}
\def\apeq{\setstackgap{L}{\dimexpr.5pt+1.5\LMpt}\ensurestackMath{
  \ThisStyle{\mathrel{\Centerstack{{\apeqA} {\apeqA} {\apeqA}}}}}}
\newcommand{\con}[2]{\ensuremath{{#1}\apeq{#2}}\xspace}
\newcommand{\ssigma}{\ensuremath{\text{{\boldmath $\sigma$}}}\xspace}

\newcommand{\sss}{{\mathrel{\kern.25em{\sqsubseteq}\kern-.5em \mbox{{\scriptsize *}}\kern.25em}}}
\newcommand{\smallsss}{{\mathrel{\kern.25em{\sqsubseteq}\kern-.4em \mbox{{\scriptsize *}}\kern.25em}}}

\newcommand{\T}{\ensuremath{\mathcal T}\xspace}

\newcommand{\I}{\ensuremath{\mathcal{I}}\xspace}

\newcommand{\node}{\ensuremath{\mathop{\mathsf{node}}}}

\newcommand{\ELH}{\ensuremath{{\cal E\!LH}}\xspace}
\newcommand{\EL}{\ensuremath{{\cal E\!L}}\xspace}

\newcommand{\NC}{\ensuremath{{\sf N_C}}\xspace}

\newcommand{\NI}{\ensuremath{{\sf N_I}}\xspace}

\newcommand{\NR}{\ensuremath{{\sf N_R}}\xspace}

\newlength{\indxlength}
\setlength{\indxlength}{4mm}

\newcommand{\PTime}{\textsc{PTime}\xspace}
\newcommand{\NP}{\textsc{NP}\xspace}

\newcommand{\PTimeL}{\textsc{PTimeL}\xspace}
\newcommand{\PQuery}{\textsc{PQueryL}\xspace}

\newcommand{\Amc}{\ensuremath{\mathcal{A}}\xspace}
\newcommand{\Bmc}{\ensuremath{\mathcal{B}}\xspace}

\newcommand{\Dmc}{\ensuremath{\mathcal{D}}\xspace}
\newcommand{\Emc}{\ensuremath{\mathcal{E}}\xspace}

\newcommand{\Hmc}{\ensuremath{\mathcal{H}}\xspace}
\newcommand{\Imc}{\ensuremath{\mathcal{I}}\xspace}
\newcommand{\Jmc}{\ensuremath{\mathcal{J}}\xspace}
\newcommand{\Kmc}{\ensuremath{\mathcal{K}}\xspace}
\newcommand{\Lmc}{\ensuremath{\mathcal{L}}\xspace}

\newcommand{\Qmc}{\ensuremath{\mathcal{Q}}\xspace}
\newcommand{\Rmc}{\ensuremath{\mathcal{R}}\xspace}
\newcommand{\Smc}{\ensuremath{\mathcal{S}}\xspace}
\newcommand{\Tmc}{\ensuremath{\mathcal{T}}\xspace}

\newcommand{\Xmc}{\ensuremath{\mathcal{X}}\xspace}

\newcommand{\Zmc}{\ensuremath{\mathcal{Z}}\xspace}

\newcommand{\Emf}{\ensuremath{\mathfrak{E}}\xspace}
\newcommand{\Fmf}{\ensuremath{\mathfrak{F}}\xspace}

\newcommand{\Lmf}{\ensuremath{\mathfrak{L}}\xspace}

\newcommand{\Vmf}{\ensuremath{\mathfrak{V}}\xspace}

 \newcommand{\cellgray}{\cellcolor{gray!30}} 

\newcommand{\examples}{\ensuremath{\Emc}\xspace} 
\newcommand{\setexamples}{\ensuremath{\Xmc}\xspace} 
\newcommand{\vc}[1]{\ensuremath{{\sf VC}({#1})}\xspace}
\newcommand{\target}{\ensuremath{t}\xspace} 
\newcommand{\hypothesisSpace}{\ensuremath{\Lmc}\xspace} 
\newcommand{\fixedAbox}{\ensuremath{\Amc_\ast}\xspace} 
\newcommand{\ontologyLanguage}{\ensuremath{\Lmf}\xspace} 
\newcommand{\hypothesisOntology}{\ensuremath{\Hmc}\xspace} 
\newcommand{\targetOntology}{\ensuremath{\Tmc}\xspace} 
\newcommand{\queryLanguage}{\ensuremath{\Qmc}\xspace} 
\newcommand{\query}{\ensuremath{q}\xspace} 
\newcommand{\e}{\ensuremath{e}\xspace} 
\newcommand{\A}{\ensuremath{\mathcal{A}}\xspace}
\newcommand{\AQ}{\ensuremath{\textnormal{AQ}}\xspace}
\newcommand{\IQ}{\ensuremath{\textnormal{IQ}}\xspace}
\newcommand{\CQ}{\ensuremath{\textnormal{CQ}}\xspace}
\newcommand{\CQr}{\ensuremath{\textnormal{CQ}\ensuremath{_{{ r}}}}\xspace}
\newcommand{\CQrs}{CQ\ensuremath{_{{ r}}}s\xspace}
\newcommand{\AQs}{AQs\xspace}
\newcommand{\IQs}{IQs\xspace}
\newcommand{\CQs}{CQs\xspace}
\newcommand{\K}{\ensuremath{\mathcal{K}}\xspace}

\newcommand{\EX}{\ensuremath{{\sf EX}^{\prob}_{\Fmf,\target}}\xspace}

\newcommand{\set}{\ensuremath{\Smc}\xspace}

\newcommand{\paths}{\ensuremath{{\sf paths}}\xspace}
\newcommand{\tail}{\ensuremath{{\sf tail}}\xspace}
\newcommand{\refLemma}[1]{Lemma~#1}
\newcommand{\tessential}{\T-essential\xspace}

\newcommand{\treeV}[1]{\ensuremath{\Vmf_{#1}}\xspace}
\newcommand{\treeE}[1]{\ensuremath{\Emf_{#1}}\xspace}
\newcommand{\treel}[1]{\ensuremath{l_{#1}}\xspace}
\newcommand{\treedef}[1]{\ensuremath{T_{#1}=(\treeV{#1},\treeE{#1},\treel{#1})}\xspace}
\newcommand{\tree}[1]{\ensuremath{T_{#1}}\xspace}

\newcommand{\cmark}{\ding{51}\xspace}
\newcommand{\xmark}{\ding{55}\xspace}

\newtheorem{theorem}{Theorem}
\newtheorem{definition}[theorem]{Definition}

\newtheorem{example}[theorem]{Example}
\newtheorem{remark}{Remark}
\newtheorem{lemma}[theorem]{Lemma}
\newtheorem{proposition}[theorem]{Proposition}
\newtheorem{claim}{Claim} 
\newtheorem{claimt}{Claim} 
\newcommand{\lab}[2]{\ensuremath{\ell_{#2}(#1)}\xspace}
\setcounter{secnumdepth}{0} 
 
\newif\iftechrep
\techreptrue

 \begin{document}
\title{Learning Query Inseparable \ELH Ontologies}
\author{Ana Ozaki, Cosimo Persia,  Andrea Mazzullo\\
KRDB Research Centre, Free University of Bozen-Bolzano}

\maketitle

\begin{abstract}
We investigate the complexity of learning query inseparable \ELH ontologies
in a variant of Angluin's exact learning model. 
Given a fixed data instance \fixedAbox 
and a query language \queryLanguage, we are interested 
in computing an ontology \hypothesisOntology that entails the same queries 
as a target ontology \targetOntology on \fixedAbox, that is, 
\hypothesisOntology and \targetOntology are 
inseparable w.r.t. \fixedAbox  and \queryLanguage.
The learner is allowed to pose two kinds of questions. 
The first is `Does $(\targetOntology,\A)\models q$?', with 
\A an arbitrary data instance and $q$ and query in \queryLanguage. 
An oracle replies this question with `yes' or `no'.
In the second, the learner asks `Are \hypothesisOntology and \targetOntology  
inseparable w.r.t. \fixedAbox  and \queryLanguage?'. 
If so, the learning process finishes, otherwise, the learner receives $(\fixedAbox,q)$ 
with  $q\in \queryLanguage$,  
 $(\targetOntology,\fixedAbox)\models q$ and $(\hypothesisOntology,\fixedAbox)\not\models q$ (or vice-versa).
Then, 
we analyse   conditions in which 
query inseparability is preserved if \fixedAbox  changes. 
Finally, 
we consider the PAC learning model and a setting where the algorithms
learn from 
a batch of classified data, 
 limiting  interactions with the oracles.
\end{abstract}

\section{Introduction}

 Ontologies are a formal and popular way of representing knowledge. 
Taxonomies, categorisation of websites, products and their features, as well as 
more complex and specialized domain knowledge, 
can be represented with ontologies. 
Domain experts use ontologies while sharing and annotating information in their fields because in this way 
knowledge can be unambiguously understood and easily distributed. Medicine, for example, has produced large, 
standardised, and scalable ontologies (e.g. Galen and SNOMED CT). 
Broad and general so called knowledge graphs 
are emerging such as DBPedia~\cite{dbpedia09}, Wikidata~\cite{VK:wikidata14}, 
 YAGO \cite{SKW08:yago}. 
An ontology 
enables machines to process relations and definitions and reason about that knowledge.
Sharing information, formalising a domain, and making 
assumptions explicit 
are some of
the main reasons for using an ontology.

Designing ontologies is a hard and error-prone task. 
The research community has approached the problem   by developing editors 
that help ontology engineers to build  ontologies manually~\cite{musen2013protege} and 
defined   design principles \cite{DBLP:series/lncs/5445}. 
Even with tools, building ontologies is a laborious task that also needs  expertise. 
An expert in designing ontologies is  
called an ontology engineer. Such an expert is normally familiar with the tools, 
   languages, and techniques necessary to design an ontology 
through the communication with domain experts. 
The ontology engineer must communicate and understand the knowledge 
given by domain experts and then design an ontology that captures what is relevant in the domain. 

One of the main challenges in the process of building an ontology 
is that it often 
 relies on the communication between the ontology engineer (or multiple ontology engineers)
  and domain experts that, 
in order to share knowledge, use the ambiguous natural language. Indeed, 
it can happen that some errors are made while designing it due to the 
difficulty of sharing knowledge. The ontology engineer can misunderstand 
the domain experts or they can inadvertently omit   precious details. 
Moreover, knowledge  can be implicit and while designing an ontology it is 
not easy to understand 
what are the real relationships between concepts 
and imagine all the possible   consequences of designing concepts and their relationships 
in a particular way. 
There are several aspects which can influence the difficulty of 
creating an ontology.

Following the approach by~\cite{KLOW18,DBLP:conf/aaai/KonevOW16}, 
 we focus on  the problem of finding how concepts should be logically 
related, assuming that the relevant vocabulary is known by the domain experts, which 
can share this information with the ontology engineer. 
In this approach,  the problem of building an ontology  is treated as a 
learning problem in which the ontology engineer plays the role of a learner and
 communicates with the domain experts, who play the role of a teacher (also called an oracle).  
We assume that (1) the domain 
experts  know 
the relevant knowledge about the domain and act 
in a consistent way as a single teacher;
(2)  the 
vocabulary that should be used in the ontology is known by the teacher and the learner; 
(3) the learner can pose queries to the teacher in 
order to acquire missing knowledge or check if it has learned enough in order to stop learning.
The described model can be seen as an instance of Angluin's exact learning model~\cite{angluinqueries} 
with membership and equivalence queries. 
The queries asked by the learner in order to acquire 
knowledge can be considered as membership queries and 
the queries that ask if the hypothesis of the learner correctly represents the relevant knowledge 
of the domain experts can be treated as equivalence queries.

\begin{table}[]
\centering

\begin{tabular}{|l|c|c|c|}
\hline
\small{Framework} & \small{$\EL(\Hmc)_{\sf lhs}$} & \small{$\EL(\Hmc)_{\sf rhs}$} & \small{$\EL(\Hmc)$}    \\ 
\hline 
 \parbox[t]{2mm}{\multirow{3}{*}{\rotatebox[origin=c]{90}{Equiv. }}}  
 $\text{ }$\small{\ AQ} & \cmark  & --  & --   \\ 
$\quad$\small{\ IQ}   & \cmark  & \cmark  & \xmark   \\ 
$\quad$\small{\ \CQr }   & \cellgray \cmark  & \cellgray \cmark  & ?   \\ 
$\quad$\small{\ CQ} & \cmark  & \xmark   & \xmark   \\ 
\hline 
 \parbox[t]{2mm}{\multirow{3}{*}{\rotatebox[origin=c]{90}{Insep. }}}  
 $\text{ }$\small{\ AQ} & \cellgray \cmark  & \cellgray \cmark  & \cellgray \cmark   \\ 
$\quad$\small{\ IQ}   & \cellgray\cmark  & \cellgray\cmark  &\cellgray \cmark   \\ 
$\quad$\small{\ \CQr}   & \cellgray\cmark  &\cellgray \cmark  &\cellgray \cmark   \\ 
$\quad$\small{\ CQ} &\cellgray \cmark  &\cellgray \xmark   &\cellgray \xmark    \\ 
\hline
 \parbox[t]{2mm}{\multirow{3}{*}{\rotatebox[origin=c]{90}{PAC   }}}  
 $\text{ }$\small{\ AQ} & \cmark  & \cmark  & \cmark   \\ 
$\quad$\small{\ IQ}   & \cmark  &\cmark  &\cmark  \\ 
$\quad$\small{\ \CQr}   & \cmark  &\cmark  &\cmark   \\ 
$\quad$\small{\ CQ} & \cmark  &\cmark   &\cmark   \\ 
 \hline 
\end{tabular} 
\caption{\small{Polynomial query learnability of learning frameworks.}}\label{tab:pos-neg}
\end{table}

In the exact learning literature, one of the main goals is to determine the complexity of  
learning an abstract target. 
The complexity depends on 
the number and size of queries 
posed by the learner. 
In the work by~\cite{KLOW18,DBLP:conf/aaai/KonevOW16} 
it has been shown 
that exactly learnability of ontologies formulated in the very popular \ELH~\cite{dlhandbook}  ontology language 
is not possible with polynomially many polynomial size queries.  
Here, we investigate a more flexible setting, where
the ontology does not need to be logically equivalent 
but only inseparable w.r.t. a query language~\cite{DBLP:journals/jsc/LutzW10} 
 and a fixed 
data instance\footnote{ 
Here 
the term `query' refers to queries in the context of databases 
and query answering. Our data instances are ABoxes.}.
Query inseparability is the basic requirement 
for ontology mediated query answering (OMQA)~\cite{DBLP:conf/ijcai/Bienvenu16}. 
In OMQA, relevant tasks such as ontology versioning, modularisation, update, and forgetting, 
depend on comparisons between ontologies based on answers given to queries~\cite{DBLP:journals/ai/BotoevaLRWZ19}.
We study polynomial query and time learnability of the \ELH ontology language 
in the OMQA setting. 
The query languages considered 
are atomic queries (AQ), instance queries (IQ), conjunctive queries (CQ) and a fragment of CQs 
called rooted CQs (denoted \CQr).
In particular, we show that in our setting the picture is brighter and 
\ELH can be polynomially learned from IQs, however, it is still 
not possible to learn this language from CQs. 

Table \ref{tab:pos-neg} shows the different results obtained by previous 
works~\cite{KLOW18,DBLP:conf/aaai/KonevOW16}  for logical equivalence and our results (shaded in gray) for query inseparability, 
taking into account the ontology  (upper side) and the query languages (left side).  \cmark  
means a positive result, i.e. polynomial query learnability; 
 $ - $ means that the query language is not expressive enough for exchanging 
 information and \xmark means that polynomial query learnability cannot be achieved.
Polynomial time 
learnability implies polynomial query learnability but the converse does not hold. 
All positive results for \AQs, \IQs, and \CQrs also hold for polynomial time learnability.
Since the learned ontology is not equivalent to the target, it may be the case 
that after the data is updated the learned ontology and the target are no longer 
query inseparable. 
We thus investigate conditions under which query inseparability is preserved when the 
data changes, which means that no further learning steps are needed after the change. 
In many applicaton scenarios, interactions with teachers may 
not be a viable option.
We also  consider learnability when 
the learner has only access to 
a batch of classified examples. Finally, we adapt  the Probably Approximately Correct (PAC) model~\cite{Valiant} to our OMQA setting, which we separate 
from the exact and query inseparable problem settings (Theorem~\ref{thm:pacnotimplyexact}).
Our polynomial time results for query inseparability are transferable to the PAC model extended 
with membership queries (Theorem~\ref{thm:paclearn}).  
Omitted proofs are available 
\iftechrep
in the appendix.
\else
at \url{https://arxiv.org/abs/1911.07229}.
\fi

\mypar{Related Work}
To cover the vast literature on ontology learning,
we point to the collection edited by~\cite{lehmann2014perspectives} and
surveys authored by~\cite{DBLP:reference/nlp/CimianoVB10} 
and~\cite{Wong:2012:OLT:2333112.2333115}. 
The closest works are the already mentioned papers on exact learning 
of lightweight
description logics (DLs)~\cite{KLOW18,DBLP:conf/kr/DuarteKO18,DBLP:conf/aaai/KonevOW16}.
Exact learning of concepts formulated in the DL CLASSIC has been 
investigated by~\cite{Cohen94learningthe} and~\cite{DBLP:journals/ml/FrazierP96}.
Some other works 
which are more closely related include 
 works on learning \EL concepts~\cite{DBLP:conf/ijcai/FunkJLPW19,lehmann2009ideal}. 
Formal Concept Analysis has   been applied 
for learning
DL
ontologies~\cite{DBLP:conf/iccs/Rudolph04,DBLP:conf/ijcai/BaaderGSS07,BorDi11,borchmann2014learning,ganter2016conceptual}.
Learnability of \EL ontologies from finite interpretations has also been investigated~\cite{klarmanontology}. 
Association rule mining has   been used to learn DL ontologies (with concept 
expressions of limited depth)~\cite{DBLP:conf/semweb/SazonauS17,volker2011statistical,DBLP:conf/otm/FleischhackerVS12,DBLP:journals/ws/VolkerFS15}.

\section{Basic Definitions}
\mypar{Ontologies and Queries}
The \ELH syntax is defined upon   mutually disjoint countably infinite sets of 
concept names \NC, denoted with $ A,B $, role names \NR, denoted with $ r,s $, 
and individual names \NI, denoted with $ a,b $. \emph{\EL-concept expressions} $ C $ are defined 
inductively according to the  rule 
$C ::= A \mid \top \mid C \sqcap C \mid  \exists r.C$,
where $A\in\NC$ and $r\in \NR$. 
For simplicity, we omit \EL- from  \EL-concept expressions. 
An \ELH ontology, also called \emph{TBox}, is a finite set of \emph{concept inclusions} (CI) $ C\sqsubseteq D $, where $ C,D $ are concept expressions, 
 and   \emph{role inclusions} (RI) $ r\sqsubseteq s $, where $r,s\in\NR$.
We call an \ELH TBox $\Tmc$ a \emph{terminology} if for all $C \sqsubseteq
D\in\Tmc$ either $C$ or $D$ is a concept name\footnote{
In the literature, the term \emph{terminology} commonly refers to sets of
concept inclusions $A\sqsubseteq C$ and 
concept definitions $A\equiv C$, with no concept name occurring 
more than once on the left. As $A\equiv C$ can be 
equivalently rewritten as $A\sqsubseteq C$ and 
$C\sqsubseteq A$, our definition is a natural extension of this one.} and \Tmc has at most one\footnote{
If a terminology contains $A\sqsubseteq C$ and $A\sqsubseteq D$ one can always 
rewrite it into  $A\sqsubseteq C\sqcap D$. 
}
inclusion of the form $A\sqsubseteq C$ for every $A\in\NC$.
From now on we assume all \ELH TBoxes we deal with are terminologies. 
An \emph{ABox} \A is a finite set of expressions of the form $ A(a) $ or $ r(a,b) $, called \emph{assertions}, with $ A\in\NC $, $ r\in\NR $ and $ a,b\in\NI$.
We denote by $\individuals{\A}$ the set of individual names occurring in an ABox \A.
The \emph{signature} $\Sigma_\Tmc$ of a TBox \Tmc is the set of concept and role names 
 occurring in it, and similarly for the signature $\Sigma_\Amc$ of an ABox $\Amc$.
A \emph{knowledge base} (KB) is a pair (\T,\A) where \T is a TBox and \A is an ABox.
We investigate classical query languages considered in the OMQA literature.
An AQ 
takes the form of an assertion. 
An 
IQ is of the form $C(a)$ or $r(a,b)$, where
$C$ is a concept expression, $r\in \NR$ and $a,b\in \NI$. 
A CQ 
is a first-order sentence $\exists \vec{x}\varphi(\vec{a},\vec{x})$,
where $\varphi$ is a conjunction of atoms of the form $r(t_{1},t_{2})$ or $A(t)$, where $t_{1},t_{2},t$
(called \emph{terms}) 
can be individual names from $\vec{a}$ or individual variables from $\vec{x}$. 
With an abuse of notation, we denote by $\AQ$, $\IQ$ and $\CQ$ the sets of atomic, instance 
and conjunctive queries $q$, respectively, and we call a \emph{query language} a set $\Qmc \in \{ \AQ, \IQ, \CQ \}$.
The \emph{size} of a concept expression $C$ 
(TBox $\Tmc$, ABox $\Amc$, query $q$), denoted by $|C|$ (and, respectively,
$|\Tmc|$, $|\Amc|$, $|q|$) is the length of the string that represents it,  
where concept, role, and individual names are considered to be of length one. 

The semantics of $\ELH$ is given as follows.
An interpretation is a pair $ \I=(\Delta^\I,\cdot^\I) $, where 
$ \Delta^\I $ is a non-empty set, called \emph{domain}, and $ \cdot^\I $ is a function that 
maps every   $ a\in\NI $ to
$ a^\I\in\Delta^\I $, every $ A\in\NC $ to 
$A^\I \subseteq \Delta^\I $ and every $ r\in\NR $ to
$r^\I \subseteq \Delta^\I\times\Delta^\I $.
The function $  \cdot^\I $ extends to
other
concept expressions as follows:
$\top^\I = \Delta^\I$,
$(C \sqcap D)^\I := C^\I \cap D^\I$ 
and
$(\exists r.C)^\I := \{d \in \Delta^\I \mid$
$
\text{there is }
e \in C^\I
\colon
(d,e) \in r^\I
\}$.
An interpretation $ \I $ \emph{satisfies}
a CI $ C\sqsubseteq D $ 
if $ C^\I \subseteq D^\I $, and an RI $ r\sqsubseteq s $ if $ r^\I \subseteq s^\I  $. 
It satisfies an assertion $ A(a) $ if $ a^\I\in A^\I $, 
and an assertion $ r(a,b) $ if $ (a^\I,b^\I)\in r^\I $.
$ \I $ satisfies an \AQ if it satisfies the corresponding 
assertion,
and it satisfies an \IQ $C(a)$, or $r(a,b)$, if $a^\I\in C^\I$, or $(a^\I,b^\I)\in r^\I$.
\I satisfies a \CQ $q$ (or there is a homomorphism from $q$ to \I) if there is 
a function $\pi$,
mapping terms of $q$ to elements of $\Delta^\I$, 
such that:
$\pi(t)=t^\I$, if $t\in\NI$;
$\pi(t) \in A^\I$, for every $A(t)$ of $q$;
and $(\pi(t_1),\pi(t_2))\in r^\I$, for every $r(t_1,t_2)$ of $q$.
We write $\I \models \alpha$ to state that $\I$ satisfies a CI, RI, assertion, or query $\alpha$.
$ \I $ satisfies a TBox \T,
if it is a model of every CI and RI in $\T$,
and it
satisfies an ABox \A if it satisfies every assertion in $\A$.
$ \I $
satisfies
a KB $ \Kmc =  (\T,\A) $, written $\I \models \Kmc$, if it satisfies both \T and \A.
A KB \Kmc \emph{entails} a CI, an RI, an assertion, or query $\alpha$, written $\Kmc\models \alpha$,
if, for all interpretations \I, $\Imc\models\Kmc$ implies $\I\models \alpha$.
A KB \K entails a KB $\K'$, written $\K\models\K'$, 
 if $\I\models\K$ implies $\I\models\K'$;
$\K$ and $\K'$ are 
 \emph{equivalent}, in symbols $\K\equiv\K'$, if $\K\models\K'$ and $\K'\models\K$.
We may also speak of entailments and equivalences of TBoxes and ABoxes, defined as usual~\cite{dlhandbook}.
For $\queryLanguage\in \{\text{AQ, IQ, CQ}\}$, the KBs $\K$ and $\K'$ are
\emph{\queryLanguage-inseparable}, in symbols $\K\equiv_\queryLanguage \K'$, if for every query $q\in\queryLanguage$,
we have that 
$\K\models q$ iff $\K'\models q$.

\mypar{Tree Representation and Homomorphisms}
We will also represent a concept expression $C$ as a finite directed tree $\treedef{C}$, where $\treeV{C}$ is the set of all vertices, with the root denoted by $\rho_C$, $\treeE{C}$ is the set of all edges, and $\treel{C}$ is a labelling function that maps every node to a set of concept names and every edge to a role name.
This \emph{tree representation} of $C$ uniquely represents the corresponding concept expression, and it is inductively defined  as follows:
	\begin{itemize}
		\item 
		for $C\!=\!\top$, 
		$\treeV{C}=\{\rho_C\}$ and
		$\treel{C}(\rho_C)=\emptyset$;
		\item 
		for $C\! =\! A$, where $A\in\NC$,  
		$\treeV{C}=\{\rho_C\}$ and $\treel{C}(\rho_C)= A$;
		\item 
		for $C\!=\!\exists r . D$,
		$\tree{C}$ is obtained from $\tree{D}$ by adding a new root $\rho_C$ and an
		edge from $\rho_C$ to the root $\rho_D$ of $\tree{D}$ with label $\treel{C}(\rho_C,\rho_D)=r$ (we
		call $\rho_D$ an \emph{$r$-successor} of $\rho_C$);
		\item
		for $C\!=\!D_1 \sqcap
		D_2$,   $\tree{C}$  is obtained by identifying the roots of $\tree{D_1}$  and
		$\tree{D_2}$, with $\treel{C}(\rho_C) = \treel{D_1}(\rho_{D_{1}})  \cup \treel{D_2}(\rho_{D_{2}}) $.
	\end{itemize}
The \emph{ABox representation of $C$}, $\Amc_C$, is the ABox encoding the tree representation $\tree{C}$ of $C$, defined as follows.
For each $v \in \treeV{C}$, we associate  $a_v \in \NI$.
Then, for every $u \in \treeV{C}$ and every $(u, v) \in \treeE{C}$, we put:
$A(a_u) \in \Amc_C$ iff $\treel{C}(u) = A$;
$r(a_u, a_v) \in \Amc_C$ iff $\treel{C}(u, v) = r$.
Given $\A, \A'$ be ABoxes,
a function $\homo \colon \indiv(\A) \to \indiv(\A')$ is called an \emph{ABox homomorphism}
from $\A$ to $\A'$
if:
for every $C(a) \in \A$, $C(\homo(a)) \in \A'$;
and, for every $r(a,b) \in \A$, $r(\homo(a), \homo(b)) \in \A'$.

\mypar{Learning Model}
We provide basic notions related to the exact learning model, extending the 
notation  in~\cite{KLOW18}.
A \emph{learning framework} $\Fmf$ is a quadruple $(\examples,\set, \hypothesisSpace, \mu)$, where $\examples$ 
is a set of \emph{examples}, \set is a subset of \examples, 
$\hypothesisSpace$ is a set of \emph{ concept representations} (also called \emph{hypothesis space}), and $\mu$ is a mapping from $\hypothesisSpace$
to $2^\examples$.   
We  omit \set if $\set=\examples $. 
Each element $l$ of $\hypothesisSpace$ is assumed to be represented using 
a set of symbols $\Sigma_l$ 
(if $l$ is a TBox $\Tmc$, $\Sigma_\Tmc$ is the signature of \Tmc).
We say that $\e\in \examples$ is a \emph{positive example} for $l \in \hypothesisSpace$ if $\e\in \mu(l)$ and a
\emph{negative example} for $l$ if $\e\not\in \mu(l)$.

Given a learning framework $\Fmf = (\examples, \set, \hypothesisSpace, \mu)$, we are interested in the
exact identification of a \emph{target}   concept representation $\target\in\hypothesisSpace$
w.r.t. the subset \set of examples,
by posing queries to oracles.
Let  ${\sf MQ}_{\Fmf,\target}$ be the oracle that takes as input some $\e \in \examples$ and
returns `yes' if $\e \in \mu(\target)$ and `no' otherwise. 
A \emph{membership query} is a call to the oracle ${\sf MQ}_{\Fmf,\target}$.
For every $\target \in \hypothesisSpace$, we denote by ${\sf EQ}_{\Fmf,\target}$ the oracle
that takes as input a \emph{hypothesis} concept representation $h \in \hypothesisSpace$
and returns `yes' if $\mu(h) \cap \set= \mu(\target)\cap \set$ and a \emph{counterexample} 
$\e \in (\mu(h) \oplus \mu(\target))\cap \set$ otherwise, where $\oplus$ denotes the symmetric
set difference.
There is no assumption regarding which counterexample in $(\mu(h) \oplus \mu(\target))\cap \set$
is chosen by the oracle.  
An \emph{equivalence query with respect to $\set$} is a call to the oracle ${\sf EQ}_{\Fmf,\target}$ (if $\set = \examples$, we omit `with respect to $\set$').

An 
\emph{(exact) learning algorithm} for $\Fmf=(\examples, \set,\hypothesisSpace, \mu)$ is a deterministic algorithm that, for a fixed but arbitrary $\target\in\hypothesisSpace$, 
takes $\Sigma_t$ as input, is allowed to make queries to ${\sf MQ}_{\Fmf,\target}$ and
${\sf EQ}_{\Fmf,\target}$ (without knowing what the target $\target$ to be learned
is), and that eventually halts and outputs some $h\in\hypothesisSpace$ with
$\mu(h) \cap \set= \mu(\target)\cap \set$.
We say that the learning algorithm is \emph{positive bounded} if, in addition,  $\mu(h)\subseteq \mu(t)$. 
We say that \Fmf is \emph{exact learnable} if there is a learning
algorithm for \Fmf and that \Fmf is \emph{polynomial query learnable}
if it is exact learnable by an algorithm $A$ such that at every step
the sum of the sizes of the inputs to membership and equivalence
queries made by $A$ up to that step is bounded by a polynomial
$p(|\target|,|\e|)$, where $\target$ is the target and $\e \in \set$ is the largest
counterexample seen so far \cite{arias2004exact}.  Similarly, \Fmf is
\emph{polynomial time learnable} if it is exact learnable by an
algorithm $A$ such that at every step (we count each call to an oracle
as one step of computation)
of computation the time used by $A$ up to
that step is bounded by a polynomial $p(|\target|,|\e|)$, where $\target\in \hypothesisSpace$
is the target and $\e \in \set$ is the largest counterexample seen so far.
We denote by \PQuery and \PTimeL 
the class of learning frameworks
which are, respectively, 
 polynomial query and polynomial time 
 learnable.
Clearly,  $\PTimeL\subseteq \PQuery$.

We now introduce the special case of learning frameworks which we focus in this work, called \emph{OMQA learning frameworks}.
Let \ontologyLanguage, \fixedAbox, and \queryLanguage be, respectively,
an ontology language, a fixed but arbitrary ABox, and a query language.
An OMQA learning framework $\Fmf(\ontologyLanguage,\fixedAbox,\queryLanguage)$ is a 
learning framework $(\examples,\set, \hypothesisSpace, \mu)$ where 
\examples is the set of all pairs $(\Amc,\query)$
with \Amc an ABox (may be different from \fixedAbox) and 
$\query \in\queryLanguage$; \hypothesisSpace
is the set of all TBoxes formulated in   \ontologyLanguage sharing 
a common finite signature (we write $\Sigma_\Tmc$ to refer 
to the signature of the target, which is assumed to be 
same as the one used for the hypothesis);
$\set$ is the set of elements $(\Amc,q)$ of \examples where 
$\Amc=\fixedAbox$;
and, for all $\targetOntology\in\hypothesisSpace$, $\mu(\targetOntology)=\{(\Amc,q)\in\examples\mid 
(\targetOntology,\Amc)\models q\}$.
We
assume that the signature of $q$, i.e., the set of concept and role names occurring in $q$, is $\Sigma_{\Tmc} \cup \Sigma_{\Amc}$.
Moreover, we define
the \emph{size} of an example $(\A, q)$, denoted by $|(\A, q)|$, as the sum of the size of $\A$ and $q$.
Given an OMQA learning framework 
$\Fmf(\ontologyLanguage,\fixedAbox,\queryLanguage)=(\examples, \set,\hypothesisSpace, \mu)$, 
for all $\hypothesisOntology,\targetOntology\in \hypothesisSpace$,
 if $\mu(\hypothesisOntology)\cap \set=\mu(\targetOntology)\cap \set$,  then  
 $\hypothesisOntology,\targetOntology$ are \queryLanguage-inseparable w.r.t. $\fixedAbox$. 
Since an equivalence query in an OMQA learning framework with $\set \neq \examples$ is in fact asking 
whether a given TBox is \emph{query inseparable} from the target TBox w.r.t. 
a fixed ABox and a query language, 
we may call it an \emph{inseparability query}. 
 
\section{Polynomial Learnability}\label{sec:learning-algorithms}
We investigate whether, for a given  fixed ABox and 
 query language, the problem of learning query inseparable  \ELH TBoxes is polynomial.
We first discuss the relationship between our OMQA setting and the data retrieval
one~\cite{DBLP:conf/aaai/KonevOW16}.
The  difference between the two settings 
is that here the oracle can only choose counterexamples of the form  
$(\fixedAbox,q)$, with the fixed
ABox
\fixedAbox given as input, 
whereas in the mentioned work the ABox in a counterexample can be  arbitrary.
In both settings the learner can pose membership queries with an arbitrary 
ABox in the examples.
 
Formally, given an ontology language \ontologyLanguage and 
a query language \queryLanguage, we denote by $\Fmf(\ontologyLanguage,\queryLanguage)$
the learning framework $(\examples, \set,\hypothesisSpace, \mu)$, 
where \examples is the set of all pairs $(\Amc,\query)$
with \Amc an ABox  and 
$\query \in\queryLanguage$; \hypothesisSpace
is the set of all TBoxes formulated in   \ontologyLanguage; 
$\examples=\set$; and, for all $\targetOntology\in\hypothesisSpace$, $\mu(\targetOntology)=\{(\Amc,q)\in\examples\mid 
(\targetOntology,\Amc)\models q\}$. 
We denote by $\ELH_{\sf lhs}$ and $\ELH_{\sf rhs}$ the fragments of 
\ELH which only allow complex concept expressions on the left-hand side and 
on the right-hand side of CIs, respectively. 
It is known that the learning frameworks 
$\Fmf(\ELH_{\sf lhs},\AQ)$ and $\Fmf(\ELH_{\sf rhs},\IQ)$ 
are in \PTimeL, whereas 
$\Fmf(\ELH,\IQ)$ and $\Fmf(\ELH_{\sf rhs},\CQ)$ are not in \PQuery~\cite{DBLP:conf/aaai/KonevOW16}.
It follows from our definitions that, for any ontology language \ontologyLanguage, query language \queryLanguage 
and ABox \fixedAbox, 
polynomial learnability of $\Fmf(\ontologyLanguage,\queryLanguage)$ implies 
polynomial learnability of $\Fmf(\ontologyLanguage,\fixedAbox,\queryLanguage)$. Thus, 
the positive results for $\Fmf(\ELH_{\sf lhs},\AQ)$ and $\Fmf(\ELH_{\sf rhs},\IQ)$  
hold in the OMQA setting. That is, 
for any ABox $\fixedAbox$, 
the learning frameworks $\Fmf(\ELH_{\sf lhs},\fixedAbox,\AQ)$ and $\Fmf(\ELH_{\sf rhs},\fixedAbox,\IQ)$   
are in \PTimeL. 
\begin{proposition}\label{thm:relationshipbetweenmodels}
For any ontology language \ontologyLanguage, query language \queryLanguage 
and ABox \fixedAbox, 
if $\Fmf(\ontologyLanguage,\queryLanguage)$ is in \PTimeL/\PQuery, then 
$\Fmf(\ontologyLanguage,\fixedAbox,\queryLanguage)$ is in \PTimeL/\PQuery.
\end{proposition}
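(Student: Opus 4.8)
Write $\Fmf=\Fmf(\ontologyLanguage,\queryLanguage)$ and $\Fmf'=\Fmf(\ontologyLanguage,\fixedAbox,\queryLanguage)$. The plan is to turn any algorithm $A$ witnessing $\Fmf\in\PTimeL$ (resp.\ $\in\PQuery$) into one witnessing the same for $\Fmf'$, by simulating $A$ and translating its oracle calls. The two frameworks have the \emph{same} example set \examples and the \emph{same} map $\mu$; $\Fmf'$ differs only in that equivalence queries are restricted to the subset $\set=\{(\fixedAbox,q)\mid q\in\queryLanguage\}\subseteq\examples$ (and in fixing a finite hypothesis signature, which we take to be the target signature that $A$ is given as input, so that the target $\targetOntology$ of $\Fmf'$ is also a legal target for $\Fmf$).

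First I would run $A$ and answer its oracle calls as follows. A membership query of $A$ is forwarded verbatim to ${\sf MQ}_{\Fmf',\targetOntology}$; since \examples and $\mu$ coincide, this is literally the oracle ${\sf MQ}_{\Fmf,\targetOntology}$, so the answer is correct for $A$. When $A$ issues an equivalence query with hypothesis \hypothesisOntology, I instead call the inseparability query ${\sf EQ}_{\Fmf',\targetOntology}(\hypothesisOntology)$. If it answers `yes', then $\mu(\hypothesisOntology)\cap\set=\mu(\targetOntology)\cap\set$, i.e.\ \hypothesisOntology is already a correct output for $\Fmf'$: the simulation halts and returns \hypothesisOntology. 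If it returns a counterexample $(\fixedAbox,q)\in(\mu(\hypothesisOntology)\oplus\mu(\targetOntology))\cap\set$, then, because $\set\subseteq\examples$ and $\mu$ is common, $(\fixedAbox,q)$ is also an admissible counterexample for $A$'s equivalence query in $\Fmf$, so I pass it to $A$ and continue. Finally, should $A$ halt on its own with some $h$, then $\mu(h)=\mu(\targetOntology)$ gives $\mu(h)\cap\set=\mu(\targetOntology)\cap\set$, so the simulation returns $h$. Correctness of the output is immediate from these cases.

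For termination and the resource bounds, the key point is that throughout the simulation $A$ sees only answers that are consistent with a genuine pair of $\Fmf$-oracles for the target $\targetOntology$: membership answers are truthful, and every returned counterexample lies in $\mu(\hypothesisOntology)\oplus\mu(\targetOntology)$ and in \examples. Hence an infinite run of the simulation would be a non-terminating run of $A$ against a bona fide $\Fmf$-oracle for $\targetOntology$ -- namely the one that replays exactly this transcript, which is a legal $\Fmf$-oracle because whenever $\mu(\hypothesisOntology)=\mu(\targetOntology)$ the inseparability oracle already answers `yes', so the transcript never returns a counterexample to a hypothesis equivalent to $\targetOntology$. As $A$ is a learning algorithm for $\Fmf$ this is impossible, so the simulation halts after finitely many steps. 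Moreover every counterexample fed back lies in \set, so the ``largest counterexample seen so far'' is literally the same object \e when this run is viewed inside $\Fmf$ or inside $\Fmf'$; therefore the polynomial bound $p(|\targetOntology|,|\e|)$ for $A$ carries over -- up to a constant factor, since each hypothesis is additionally submitted once to the inseparability oracle and each counterexample is copied once -- to the simulating algorithm, which is then a witness that $\Fmf'$ is in \PTimeL (resp.\ \PQuery).

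The obstacle I expect is precisely this last argument: the simulation \emph{short-circuits}, stopping as soon as \hypothesisOntology and $\targetOntology$ become \queryLanguage-inseparable w.r.t.\ \fixedAbox, which may happen strictly before \hypothesisOntology is logically equivalent to $\targetOntology$. One has to verify that this short-circuit can only truncate a genuine run of $A$ (never force it to run longer) and that the example sizing the polynomial bound already lies in \set -- both of which come down to the single fact that every counterexample returned by the inseparability oracle of $\Fmf'$ is also admissible for the equivalence oracle of $\Fmf$.
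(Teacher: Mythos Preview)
Your argument is correct and is exactly the natural simulation one would expect: forward membership queries unchanged, replace each equivalence query by an inseparability query, and use $\set\subseteq\examples$ to feed any returned $(\fixedAbox,q)$ back to $A$ as a legitimate counterexample, short-circuiting as soon as the inseparability oracle says `yes'. The paper does not spell out a proof of this proposition at all; it simply states that the implication ``follows from our definitions'' and moves on, so your sketch is in fact more detailed than what the paper provides. The only point worth tightening is the hypothesis-space mismatch you flag parenthetically (the OMQA framework fixes a common finite signature while the data-retrieval framework does not), but your resolution---taking the signature to be $\Sigma_\targetOntology$, which $A$ receives as input anyway---is the intended one.
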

In the following, we extend the positive results for $\ELH_{\sf lhs}$ and $\ELH_{\sf rhs}$
 to the class of 
\ELH terminologies (the union of $\ELH_{\sf lhs}$ and $\ELH_{\sf rhs}$) 
 in the OMQA setting based on \IQs.
This is in contrast with the negative result for the class of \ELH terminologies in the data retrieval setting 
with \IQs, 
thus, showing that the converse direction of Proposition~\ref{thm:relationshipbetweenmodels} 
does not hold. 
Throughout this section $\fixedAbox$ is a fixed but arbitrary ABox. 
We also analyse the learning framework $\Fmf(\ELH_{\sf rhs},\fixedAbox,\CQ)$, which 
is also not in \PQuery in the OMQA setting.

\mypar{Learning \ELH ontologies with \AQ}
We argue that the learning framework $\Fmf(\ELH,\fixedAbox,\AQ)$ is  in \PTimeL.
There are only polynomially many counterexamples that the oracle can give 
since they can only be of the form $(\fixedAbox,q)$ with 
$q$ an atomic query (using symbols from $\Sigma_\targetOntology$).
The set of RIs entailed by \targetOntology can 
be learned in polynomial time 
by adding to the hypothesis \hypothesisOntology 
all RIs $r\sqsubseteq s$ such that 
the membership oracle 
 replies `yes' given an example $(\{r(a,b)\},s(a,b))$
 as input. 
Therefore we only need to show that one can compute 
concept inclusions that together with  \fixedAbox entail precisely 
the same atomic queries as the target ontology \targetOntology 
on  \fixedAbox. 
The following lemma establishes that indeed 
one can compute such concept inclusions in polynomial time.
		
\begin{restatable}{lemma}{learningAQlemma}\label{lem:polyexample}
Let  \targetOntology and \hypothesisOntology be resp. the target and the hypothesis 
(of size polynomial in $|\targetOntology|$)
terminologies. 
Given a positive counterexample $(\fixedAbox,A(a))$, 
one can compute in polynomial time in $|\fixedAbox|$ and $|\targetOntology|$
a CI $C\sqsubseteq B$ such that 
  $(\targetOntology,\fixedAbox)\models B(b)$, $(\hypothesisOntology,\fixedAbox)\not\models B(b)$, and
$(\hypothesisOntology\cup\{C\sqsubseteq B\},\fixedAbox)\models B(b)$, for some 
$b\in\individuals{\fixedAbox}$. Moreover, $\T\models C\sqsubseteq B$.
\end{restatable}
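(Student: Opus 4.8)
The plan is to read the required inclusion off the chase of $(\targetOntology,\fixedAbox)$. I would first perform the cheap preprocessing that makes \hypothesisOntology entail exactly the role inclusions of \targetOntology over $\Sigma_{\targetOntology}$ (obtained by asking, for each pair of role names $r,s$, whether $(\targetOntology,\{r(a,b)\})\models s(a,b)$), and observe that the atomic queries over $\Sigma_{\targetOntology}$ entailed by $(\hypothesisOntology,\fixedAbox)$ are computable in polynomial time (\ELH instance checking on a terminology). Now build the chase of $(\targetOntology,\fixedAbox)$ step by step and let $n$ be the first step producing an assertion $B(b)$ with $b\in\individuals{\fixedAbox}$ and $(\hypothesisOntology,\fixedAbox)\not\models B(b)$; such an $n$ exists since the chase eventually produces $A(a)$ and $(\hypothesisOntology,\fixedAbox)\not\models A(a)$. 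Write $\Imc$ for the chase snapshot just before step $n$. By minimality of $n$, every concept assertion on a named individual in $\Imc$ is entailed by $(\hypothesisOntology,\fixedAbox)$; and since the two ontologies have the same role inclusions, so is every role assertion between named individuals of $\Imc$.

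At step $n$ the assertion $B(b)$ is produced by a concept inclusion of \targetOntology; let $C'$ be its left-hand side. As \targetOntology is a terminology, either $B$ is the whole right-hand side, or the right-hand side is complex with $B$ as a top conjunct and then $C'$ is a concept name; in both cases $\targetOntology\models C'\sqsubseteq B$ and there is a homomorphism $h$ from $\tree{C'}$ to $\Imc$ with $h(\rho_{C'})=b$. In the second case $C'$ is a concept name and $(\hypothesisOntology,\fixedAbox)\models C'(b)$ by minimality, so I would output $C'\sqsubseteq B$. In the first case I would \emph{fold} $h$: since an edge of the chase into a named individual can only start from a named individual, the nodes of $\tree{C'}$ that $h$ maps to named individuals are ancestor-closed, hence form a subtree containing the root, below which hang maximal ``anonymous'' subtrees; and since \targetOntology is a terminology, every anonymous individual of the chase is spawned by an inclusion with a concept name on the left, so a maximal anonymous subtree attached at a node $v$ is spawned at the named individual $h(v)$ by some $A''\sqsubseteq D\in\targetOntology$ with $A''(h(v))\in\Imc$, hence $(\hypothesisOntology,\fixedAbox)\models A''(h(v))$. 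Let $C$ be obtained from $\tree{C'}$ by deleting every maximal anonymous subtree and adding to the labels of their parents the concept names that spawned them; then $|C|\le 2|C'|\le 2|\targetOntology|$.

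To check the claim for this $C$: $(\hypothesisOntology,\fixedAbox)\models C(b)$ holds because the restriction of $h$ maps $\tree{C}$ into the named part of $\Imc$ with the root at $b$, and every concept name (inherited or added) and every role edge of $\tree{C}$ is, by the previous paragraph, entailed at the corresponding individual by $(\hypothesisOntology,\fixedAbox)$, so the canonical model of $(\hypothesisOntology,\fixedAbox)$ admits a homomorphism from $\tree{C}$ mapping the root to $b$. Next, $\targetOntology\models C\sqsubseteq B$: running the \targetOntology-chase on $\Amc_C$, each added name $A''$ makes $A''\sqsubseteq D$ re-fire, and since the \ELH chase rules never inspect predecessors this re-creates below the corresponding node a homomorphic copy of the anonymous cone that $h$ had used; hence $a_{\rho_C}$ satisfies $C'$ in that chase, so $(\targetOntology,\Amc_C)\models B(a_{\rho_C})$, i.e.\ $\targetOntology\models C\sqsubseteq B$. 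Finally $(\targetOntology,\fixedAbox)\models B(b)$ since $B(b)$ is produced by the chase, $(\hypothesisOntology,\fixedAbox)\not\models B(b)$ by the choice of $n$, and $(\hypothesisOntology\cup\{C\sqsubseteq B\},\fixedAbox)\models B(b)$ follows from $(\hypothesisOntology,\fixedAbox)\models C(b)$.

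The delicate part — and the one I expect to be the main obstacle — is that the learner cannot inspect the \targetOntology-chase, so $b$, $B$ and $C$ must be recovered through membership queries. Finding a candidate pair is easy: iterate over $b\in\individuals{\fixedAbox}$ and $B\in\Sigma_{\targetOntology}$ with $(\hypothesisOntology,\fixedAbox)\not\models B(b)$ and keep one with $(\targetOntology,\fixedAbox)\models B(b)$. The work is to extract, for such a pair, a tree-shaped concept of size polynomial in $|\targetOntology|$ that is realised at $b$ in the canonical model of $(\hypothesisOntology,\fixedAbox)$ and of which $B$ is a \targetOntology-consequence — the natural route being to minimise and then ``contract'' the $\hypothesisOntology$-saturated neighbourhood of $b$ with the help of the oracle, the point to get right being that this uses only polynomially many queries and avoids the exponential blow-up of a naive unravelling of the canonical model. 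This is exactly what the a priori bound $|C|\le 2|\targetOntology|$, and the shallow, sparse shape of $C$ (a pruned left-hand side of a target inclusion with a few extra concept names), are there to guarantee; the structural fold argument above is comparatively routine.
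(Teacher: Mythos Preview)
Your fold argument is essentially correct as an \emph{existence} proof, and it captures the same structural insight the paper uses to bound the size of a minimal ABox (their $\prec^\ast$ relation is exactly your ``replace an anonymous subtree by the concept name that spawned it''). But the lemma is an algorithmic claim --- ``compute in polynomial time'' --- and that is precisely the part you leave open. You correctly flag it as the main obstacle and then hand-wave: ``minimise and then contract the $\hypothesisOntology$-saturated neighbourhood of $b$'' is not an algorithm, and the a priori bound $|C|\le 2|\targetOntology|$ does not by itself yield a polynomial search for $C$ (there are exponentially many candidate concepts of that size, and naive unravelling of a cyclic $\fixedAbox$ blows up).

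What the paper does instead is give a concrete procedure that never looks at the $\targetOntology$-chase. It (i) saturates $\fixedAbox$ with $\hypothesisOntology$; (ii) \emph{minimises}: drops individuals and role assertions one by one while a membership query confirms the counterexample persists; (iii) if a cycle remains, \emph{unfolds} it (doubles it); and iterates (ii)--(iii). Two lemmas do the work: a minimal ABox has at most $|\targetOntology|$ individuals (this is where your fold idea reappears, now phrased for the ABox), and each unfold--minimise round strictly increases the individual count of the minimised ABox (via the surjective-but-not-injective homomorphism back to the previous round). Together these give at most $|\targetOntology|$ rounds, hence polynomial time. This iteration and its monotonicity are the missing algorithmic ingredients in your proposal.

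There is also a smaller mismatch in your write-up: the fold argument needs the \emph{first} chase step producing a bad $B(b)$, so that every named fact in the snapshot $\Imc$ is already entailed by $(\hypothesisOntology,\fixedAbox)$; but your algorithmic sketch then picks an \emph{arbitrary} pair $(b,B)$, for which the folded $C$ need not satisfy $(\hypothesisOntology,\fixedAbox)\models C(b)$. The paper sidesteps this by saturating with $\hypothesisOntology$ up front, so that whatever tree-shaped ABox survives minimisation is automatically realised in $(\hypothesisOntology,\fixedAbox)$.
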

The main idea for proving Lemma~\ref{lem:polyexample} is 
to 
transform the structure of \fixedAbox into a tree shaped structure, as in~\cite{DBLP:conf/aaai/KonevOW16}. 
However, in the mentioned work only CIs of the form $C\sqsubseteq A$ are considered, 
while in our case CIs of the form 
$A\sqsubseteq C$ may also be present. 
With such CIs 
the computed tree shaped ABox 
may be smaller than the original concept expression in $\targetOntology$ (as in Example~\ref{ex:sizeabox}).
\begin{example}\label{ex:sizeabox}\upshape
Assume that $\T = \{B\sqsubseteq \exists s.B, \exists r.\exists s.B\sqsubseteq A\}$ 
and $\A=\{r(a,b),B(b)\}$. 
We have that 
$(\T,\A)\models A(a)$ and
the concept expression $\exists r.B$ encoded in \A is smaller than 
the original concept expression $\exists r.\exists s.B$ in \T implying the concept name $A$. 
Intuitively, even though there is no homomorphism from 
$\A_{\exists r.\exists s.B}$ to \A, we have that $B$ `abbreviates' 
$ \exists s.B$, because it is implied by $B$. 
\hfill {\mbox{$\triangleleft$}}
\end{example}
Since there are polynomially many possible counterexamples, 
our upper bound for \AQs 
follows from Lemma~\ref{lem:polyexample}.

\begin{restatable}{theorem}{learningAQ}\label{thm:aq}
	$\Fmf(\ELH,\fixedAbox,\AQ)$ is  in \PTimeL. 
	Moreover, there is a positive bounded learning algorithm for showing such upper 
	bound which only poses membership queries (no inseparability queries are needed). 
\end{restatable}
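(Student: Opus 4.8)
The plan is to give a learning algorithm for $\Fmf(\ELH,\fixedAbox,\AQ)$ that poses only membership queries and is positive bounded, by maintaining a hypothesis $\hypothesisOntology$ with $\targetOntology\models\hypothesisOntology$ that is refined until it is $\AQ$-inseparable from $\targetOntology$ with respect to $\fixedAbox$. Since $\fixedAbox$ is a parameter of the framework, the algorithm may use $\fixedAbox$ and the shared signature $\Sigma_\targetOntology$ directly. I would first fix the role hierarchy: for every pair of role names $r,s\in\Sigma_\targetOntology$ pose the membership query $(\{r(a,b)\},s(a,b))$ for two fixed individuals $a,b$, and add $r\sqsubseteq s$ to $\hypothesisOntology$ exactly when the oracle answers `yes'. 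As $(\targetOntology,\{r(a,b)\})\models s(a,b)$ iff $\targetOntology\models r\sqsubseteq s$, afterwards $\hypothesisOntology$ entails exactly the RIs entailed by $\targetOntology$; and since in $\ELH$ no CI can ever force a role edge between two named individuals, this already makes $(\hypothesisOntology,\fixedAbox)$ and $(\targetOntology,\fixedAbox)$ agree on every role atomic query, at the cost of $\mathcal{O}(|\Sigma_\targetOntology|^{2})$ constant-size queries.

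For the concept part I would use Lemma~\ref{lem:polyexample} together with the observation made just before it, namely that the oracle's counterexamples are atomic queries of the form $(\fixedAbox,A(a))$ over $\Sigma_\targetOntology$, of which there are only polynomially many; this is precisely what lets membership queries stand in for inseparability queries, in contrast with the data retrieval setting. The main loop is: while there is such a query $A(a)$ with $(\hypothesisOntology,\fixedAbox)\not\models A(a)$ but the membership oracle answers `yes' on $(\fixedAbox,A(a))$, treat $(\fixedAbox,A(a))$ as a positive counterexample, invoke Lemma~\ref{lem:polyexample} to compute in polynomial time a CI $C\sqsubseteq B$ with $(\targetOntology,\fixedAbox)\models B(b)$, $(\hypothesisOntology,\fixedAbox)\not\models B(b)$, $(\hypothesisOntology\cup\{C\sqsubseteq B\},\fixedAbox)\models B(b)$ for some $b\in\individuals{\fixedAbox}$, and $\targetOntology\models C\sqsubseteq B$, add $C\sqsubseteq B$ to $\hypothesisOntology$, and repeat; when no such $A(a)$ is left, output $\hypothesisOntology$.

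Then I would argue correctness and the polynomial bounds as follows. Every axiom ever added is entailed by $\targetOntology$ — each RI because the oracle said `yes', each CI by the last clause of Lemma~\ref{lem:polyexample} — so $\targetOntology\models\hypothesisOntology$ holds throughout; hence $\mu(\hypothesisOntology)\subseteq\mu(\targetOntology)$, which gives the positive bounded property and also means no negative counterexample can ever arise, so at termination $\hypothesisOntology$ and $\targetOntology$ are $\AQ$-inseparable with respect to $\fixedAbox$ while the algorithm has posed no inseparability query. Because adding an axiom only adds entailments, the set of pairs $(A,a)$ with $A\in\Sigma_\targetOntology$, $a\in\individuals{\fixedAbox}$ and $(\hypothesisOntology,\fixedAbox)\models A(a)$ never shrinks, and Lemma~\ref{lem:polyexample} forces it to grow strictly (it gains $(B,b)$, with $b\in\individuals{\fixedAbox}$) at each iteration; since this set is contained in the polynomially sized set of all such pairs, the loop runs only polynomially often. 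Each iteration makes polynomially many membership queries of size $\mathcal{O}(|\fixedAbox|)$, performs polynomially many tractable $\ELH$ instance checks $(\hypothesisOntology,\fixedAbox)\models A(a)$, and one polynomial-time call to Lemma~\ref{lem:polyexample} that appends a CI of size polynomial in $|\fixedAbox|$ and $|\targetOntology|$; multiplying by the polynomial iteration bound, $|\hypothesisOntology|$ stays polynomial and the whole execution runs in polynomial time, establishing membership in \PTimeL. I expect the only delicate point to be the size bookkeeping this requires: Lemma~\ref{lem:polyexample} is stated for a hypothesis of size polynomial in $|\targetOntology|$, so one must make sure that the number of iterations is bounded \emph{independently} of $|\hypothesisOntology|$ — which it is, being at most the number of relevant atomic queries, a quantity that depends only on $\Sigma_\targetOntology$ and $\individuals{\fixedAbox}$ — so that the polynomial-size invariant does not become circular.
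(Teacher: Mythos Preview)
Your proposal is correct and follows essentially the same approach as the paper: use Lemma~\ref{lem:polyexample} as the engine for refining the hypothesis, and exploit the fact that there are only polynomially many atomic queries over $\Sigma_\targetOntology$ and $\individuals{\fixedAbox}$ to replace inseparability queries by a direct enumeration via membership queries. The paper actually presents the algorithm with an inseparability-query loop and then remarks that the polynomial bound on possible counterexamples makes inseparability queries unnecessary; you have spelled out that replacement more explicitly, including the careful observation that the iteration count is bounded independently of $|\hypothesisOntology|$ so that the size precondition of Lemma~\ref{lem:polyexample} is not used circularly.
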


\mypar{Learning \ELH ontologies with \IQ}
We build on the result for \AQs (Theorem~\ref{thm:aq}) and argue that the learning framework $\Fmf(\ELH,\fixedAbox,\IQ)$ is  in \PTimeL.
By Theorem~\ref{thm:aq},   CIs of the form $ C\sqsubseteq A $ which ensure 
\AQ-inseparability w.r.t. \fixedAbox can be learned with 
membership queries. 
 It remains to show how one can learn CIs of the form $ A\sqsubseteq C $, 
 so that \hypothesisOntology is \IQ-inseparable from \T (w.r.t. \fixedAbox). 
 By Theorem~\ref{thm:aq}, we can assume that one can construct a hypothesis \Hmc
 such that  $\T\models\Hmc$, since there is a positive bounded learning algorithm.  
 The next lemma states that if $\T\models\Hmc$ and $ (\Hmc,\fixedAbox) \equiv_{\AQ} (\Tmc,\fixedAbox)$ 
 then one can transform a counterexample of the form $(\fixedAbox,D(a))$ into 
 a CI such that $\Tmc\models A\sqsubseteq C$, and $\Hmc\not\models A\sqsubseteq C$, with 
 $C$ a subconcept of $D$ and $A \in \Sigma_\T$ such that $(\hypothesisOntology,\fixedAbox)\models A(b)$ for 
 some $b\in\NI$. Since $|\Sigma_\T|$ is polynomial in $|\Tmc|$ and the number of subconcepts of $D$ is also polynomial 
 in $|D|$. One can find such  $A\sqsubseteq C$ in polynomial time w.r.t. $|\T|$ and $|(\fixedAbox,D(a))|$. 
 \begin{lemma}\label{lem:iq}
 Let \T and \Hmc be \ELH terminologies and let $\fixedAbox$ be an ABox. 
 Assume $ (\Hmc,\fixedAbox) \equiv_{\AQ} (\Tmc,\fixedAbox)$.  
  If $(\fixedAbox,D(a))\in \mu(\T)\setminus\mu(\Hmc)$ then 
  there is $A(b)$ 
  and a subconcept $C$ of $D$ 
  such that $(\hypothesisOntology,\fixedAbox)\models A(b)$, $\Tmc\models A\sqsubseteq C$, and $\Hmc\not\models A\sqsubseteq C$.
 \end{lemma}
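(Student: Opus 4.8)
The plan is to argue via the canonical models $\Imc_\Tmc := \Imc_{(\Tmc,\fixedAbox)}$ and $\Imc_\Hmc := \Imc_{(\Hmc,\fixedAbox)}$, taken in their unravelled (possibly infinite) form, using the standard facts that $(\Tmc,\fixedAbox)\models E(c)$ iff $c^{\Imc_\Tmc}\in E^{\Imc_\Tmc}$ iff there is a homomorphism from $\tree{E}$ into $\Imc_\Tmc$ mapping $\rho_E$ to $c$ (and likewise for $\Hmc$). We may assume $a\in\individuals{\fixedAbox}$. First I would locate a \emph{minimal divergent subconcept} of $D$: a subconcept $C$ of $D$ for which there is some $b\in\individuals{\fixedAbox}$ with $(\Tmc,\fixedAbox)\models C(b)$ and $(\Hmc,\fixedAbox)\not\models C(b)$, and such that no proper subconcept of $C$ has this property for any individual of $\fixedAbox$. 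Such a $C$ exists, as $(D,a)$ already witnesses the defining property.

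Next I would pin down the shape of $C$. It is not $\top$, since $(\Hmc,\fixedAbox)\models\top(b)$ holds trivially; it is not a concept name, since $(\Tmc,\fixedAbox)\models C(b)$ and $(\Hmc,\fixedAbox)\not\models C(b)$ would then contradict $(\Hmc,\fixedAbox)\equiv_{\AQ}(\Tmc,\fixedAbox)$; and it is not a proper conjunction $E_1\sqcap E_2$, because then each $E_i$ is a strictly smaller subconcept, so by minimality $(\Tmc,\fixedAbox)\models E_i(c)$ iff $(\Hmc,\fixedAbox)\models E_i(c)$ for every individual $c$, and since entailment of $E_1\sqcap E_2$ at $b$ amounts to entailment of both $E_1$ and $E_2$ at $b$, the two KBs would then agree on $C$ at $b$. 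Hence $C=\exists r.E$ for some role $r$ and some subconcept $E$ of $D$.

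Then I would extract the concept name $A$. From $(\Tmc,\fixedAbox)\models(\exists r.E)(b)$, the element $b$ has an $r$-successor $d$ in $\Imc_\Tmc$ with $d\in E^{\Imc_\Tmc}$. If $d$ were a named individual, then — using that in the canonical model of an $\ELH$ \emph{terminology} the role edges between named individuals are exactly those entailed by $\fixedAbox$ and the RIs — we would have $(\Tmc,\fixedAbox)\models r(b,d)$, hence $(\Hmc,\fixedAbox)\models r(b,d)$ by \AQ-inseparability; and since $E$ is a strictly smaller subconcept than $C$, minimality gives $(\Hmc,\fixedAbox)\models E(d)$, so $(\Hmc,\fixedAbox)\models(\exists r.E)(b)$, contradicting the choice of $C$. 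Thus $d$ is anonymous. Since $\Tmc$ is a terminology, such a $d$ is generated because $b\in A^{\Imc_\Tmc}$ for some concept name $A$ with $A\sqsubseteq C_A\in\Tmc$ whose right-hand side forces (possibly via an RI $s\sqsubseteq r$) an $r$-successor that realizes some subconcept $C_A'$ of $C_A$; hence $\Tmc\models A\sqsubseteq\exists r.C_A'$, and the part of $\Imc_\Tmc$ reachable from $d$ is (isomorphic to) the canonical model of $(\Tmc,\Amc_{C_A'})$, so $d\in E^{\Imc_\Tmc}$ yields $\Tmc\models C_A'\sqsubseteq E$ and therefore $\Tmc\models A\sqsubseteq\exists r.E=C$. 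Finally, $b\in A^{\Imc_\Tmc}$ means $(\Tmc,\fixedAbox)\models A(b)$, so $(\Hmc,\fixedAbox)\models A(b)$ by \AQ-inseparability; and $\Hmc\not\models A\sqsubseteq C$, since otherwise $(\Hmc,\fixedAbox)\models A(b)$ would force $(\Hmc,\fixedAbox)\models C(b)$, contradicting the choice of $C$. So $A$, $b$ and $C$ are as required.

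The step I expect to be the main obstacle is the anonymous-successor analysis: one must justify carefully, from the construction of the canonical model of an $\ELH$ terminology, that an anonymous $r$-successor of a named individual is produced by a concept inclusion $A\sqsubseteq C_A$ with $A$ a concept name (so $A\in\Sigma_\Tmc$), and that the substructure reachable from it is the canonical model of $(\Tmc,\Amc_{C_A'})$ for the corresponding subconcept $C_A'$ of $C_A$ — which is exactly what lets one convert the membership $d\in E^{\Imc_\Tmc}$ into the subsumption $\Tmc\models C_A'\sqsubseteq E$. This is where the restriction to terminologies, and the absence of inverse roles and nominals in $\ELH$, are essential. Once the minimal divergent subconcept is fixed, the remaining reasoning is a routine case distinction.
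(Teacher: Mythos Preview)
Your proof is correct and takes essentially the same route as the paper. The paper establishes this lemma via the correctness of the procedure \textsc{ReduceCounterexample} in the appendix (supported by the lemmas showing that an anonymous $r$-successor of a named element in the canonical model of an $\ELH$ terminology must be generated by some $A\sqsubseteq C_A\in\Tmc$ with $A\in\NC$): it recursively walks into $D$ along ABox role assertions until none applies and then extracts the concept name, whereas your minimal-divergent-subconcept argument is a direct, non-algorithmic rephrasing of that same recursion, and the anonymous-successor analysis you flag as the main obstacle is precisely the content of the paper's supporting lemma.
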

 Given $A\sqsubseteq C$ such that 
 $\Tmc\models A\sqsubseteq C$, and $\Hmc\not\models A\sqsubseteq C$, 
 one can compute with polynomially many polynomial size queries
  another CI $A'\sqsubseteq C'$ entailed by \T but not by \Hmc 
  belonging to a class of CIs called \emph{\T-essential}~\cite[Lemma 29]{KLOW18}.
   In fact this can be done 
    in polynomial time given the complexity of entailment checking~\cite{BBL-EL} (no inverse roles). 
  Such \T-essential CIs have the property that their size is bounded polynomially 
  in the size of \T~\cite[Lemma 32]{KLOW18} and if $\alpha_1= A'\sqsubseteq C_1$ and $\alpha_2=A'\sqsubseteq C_2$ are 
  \T-essential and  not equivalent then  
one can compute in polynomial time a \T-essential CI $A'\sqsubseteq C'$ such that 
it entails $\alpha_1$ and $\alpha_2$~\cite[Lemma 30]{KLOW18}. All in all, 
if the learner computes such \T-essential counterexamples and adds/refines 
 them in the hypothesis (see~\cite[Algorithm~2]{KLOW18}) 
 then, after learning from polynomially many counterexamples, it will terminate and output
  a hypothesis \IQ-inseparable from the target (w.r.t. \fixedAbox). 
 The presence of CIs of the form $C\sqsubseteq A$ does not affect 
 this result~\cite{DBLP:conf/kr/DuarteKO18}. 
 \begin{restatable}{theorem}{learningIQ}\label{thm:iqupper}
	$\Fmf(\ELH,\fixedAbox,\IQ)$ is in  \PTimeL.
\end{restatable}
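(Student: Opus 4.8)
The plan is to reduce learning \IQ-inseparable \ELH terminologies to the already-established \AQ result (Theorem~\ref{thm:aq}) together with the \T-essential refinement machinery of~\cite{KLOW18,DBLP:conf/kr/DuarteKO18}, using Lemma~\ref{lem:iq} as the bridge that turns an \IQ counterexample over \fixedAbox into a concept inclusion of the form $A\sqsubseteq C$. Unlike in Theorem~\ref{thm:aq}, inseparability queries will be needed here, since concept expressions on the right-hand side of CIs can be arbitrarily large, so candidate \IQs cannot be enumerated.

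First I would learn all role inclusions: for every pair $r,s$ of role names in the input signature $\Sigma_\Tmc$ pose the membership query $(\{r(a,b)\},s(a,b))$ and add $r\sqsubseteq s$ to \Hmc exactly when the answer is `yes'; since $|\Sigma_\Tmc|$ is polynomial this costs polynomially many queries and yields $\Tmc\models\Hmc$. Then run the positive bounded algorithm of Theorem~\ref{thm:aq} (membership queries only) to extend \Hmc with CIs $C\sqsubseteq A$ so that $\Tmc\models\Hmc$ and $(\Hmc,\fixedAbox)\equiv_{\AQ}(\Tmc,\fixedAbox)$. Now pose an inseparability query on \Hmc: if the oracle answers `yes' we halt, otherwise it returns $(\fixedAbox,q)$ with $q\in\IQ$. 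Because $\Tmc\models\Hmc$, the counterexample is necessarily positive, i.e.\ $(\Tmc,\fixedAbox)\models q$ and $(\Hmc,\fixedAbox)\not\models q$; and $q$ cannot be a role assertion $r(a,b)$, since all RIs have been learned and \ELH has no role composition, so $q$ has the form $D(a)$. Applying Lemma~\ref{lem:iq}, I search over the polynomially many $A\in\Sigma_\Tmc$ with $(\Hmc,\fixedAbox)\models A(b)$ for some $b\in\individuals{\fixedAbox}$ and the polynomially many subconcepts $C$ of $D$, testing $\Tmc\models A\sqsubseteq C$ by the membership query $(\{A(c)\},C(c))$ for a fresh $c$ and $\Hmc\not\models A\sqsubseteq C$ by polynomial \ELH entailment~\cite{BBL-EL}; the lemma guarantees that such a pair exists, so this produces, in polynomial time, a CI $A\sqsubseteq C$ with $\Tmc\models A\sqsubseteq C$ and $\Hmc\not\models A\sqsubseteq C$.

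Then I would refine $A\sqsubseteq C$ into a \T-essential CI $A'\sqsubseteq C'$ with $\Tmc\models A'\sqsubseteq C'$ and $\Hmc\not\models A'\sqsubseteq C'$ via~\cite[Lemma~29]{KLOW18}; this takes polynomial time by polynomiality of \ELH entailment~\cite{BBL-EL} (no inverse roles), and \T-essential CIs have size polynomial in $|\Tmc|$~\cite[Lemma~32]{KLOW18}. I update \Hmc as in~\cite[Algorithm~2]{KLOW18}: if \Hmc has no CI with left-hand side $A'$, add $A'\sqsubseteq C'$, otherwise replace the existing $A'\sqsubseteq C_1$ by a \T-essential CI entailing both $A'\sqsubseteq C_1$ and $A'\sqsubseteq C'$~\cite[Lemma~30]{KLOW18}. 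Each such step preserves $\Tmc\models\Hmc$ (hence \AQ-inseparability w.r.t.\ \fixedAbox, so Lemma~\ref{lem:iq} keeps applying) and keeps $|\Hmc|$ polynomial in $|\Tmc|$, since there are at most $|\Sigma_\Tmc|$ left-hand sides and every right-hand side is a \T-essential concept. Repeating the inseparability-query step, the standard argument of~\cite{KLOW18} shows that each counterexample strictly improves the hypothesis' approximation of some \T-essential CI, so after polynomially many counterexamples none remains over \fixedAbox and the algorithm halts with \Hmc \IQ-inseparable from \Tmc w.r.t.\ \fixedAbox; the coexistence of the $C\sqsubseteq A$ CIs with the $A\sqsubseteq C$ CIs does not affect this~\cite{DBLP:conf/kr/DuarteKO18}.

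The main obstacle is to check that restricting counterexamples to the single fixed ABox \fixedAbox does not break the polynomial termination bound of~\cite{KLOW18}: one must verify that every such \IQ counterexample still produces, via Lemma~\ref{lem:iq}, a CI that the \T-essential refinement converts into quantifiable progress, and that the interaction between the $\ELH_{\sf lhs}$-style CIs obtained from membership queries and the $\ELH_{\sf rhs}$-style CIs obtained from inseparability queries neither introduces spurious entailments nor destroys \AQ-inseparability. The remaining ingredients — the RI phase, the bounded search in Lemma~\ref{lem:iq}, and the refinement and merging lemmas — are routine or imported directly from~\cite{KLOW18,DBLP:conf/kr/DuarteKO18}.
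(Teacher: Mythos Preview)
Your proposal is correct and follows essentially the same approach as the paper: first obtain \AQ-inseparability via Theorem~\ref{thm:aq}, then use Lemma~\ref{lem:iq} to convert each positive \IQ counterexample over \fixedAbox into a CI $A\sqsubseteq C$ with $\Tmc\models A\sqsubseteq C$ and $\Hmc\not\models A\sqsubseteq C$, refine it to a \T-essential CI, and add/merge it in \Hmc as in~\cite{KLOW18,DBLP:conf/kr/DuarteKO18}. The only cosmetic difference is that the paper's appendix implements the extraction step by a recursive ``ReduceCounterexample'' procedure that walks down the concept $D$ along role assertions in the saturated ABox, whereas you do a brute-force search over all pairs $(A,C)$ guided by Lemma~\ref{lem:iq}; both are polynomial and yield the same input to the \T-essential machinery, so this does not constitute a genuinely different route.
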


In contrast, the learning framework $\Fmf(\ELH,\IQ)$ is not in 
\PQuery~\cite{DBLP:conf/aaai/KonevOW16}.  
We observe that the counterexamples used in such hardness proof 
are based on an exponential number of ABoxes encoding concept expressions of the form $C_{\mathbf{b}}=\sqcap_{i\leq n} C_i$, 
where $\mathbf{b}=b_1 \ldots b_n$ is a sequence with $b_i\in\{0,1\}$,
$C_i=A_i$ if $b_i=1$, and $C_i=B_i$ if  $b_i=0$.
In the OMQA setting, the ABox is fixed and, as stated in Theorem~\ref{thm:iqupper}, this lowers the complexity.

\mypar{Learning \ELH ontologies with \CQ}
  $\ELH$ ontologies  are not polynomial  query learnable in the data retrieval 
  setting with \CQs as the query language (in fact not even 
   the fragment $\ELH_{\sf rhs}$)~\cite{DBLP:conf/aaai/KonevOW16}. 
  The counterexamples used 
  by the oracle in the hardness proof 
  are of the form $(\{A(a)\},q)$~\cite[proof of Lemma~8]{DBLP:conf/aaai/KonevOW16}, 
  so  $\{A(a)\}$ can be considered 
  as the fixed ABox given as part of the input in an OMQA learning framework. 
  Thus, the mentioned hardness result can be transferred to our setting. 
   We formalise this result with the next theorem. 
   \begin{theorem}\label{thm:elhnotpquery}
   $\Fmf(\ELH,\fixedAbox,\CQ)$ 
   is not in  \PQuery.
   \end{theorem}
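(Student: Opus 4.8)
To prove Theorem~\ref{thm:elhnotpquery}, the plan is to import, almost verbatim, the non-learnability argument for conjunctive queries in the data retrieval setting, namely the proof that $\Fmf(\ELH_{\sf rhs},\CQ)\notin\PQuery$ from~\cite[proof of Lemma~8]{DBLP:conf/aaai/KonevOW16}; we establish the claim for $\fixedAbox=\{A(a)\}$ with $A\in\NC$ and $a\in\NI$. Recall that the cited hardness proof constructs an exponentially large family $\{\Tmc_j\}_{j\in J}$ of $\ELH_{\sf rhs}$ terminologies, each of size polynomial in a common parameter and all over a single common signature, together with an adversary strategy for the oracles under which every learner is fooled: membership queries are answered by the hidden target, and every counterexample returned to an equivalence query is of the form $(\{A(a)\},q)$ for a (tree-shaped) conjunctive query $q$. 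The approximate-fingerprint part of that proof shows that neither such a counterexample nor any membership query can eliminate more than a negligible fraction of $\{\Tmc_j\}_{j\in J}$, so no polynomial number of polynomial-size queries suffices to determine $j$.

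The next step is to check that this adversary strategy is legal in the OMQA learning framework $\Fmf(\ELH,\fixedAbox,\CQ)$. Every $\ELH_{\sf rhs}$ terminology is an $\ELH$ terminology, so the whole family lies in the hypothesis space. Membership queries range over examples $(\Amc,q)$ with $\Amc$ an arbitrary ABox in both frameworks, hence the adversary's answers to them transfer without change. Inseparability queries, i.e.\ equivalence queries with respect to $\set$, may by the very definition of $\set$ only be answered with counterexamples of the form $(\fixedAbox,q)$, which is exactly the shape produced by the original adversary, so its behaviour is admissible here too. The one extra fact to verify is that the members of the family are pairwise $\CQ$-separable with respect to $\fixedAbox$, that is, $\mu(\Tmc_j)\cap\set\neq\mu(\Tmc_{j'})\cap\set$ whenever $j\neq j'$; this is built into the construction, since the conjunctive query handed out when the adversary moves its hidden target from $\Tmc_j$ to $\Tmc_{j'}$ is over $\{A(a)\}$ and is entailed together with one terminology but not with the other. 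Consequently, outputting any hypothesis that is $\CQ$-inseparable from $\targetOntology$ w.r.t.\ $\fixedAbox$ already pins down which set $\mu(\Tmc_j)\cap\set$ must be matched, so within this family the task is as hard as identifying $j$, which the imported argument rules out in polynomial query complexity. Hence $\Fmf(\ELH,\fixedAbox,\CQ)\notin\PQuery$, and a fortiori $\notin\PTimeL$, which also gives the desired separation from the positive \IQ result of Theorem~\ref{thm:iqupper}.

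I expect the only real work to be bookkeeping around the imported construction rather than any new idea: one has to make sure the family of~\cite{DBLP:conf/aaai/KonevOW16} can be presented as \emph{terminologies} in the sense fixed throughout this paper, that its equivalence-query counterexamples are genuinely of the form $(\{A(a)\},q)$, and that these counterexamples pairwise separate the family with respect to $\{A(a)\}$. Should any of these not hold literally, a light syntactic normalisation of the targets, or a direct re-run of the same adversary counting against the OMQA oracles \MQ and \EQ, repairs the gap; no ingredients beyond~\cite{DBLP:conf/aaai/KonevOW16} and the standard polynomial-query lower bound machinery~\cite{angluinqueries,arias2004exact} are involved.
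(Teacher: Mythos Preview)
Your proposal is correct and follows the same approach as the paper: both argue that the hardness proof of~\cite[Lemma~8]{DBLP:conf/aaai/KonevOW16} uses only counterexamples of the form $(\{A(a)\},q)$, so taking $\fixedAbox=\{A(a)\}$ transfers the lower bound directly to the OMQA setting. Your write-up is more detailed in verifying that the adversary strategy remains admissible and that the family is pairwise separable over $\fixedAbox$, whereas the paper simply asserts the transfer in a short paragraph preceding the theorem.
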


The hardness proof in the mentioned paper uses 
a very simple \CQ of the form $\exists x M(x)$, which has a match in  
the anonymous part of the model but `hides' the concept 
on the right side of a CI that causes the entailment of this query. 
This phenomenon makes one wonder whether restricting to the class 
of queries in which every variable needs to be reachable by an individual name (as it happens with \IQs) 
can tame the complexity of the problem. 
Our next theorem proves this.

Given a CQ $q$, we define $G_q$ as the directed graph $(V,E)$
where the nodes $V$ are the terms of $q$ and the edges $E$
are the pairs $(t_1,t_2)$ such that there is an atom of the form 
$r(t_1,t_2)$ in $q$. We say that a CQ $q=\exists \vec{x}\varphi(\vec{a},\vec{x})$ is \emph{rooted} if 
for every $x$ in $\vec{x}$, we have that $x$ is reachable from 
a node in $G_q$ that is 
in $\vec{a}$. 
We denote by $\CQr$ the class of all rooted CQs. 
The next lemma establishes that one can transform queries in $\CQr$
into queries in \IQ (by posing membership queries).

\begin{restatable}{lemma}{lemmalearningCQr}\label{lemma:cqrupper}
	 Let \T and \hypothesisOntology  be \ELH TBoxes and assume \T and \hypothesisOntology entail the same RIs.
	 Given a positive counterexample $ (\fixedAbox,q) $ (for 
	  \T and \Hmc), with $q\in\CQr$, 
	 one can contruct a positive counterexample $ (\fixedAbox,q') $ with 
	 $q'\in\IQ$ in polynomial time 
	 in  $|(\fixedAbox,q)||\Sigma_\T|$. 
\end{restatable}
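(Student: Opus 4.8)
The plan is to turn $q$ into an \IQ counterexample in stages, probing \T with membership queries while using that the learner already has \Hmc in hand. Throughout, let $\Imc_\T$ and $\Imc_\Hmc$ be the canonical (universal) models of $(\T,\fixedAbox)$ and $(\Hmc,\fixedAbox)$: recall that for an \ELH knowledge base these consist of the individuals of $\fixedAbox$ together with, below each individual, a forest of anonymous elements all of whose role edges point \emph{away} from the individuals, and that $(\Kmc,\fixedAbox)\models p$ iff there is a homomorphism from $p$ into $\Imc_\Kmc$. A \emph{positive} counterexample means $(\T,\fixedAbox)\models q$ and $(\Hmc,\fixedAbox)\not\models q$ (the negative case is symmetric). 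First I would preprocess: since \T and \Hmc entail the same RIs, a list the learner can reconstruct with membership queries, I saturate $\fixedAbox$ and $q$ under role inclusions, so that any remaining discrepancy between \T and \Hmc on $q$ is ``conceptual''. Then I would handle the ground part: letting $q_0$ be the conjunction of the atoms of $q$ whose terms are all individual names, we have $q\models q_0$, hence $(\T,\fixedAbox)\models q_0$; so if $(\Hmc,\fixedAbox)\not\models q_0$ some single assertion $\alpha\in q_0$ — already an \IQ of the form $A(a)$ or $r(a,b)$ — has $(\Hmc,\fixedAbox)\not\models\alpha$ and $(\T,\fixedAbox)\models\alpha$, and the learner outputs $q'=\alpha$ after checking each atom of $q_0$ against \Hmc. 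From now on assume $(\Hmc,\fixedAbox)\models q_0$.

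The structural heart of the argument is a decomposition of $q$ obtained from (but not computed via) an arbitrary homomorphism $\pi$ from $q$ into $\Imc_\T$. Because $q$ is rooted, every term lies on a directed path from an individual name, so $\pi$ sends each term either to an individual of $\fixedAbox$ or into one of the anonymous trees; and since those trees are edge‑disjoint and downward‑directed, each connected component $q^{(i)}$ of the subquery induced by the terms $\pi$ maps into the anonymous region is attached, via edges from terms mapped into the ABox part, below a \emph{single} named individual $b_i\in\individuals{\fixedAbox}$, and $\pi$ collapses $q^{(i)}$ together with its attaching edges onto a genuinely tree‑shaped structure rooted at $b_i$ with at most $|q^{(i)}|+1\le|q|$ elements. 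Writing $C_i$ for the corresponding concept expression ($|C_i|\le|q|$) and letting $G$ be the set of ground atoms $A(c)$, $r(c,c')$ that occur in the image $\pi(q)$, one checks that $(\T,\fixedAbox)$ entails $\bigwedge G\wedge\bigwedge_i C_i(b_i)$ (this conjunction sits inside $\Imc_\T$), while $\bigwedge G\wedge\bigwedge_i C_i(b_i)\models q$ (map every term $t$ of $q$ to $\pi(t)$, reading anonymous images off the trees of the $C_i$). Hence, since $(\Hmc,\fixedAbox)\models q_0$ but $(\Hmc,\fixedAbox)\not\models q$, \Hmc must fail some conjunct: either a ground atom of $G$ — an \AQ, and \T entails it — or some $C_i(b_i)$, which is an \IQ entailed by $(\T,\fixedAbox)$. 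Either way we have produced the desired $q'$.

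The algorithm then just has to rediscover such a conjunct without knowing $\pi$, $G$, or the $C_i$. For the ground case it suffices to range over \emph{all} assertions $A(c)$ and $r(c,c')$ with $c,c'\in\individuals{\fixedAbox}$ and concept/role names in $\Sigma_\T$ (at most $O(|\Sigma_\T|\cdot|\individuals{\fixedAbox}|^2)$ of them), keeping one that \Hmc fails and \T entails (one membership query each). For the $C_i$ case, the learner enumerates each term $t$ of $q$ and each individual $b\in\individuals{\fixedAbox}$, forms from the subquery of $q$ reachable from $t$ a tree‑shaped concept expression $C$ rooted at $t$ of size at most $|q|$ (using membership queries to \T to guide the contraction so that $\tree{C}$ embeds into $\Imc_\T$ below $b$, i.e.\ $(\T,\fixedAbox)\models C(b)$), and outputs the first such $C(b)$ for which a direct polynomial‑time \ELH instance check shows $(\Hmc,\fixedAbox)\not\models C(b)$. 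The decomposition above guarantees that at least one candidate $C_i(b_i)$ passes both tests, so the search succeeds; and since there are polynomially many candidates, each of size at most $|q|$ and each tested with one membership query and one polynomial‑time check, plus the $O(|\Sigma_\T|^2+|\Sigma_\T|\cdot|(\fixedAbox,q)|)$ preprocessing, the whole construction runs in time polynomial in $|(\fixedAbox,q)|\,|\Sigma_\T|$.

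The main obstacle is making the structural claim of the second paragraph precise and, crucially, effective. One has to show that a rooted homomorphism into an \ELH canonical model really does confine every anonymous component below one \emph{named} individual and collapses it to a polynomial‑size tree; from this follow the three things the algorithm relies on: (a) it is enough to guess an attachment individual from $\individuals{\fixedAbox}$ rather than reconstruct matches for the internal variables; (b) the candidate \IQs stay polynomial‑size — in particular no exponential ``diamond'' unravelling is forced, because whenever a tree‑shaped target would identify two branches of the subquery, $\pi$ already identifies them, so a membership‑query‑guided contraction can be performed without blow‑up; and (c) the failure in \Hmc is localised to a single summand rather than arising only from an interaction of several of them (which is where $(\Hmc,\fixedAbox)\models q_0$ and the shared‑RI assumption are used). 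Once this lemma about rooted queries and \ELH canonical models is in place, the rest is the routine bookkeeping sketched above.
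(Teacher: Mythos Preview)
Your proposal is essentially correct and rests on the same structural fact the paper uses: in the canonical model of an \ELH knowledge base, the anonymous part is a forest hanging below the ABox individuals, so a rooted \CQ that matches must collapse, per connected anonymous component, onto a single tree below a single named individual. Where you and the paper differ is in how this is made \emph{effective}.

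The paper does not reason via a hypothetical homomorphism $\pi$ and then search for the right $(t,b)$ pair. Instead it rewrites the whole query $q$ by exhaustively applying three local operations, each time checking with a membership query that the result is still a positive counterexample: \emph{individual saturation} (replace a variable by an individual of $\fixedAbox$), \emph{merging} (identify two variables), and \emph{query role saturation} (replace a role by a more specific one w.r.t.\ the RIs). Each operation can fire at most polynomially often ($|\fixedAbox||q|$, $|q|^2$, and $|q||\Sigma_\T|$ times, respectively), and after exhaustion the variable part of $q$ is provably a disjoint union of trees, each attached to exactly one individual; one of these trees, read as a concept, is the desired \IQ counterexample. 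Your ``membership-query-guided contraction'' is precisely the merging step, but the paper applies it globally to $q$ rather than separately on subqueries reachable from candidate roots.

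What your route buys is a cleaner conceptual picture (the decomposition $G\wedge\bigwedge_i C_i(b_i)$ makes the reduction to \IQ transparent), at the cost of an algorithm that is harder to pin down: you leave unspecified how the learner connects the subquery rooted at a variable $t$ to the individual $b$ (the attaching edge $r(a,x)$ has to be part of the resulting concept, not just the part below $x$), and your preprocessing ``saturate $q$ under role inclusions'' goes in the wrong direction --- you need to \emph{specialise} roles in $q$ (as the paper's role saturation does), not add implied weaker role atoms. The paper's three-operation recipe sidesteps both issues: the invariant ``still a positive counterexample'' is maintained throughout, so no separate search over $(t,b)$ is needed, and the role handling is explicit.
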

In Figure~\ref{fig:cqrexample}, it is shown an example of this conversion.
\begin{figure}
	\centering
	\begin{tikzpicture}
	\tikzset{every node/.style={circle,fill=black,inner sep=0pt,minimum size=4pt}}
	\tikzset{every edge/.append style={every node/.style={circle,fill=white,inner sep=0pt,minimum size=0pt}, >=stealth}}

	\node (1) at (0,0) {} ;
	\node[fill=white,opacity=0,text opacity=1] () at (0,0.2) { $ a  $} ;
	\node (x1) at (1,0.3) {} ;
	\node[fill=white,opacity=0,text opacity=1] () at (1,0.5) { $ x_1  $} ;
	\node (x2) at (1,-0.3) {} ;
	\node[fill=white,opacity=0,text opacity=1] () at (1,-0.5) { $ x_2  $} ;
	\node (x3) at (2,0.3) {} ;
	\node[fill=white,opacity=0,text opacity=1] () at (2,0.5) { $ x_3  $} ;
	\node (x4) at (2,-0.2) {} ;
	\node[fill=white,opacity=0,text opacity=1] () at (2,0) { $ x_4  $} ;
	\node (x5) at (2,-0.7) {} ;
	\node[fill=white,opacity=0,text opacity=1] () at (2,-0.5) { $ x_5  $} ;

	\draw[->, >=stealth] (1) to (x1) ;
	\node[fill=white,opacity=0,text opacity=1] () at (0.5,0.3) { $ r  $} ;
	\draw[->, >=stealth] (1) to (x2) ;
	\node[fill=white,opacity=0,text opacity=1] () at (0.5,-0.3) { $ r  $} ;
	\draw[->, >=stealth] (x1) to (x3) ;
	\node[fill=white,opacity=0,text opacity=1] () at (1.55,0.45) { $ s  $} ;
	\draw[->, >=stealth] (x1) to (x4) ;
	\node[fill=white,opacity=0,text opacity=1] () at (1.55,0.175) { $ s  $} ;
	\draw[->, >=stealth] (x2) to (x4) ;
	\node[fill=white,opacity=0,text opacity=1] () at (1.55,-0.35) { $ s  $} ;
	\draw[->, >=stealth] (x2) to (x5) ;
	\node[fill=white,opacity=0,text opacity=1] () at (1.55,-0.65) { $ s  $} ;
	
	\node[fill=white,opacity=0,text opacity=1] () at (3,0) { $ \rightarrow  $} ;
	\node (a) at (4,0) {} ;
	\node[fill=white,opacity=0,text opacity=1] () at (4,-0.2) { $ a  $} ;
	\node (a1) at (5,0) {} ;
	\node[fill=white,opacity=0,text opacity=1] () at (5,-0.2) { $ x_1  $} ;
	\node (a2) at (6,0) {} ;
	\node[fill=white,opacity=0,text opacity=1] () at (6,-0.2) { $ x_3  $} ;

	\draw[->, >=stealth] (a) to (a1) ;
	\node[fill=white,opacity=0,text opacity=1] () at (4.5,0.15) { $ r  $} ;
	\draw[->, >=stealth] (a1) to (a2) ;
	\node[fill=white,opacity=0,text opacity=1] () at (5.5,0.15) { $ s  $} ;
	\end{tikzpicture}
	\caption{
	Assume  $ \T = \{A \sqsubseteq \exists r.\exists s.\top \} $, $ \fixedAbox = \{A(a) \} $ 
	and $ \hypothesisOntology = \emptyset $. A call to ${\sf EQ}_{\Fmf,\T}$ can output 
		$ (\fixedAbox, \exists \vec{x} (r(a,x_1) \land r(a,x_2) \land s(x_1,x_3) \land s(x_1,x_4)\land s(x_2,x_4) \land s(x_2,x_5) )  $,  
		which can be converted into $ (\fixedAbox, \exists r.\exists s.\top(a)) $ by merging variables 
		and asking ${\sf MQ}_{\Fmf,\T}$ whether the new query holds.  
	 }
	\label{fig:cqrexample}
\end{figure}
Even though the conversion involves 
deciding query answering, which is \NP-hard, these checks are on the `side' of the oracle, and so, they 
do not affect the complexity of learning.
By Lemma~\ref{lemma:cqrupper} a $ \IQ $-inseparable hypothesis can be found 
by following the same steps of a learning algorithm for  \IQ   after $q$ in  $ (\fixedAbox,q) $ is converted 
into an instance query. For \ELH, if TBoxes 
 entail the same RIs then 
  $\IQ$-inseparability implies $\CQr$-inseparability. 
  Since RIs can be easily learned with membership queries, we obtain our next theorem.

\begin{restatable}{theorem}{learningCQr}\label{thm:cqrupper}

   $\Fmf(\ELH,\fixedAbox,\CQr)$ 
   is   in  \PTimeL. 
\end{restatable}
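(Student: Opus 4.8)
The plan is to reduce to the \IQ case: I would treat the polynomial-time learner of Theorem~\ref{thm:iqupper} as a subroutine and wrap it in a layer that first pins down the role inclusions and then, whenever the subroutine needs a counterexample, converts the $\CQr$ counterexample supplied by the inseparability oracle into an \IQ one using Lemma~\ref{lemma:cqrupper}. Concretely, I would first learn the RIs exactly as in the proof of Theorem~\ref{thm:aq}: for every pair $r,s$ of role names in $\Sigma_\T$, pose the membership query $(\{r(a,b)\},s(a,b))$ and put $r\sqsubseteq s$ in the hypothesis \Hmc precisely when the answer is `yes'. This costs only $O(|\Sigma_\T|^2)$ constant-size membership queries, and afterwards \Hmc and \T entail exactly the same RIs. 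The wrapper never touches an RI again, so this ``same RIs'' invariant holds for the rest of the run; note in particular that in \ELH every role assertion $r(a,b)$ entailed over \fixedAbox comes solely from \fixedAbox together with an RI-chain, so \Hmc and \T already agree on all such entailments.

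Next I would run the concept-inclusion part of the algorithm of Theorem~\ref{thm:iqupper}, intercepting its oracle calls. Its membership queries are forwarded unchanged, which is sound because $\IQ\subseteq\CQr$ (every instance query, viewed as a CQ, is rooted) and $\mu$ coincides on these examples. When the subroutine poses an inseparability query with current hypothesis \Hmc, I forward it to the $\CQr$ inseparability oracle; if the answer is `yes', halt and output \Hmc. Otherwise the oracle returns a counterexample $(\fixedAbox,q)$ with $q\in\CQr$, which is positive since the subroutine keeps $\T\models\Hmc$. Applying Lemma~\ref{lemma:cqrupper} — legitimate because \Hmc and \T entail the same RIs — I compute, in time polynomial in $|(\fixedAbox,q)|\cdot|\Sigma_\T|$ and using only membership queries, a positive counterexample $(\fixedAbox,q')$ with $q'\in\IQ$, and hand $(\fixedAbox,q')$ to the subroutine as its \IQ counterexample.

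It remains to argue faithfulness and polynomiality. Since every instance query read as a CQ is rooted, $\CQr$-inseparability w.r.t.~\fixedAbox implies $\IQ$-inseparability w.r.t.~\fixedAbox; and under the invariant, $\IQ$-inseparability w.r.t.~\fixedAbox implies $\CQr$-inseparability w.r.t.~\fixedAbox (the implication stated just before the theorem). Hence the $\CQr$ oracle returns `yes' exactly when an \IQ inseparability oracle would, so the subroutine is interacting with a correct \IQ oracle whose counterexamples happen to be the converted queries. By Theorem~\ref{thm:iqupper} it therefore halts after polynomially many (in $|\T|$ and the largest counterexample) counterexamples with an $\IQ$-inseparable hypothesis; equivalently, our algorithm halts precisely when the $\CQr$ oracle answers `yes', i.e.\ when $(\Hmc,\fixedAbox)\equiv_{\CQr}(\T,\fixedAbox)$. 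Each conversion runs in polynomial time and only merges variables, so $|q'|$ is polynomial in $|(\fixedAbox,q)|$, and the whole run is polynomial in $|\T|$ and the size of the largest $\CQr$ counterexample seen; thus $\Fmf(\ELH,\fixedAbox,\CQr)\in\PTimeL$.

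The step I expect to be the main obstacle is exactly this faithfulness bookkeeping: one must verify both directions of the equivalence between $\IQ$- and $\CQr$-inseparability over \fixedAbox, keep the ``same RIs'' invariant alive throughout (so that Lemma~\ref{lemma:cqrupper} keeps applying and that the two inseparability notions coincide), and rely on positive-boundedness of the \IQ learner so that every counterexample passed to Lemma~\ref{lemma:cqrupper} is positive. All the genuinely hard work — the tree-shaping of \fixedAbox, the \T-essential machinery, and the $\CQr\to\IQ$ conversion — is already discharged by Theorems~\ref{thm:aq} and~\ref{thm:iqupper} and Lemma~\ref{lemma:cqrupper}, so this final argument is essentially a matter of assembling them correctly.
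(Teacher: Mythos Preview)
Your proposal is correct and follows essentially the same route as the paper: learn all RIs by membership queries, run the \IQ learner of Theorem~\ref{thm:iqupper} as a black box, convert each $\CQr$ counterexample into an \IQ one via Lemma~\ref{lemma:cqrupper}, and finish by observing that (under the same-RIs invariant) \IQ-inseparability and $\CQr$-inseparability coincide over $\fixedAbox$. The paper packages the last step as a separate Lemma~\ref{lem:iqcq}, but your in-line argument for both directions is the same; if anything, you are more explicit than the paper about the faithfulness bookkeeping (positive-boundedness, preservation of the RI invariant) that makes the simulation go through.
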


\section{Data Updates}\label{sec:data-updates}
The algorithm presented in the previous section for \IQs 
computes an ontology \hypothesisOntology that is
\IQ-inseparable
from the target
\targetOntology w.r.t. a fixed ABox \fixedAbox.
In this section, we
first
study
when
$\IQ$-inseparability is preserved,
without
changes to
the hypothesis \hypothesisOntology,
if \fixedAbox is updated to an ABox $\Amc$.
Then, 
given an OMQA learning framework
$\Fmf(\ELH,\fixedAbox,\IQ) = (\examples,\set , \hypothesisSpace, \mu)$,
we determine conditions
on
an updated ABox $\Amc$,
sufficient to guarantee that a learning framework
$(\examples, \set', \hypothesisSpace, \mu)$ with $\set'\supseteq \set$ is still in $\PTimeL$.

To characterise when
$ \IQ $-inseparability
is preserved if
\fixedAbox is updated to an ABox \A, we use the classical 
notion of bisimulation. 
Let $\I=(\Delta^\I,\cdot^\I),\Jmc=(\Delta^\Jmc,\cdot^\Jmc)$ be two interpretations.
A \emph{bisimulation} is a non-empty 
relation
$\Zmc \subseteq \Delta^\I \times \Delta^\Jmc$
satisfying the following conditions, for all $(d,e) \in \Zmc$: 
(1) for all concept names $A \in \NC$, $d \in A^\I $ iff $e \in A^\Jmc$;
(2) for all role names $r \in \NR$, 
if $(d,d') \in r^\I, d' \in \Delta^\I$, then there exists $e' \in \Delta^\Jmc$ such that $(e,e') \in r^\Jmc$ and $(d',e') \in \Zmc$;
(3) for all role names $r \in \NR$, if $(e,e') \in r^\Jmc, e' \in \Delta^\Jmc$, then there exists $d' \in \Delta^\I$ such that $(d,d') \in r^\I$ and $(d',e') \in \Zmc$.
If $(d,e)\in\Zmc$,  we write $(\I,d) \sim (\Jmc,e)$. 

\begin{restatable}{theorem}{updateinseparabilitybisimulation}\label{lem:update1} 

Let \T and \Hmc be \ELH terminologies
entailing the same RIs,
and let $\fixedAbox$ and $\A$ be ABoxes.
If, for all $b \in \individuals{\A}$, there is $a \in \individuals{\fixedAbox}$ such that
$(\I_{\fixedAbox},a) \sim (\I_{\A},b)$,
then
$(\Hmc,\fixedAbox) \equiv_{\IQ} (\Tmc,\fixedAbox)$
implies
$(\hypothesisOntology, \A) \equiv_{\IQ} (\T, \A)$.

\end{restatable}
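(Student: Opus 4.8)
The plan is to reduce the statement to a transfer property for concept instance queries, combining bisimulation‑invariance of \EL-concepts with the hypothesis $(\Hmc,\fixedAbox)\equiv_{\IQ}(\T,\fixedAbox)$. First I would observe that all three hypotheses ($\T$ and $\Hmc$ entail the same RIs; $(\Hmc,\fixedAbox)\equiv_{\IQ}(\T,\fixedAbox)$; the bisimulation condition between $\I_{\fixedAbox}$ and $\I_{\A}$) and the conclusion are symmetric in $\T$ and $\Hmc$, so it suffices to prove the single implication $(\T,\A)\models q\Rightarrow(\Hmc,\A)\models q$ for every $q\in\IQ$; swapping $\T$ and $\Hmc$ yields the converse, hence $(\Hmc,\A)\equiv_{\IQ}(\T,\A)$. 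I would then split on the shape of $q$.

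For a role assertion $q=r(a,b)$ I would argue directly: since \ELH has no inverse roles and no role composition, and CIs cannot introduce role edges among named individuals, $(\T,\A)\models r(a,b)$ holds iff some assertion $r'(a,b)\in\A$ satisfies $\T\models r'\sqsubseteq r$; as $\T$ and $\Hmc$ entail the same RIs and share the ABox $\A$, this is equivalent to $(\Hmc,\A)\models r(a,b)$ (this is where ``same RIs'' is used). For a concept query $q=C(a)$ with $a\in\individuals{\A}$, I would pick some $a'\in\individuals{\fixedAbox}$ with $(\I_{\fixedAbox},a')\sim(\I_{\A},a)$ (hypothesis) and rely on the following claim: \emph{for any \ELH terminology $\Omc$, ABoxes $\A_1,\A_2$, individuals $a_i\in\individuals{\A_i}$ with $(\I_{\A_1},a_1)\sim(\I_{\A_2},a_2)$, and \EL-concept $C$, one has $(\Omc,\A_1)\models C(a_1)$ iff $(\Omc,\A_2)\models C(a_2)$.} Granting the claim, the theorem follows from the chain $(\T,\A)\models C(a)\Rightarrow(\T,\fixedAbox)\models C(a')\Rightarrow(\Hmc,\fixedAbox)\models C(a')\Rightarrow(\Hmc,\A)\models C(a)$, where the first and third steps are instances of the claim (for $\Omc=\T$ and $\Omc=\Hmc$) and the middle step is the hypothesis $(\Hmc,\fixedAbox)\equiv_{\IQ}(\T,\fixedAbox)$.

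To prove the claim I would use canonical models. Every \ELH KB $(\Omc,\A_i)$ has a (forest‑shaped) canonical model $\I_{\Omc,\A_i}$ with $(\Omc,\A_i)\models C(d)$ iff $d\in C^{\I_{\Omc,\A_i}}$ for all \EL-concepts $C$ and $d\in\individuals{\A_i}$, and a routine induction on $C$ shows that \EL-concepts are bisimulation‑invariant (indeed preserved in the relevant direction already by simulations, but full bisimulation makes both directions free). Hence it is enough to lift a bisimulation between $\I_{\A_1}$ and $\I_{\A_2}$ to a bisimulation between $\I_{\Omc,\A_1}$ and $\I_{\Omc,\A_2}$. I would do this by running the two chases in parallel, round by round, maintaining a bisimulation $\Zmc_n$ between the partial structures: start from the given bisimulation, first closed under the RIs of $\Omc$ (closing both sides under the same RI set preserves bisimilarity); then, at each round, whenever a CI $E\sqsubseteq D$ of $\Omc$ applies to an element $d$ on one side, bisimulation‑invariance of the \EL-concept $E$ forces it to apply to every $\Zmc_n$‑partner $e$ on the other side, and the fresh tree witnesses created for $D$ are isomorphic and can be matched, extending $\Zmc_n$ to $\Zmc_{n+1}$. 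The limit relation is a bisimulation and it covers all of $\I_{\Omc,\A_i}$, since every anonymous element descends from a named individual and every named individual of $\A_i$ occurs in the base relation.

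The main obstacle is making this parallel‑chase argument precise. One has to commit to a canonical‑model construction fine enough that ``the same existential requirements are generated below bisimilar individuals'' — a forest‑shaped model, in which each named individual owns its own anonymous tree determined solely by the CIs it triggers, makes this transparent — and one must resist phrasing the invariant as ``bisimilar individuals satisfy the same \EL-concepts in the canonical models'', because that equivalence is only available \emph{after} the lifted bisimulation has been constructed; instead the invariant must be threaded through the successive rounds of the chase. Everything else (the role‑query case, the symmetry reduction, and the bisimulation‑invariance of \EL-concepts) is routine.
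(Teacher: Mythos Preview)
Your proposal is correct and follows essentially the same route as the paper: split on the shape of the \IQ, handle role queries via the ``same RIs'' assumption, and for concept queries chain through $\fixedAbox$ using the fact that a bisimulation between $\I_{\A_1}$ and $\I_{\A_2}$ forces $(\Omc,\A_1)\models C(a_1)\Leftrightarrow(\Omc,\A_2)\models C(a_2)$. The only difference is packaging: the paper isolates this last fact as a standard lemma (bisimulation on the ABox canonical models lifts to the TBox-saturated canonical models) and invokes it as a black box, whereas you sketch its proof inline via the parallel-chase argument.
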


Theorem~\ref{lem:update1}  does not hold if we require the ABoxes $\fixedAbox$ and $\A$ 
to be \emph{homomorphically equivalent}, i.e., 
if there are ABox homomorphisms from $\fixedAbox$ to $\A$ and from $\A$ to $\fixedAbox$.
\begin{example}\label{ex:genhyp}
Consider
$ \T = \{ 
\exists r.A_1 \sqsubseteq B \} $
and
$ \fixedAbox = \{r(a,b),A_1(b),A_2(b) \} $.
The 
hypothesis 
$ \hypothesisOntology=\{ 
\exists r.(A_1\sqcap A_2) \sqsubseteq B \} $ is \IQ-inseparable.
However, if  
$\A = \fixedAbox \cup \{r(a',b'),A_1(b') \} $,
then $\IQ$-inseparability is not preserved, 
even though $\fixedAbox$ and
$\A$
are homomorphically equivalent.
 \hfill {\mbox{$\triangleleft$}}
\end{example}

The problem here is that the left-hand side of CIs in $\hypothesisOntology$ could be biased and too specific for the individuals in $\fixedAbox$.
Indeed, in Example~\ref{ex:genhyp}, if the more general concept expression $\exists r.A_1$ on the left-side 
had been learned, then \IQ-inseparability would have been preserved after the update. 
If we
allow for modifications
to the
learned hypothesis \hypothesisOntology, 
we can extend the class of updated ABoxes \A not only to those in which every 
individual in \A is bisimilar to an individual in \fixedAbox but  in which a  more relaxed condition is required.
It is easy for the learner to make certain kinds of generalisation,  for instance,
check whether $ \T\models  \exists r.A_1 \sqsubseteq B$ and
add such more general CI to \hypothesisOntology.
Therefore, the idea is to suitably `generalise' the left-hand side of CIs in the hypothesis $\Hmc$
computed by the learning algorithm.

\emph{Generalisation of $C \sqsubseteq A \in  \hypothesisOntology$ for \T} consists of 
replacing $ C $ by the result $ C' $ of  
(1) 
replacing a concept name $ B $ in $ C $ with $ \top $ or $ B' $
such that $ \T\models B \sqsubseteq B' $ and $ \T\not\models B' \sqsubseteq B $  
if 
$ \T\models C' \sqsubseteq A $;
or
(2) 
replacing a role name $ r $ in $ C $ with $ s $ 
such that $ \T\models r \sqsubseteq s $ and $ \T\not\models s \sqsubseteq r $ 
if $ \T\models C' \sqsubseteq A $.
We say that $ C\sqsubseteq A\in\hypothesisOntology $ is \emph{generalised} for \T
if generalization of $C \sqsubseteq A \in  \hypothesisOntology$ for \T has been exhaustively applied.
  \hypothesisOntology is generalised for \T if all the CIs in it are generalised.
  We may omit `for \T' if this is clear from the context.

With
the following definitions, we define a class of ABoxes that 
are guaranteed to  preserve 
$ \IQ $-inseparability if the hypothesis is generalised.  
Given a TBox \T
and  concept names $A,B \in \Sigma_{\T} \cap \NC$, 
we say that there is a \emph{linear derivation} from $A$ to $B$ if 
$\T\models A\sqsubseteq B$ and for all $B'\in\NC$ such that $\T\models A\sqsubseteq B'$
we have that $\T\models B'\sqsubseteq B$. Similarly, for $r,s \in \Sigma_{\T} \cap \NR$,
there is a \emph{linear derivation} from $r$ to $s$ if 
$\T\models r\sqsubseteq s$ and for all $s'\in\NR$ such that $\T\models r\sqsubseteq s'$
we have that $\T\models s'\sqsubseteq s$. 
We write $\Amc < \Amc'$ if $\Amc'$ is the result of replacing $A(a)\in\Amc$ by 
$B(a)$ and there is a linear derivation from $A$ to $B$, or if $\Amc'$ is the result of replacing $r(a,b)\in\Amc$ by 
$s(a,b)$ and there is a linear derivation from $r$ to $s$. 
We define $ {\sf g}_{\T}(\fixedAbox) $ as the set of all 
ABoxes $\Amc$ such that there is a sequence $\Amc_1 < \ldots < \Amc_n$ 
with $\Amc_1=\fixedAbox$ and $\Amc_n=\Amc$.

The following theorem establishes an  our upper bound 
for
learning frameworks
extending $\set$ with all the examples of the form $(\A, q)$, where $\A \in {\sf g}_\T(\fixedAbox)$ and $q\in\IQ$.

\begin{restatable}{theorem}{learningpoltimegeneralize}\label{thm:update} 
	\label{lgeneraliseinsep}
	Let \Fmf be the learning framework that results from 
	adding all pairs of the form $(\Amc,q) $, with $\A \in {\sf g}_\T(\fixedAbox)$
	and
	$q \in \IQ$,
	to the set $\set$ in $\Fmf(\ELH,\fixedAbox,\IQ)=(\examples,\set , \hypothesisSpace, \mu)$, where $\T\in\hypothesisSpace$. 
	Assume $\Sigma_\T\subseteq \Sigma_{\fixedAbox}$.
	Then, 
	\Fmf is in  \PTimeL.
\end{restatable}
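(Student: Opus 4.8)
The plan is to reuse the polynomial time \IQ learning algorithm of Theorem~\ref{thm:iqupper}, precede it by an explicit \emph{generalisation} of the left-hand sides of the CIs in the hypothesis, and then prove that a hypothesis which is \IQ-inseparable from \T w.r.t.\ $\fixedAbox$ and, in addition, generalised for \T is automatically \IQ-inseparable from \T w.r.t.\ every $\Amc\in{\sf g}_{\T}(\fixedAbox)$, hence w.r.t.\ the enlarged set $\set$. Observe that an $\Amc\in{\sf g}_{\T}(\fixedAbox)$ need not satisfy the bisimulation hypothesis of Theorem~\ref{lem:update1}, since the replaced assertions change the concept/role labels of individuals; so that theorem does not apply directly, and generalising the hypothesis is exactly what removes the over-specificity of its left-hand sides, as the discussion around Example~\ref{ex:genhyp} illustrates.

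Concretely, I would (i) learn all RIs with membership queries; (ii) run the \AQ-part of Theorem~\ref{thm:aq}, which uses only membership queries and returns a hypothesis \hypothesisOntology with $\T\models\hypothesisOntology$ that is \AQ-inseparable from \T w.r.t.\ $\fixedAbox$; (iii) generalise for \T each complex-left-hand-side CI $C\sqsubseteq A$ of \hypothesisOntology, which is cheap since these CIs have size polynomial in $|\T|$, generalisation only replaces names, and each rewriting test ``$\T\models C'\sqsubseteq A$?'' is a single membership query; and (iv) run the \IQ-refinement loop. Since generalisation is a test against \T and not against \hypothesisOntology, once performed in (iii) the hypothesis stays generalised, and the \tessential CIs $A'\sqsubseteq C'$ added by the loop — concept-name on the left — keep both $\T\models\hypothesisOntology$ and \AQ-inseparability w.r.t.\ $\fixedAbox$; also, generalisation does not change the tree structure of a left-hand side, so sizes stay polynomial in $|\T|$. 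The only adjustment to the loop is that the equivalence oracle may now return a counterexample $(\Amc,q)$ with $\Amc\in{\sf g}_{\T}(\fixedAbox)\setminus\{\fixedAbox\}$; such a counterexample is necessarily positive, and I would convert it into a genuine $\fixedAbox$-counterexample by noting, first, that $(\T,\fixedAbox)\models q$ — because along each linear-derivation step from $\fixedAbox$ towards $\Amc$ the concept- and role-types of the individuals can only shrink, which yields an individual-preserving homomorphism from the canonical model of $(\T,\Amc)$ into that of $(\T,\fixedAbox)$ — and, second, that $(\hypothesisOntology,\fixedAbox)\not\models q$, which follows from the key lemma below (were $(\hypothesisOntology,\fixedAbox)\models q$, the lemma would force $(\hypothesisOntology,\Amc)\models q$, contradicting the counterexample).

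The key lemma is: \emph{if \hypothesisOntology is generalised for \T and $\T\models\hypothesisOntology$, then for every $\Amc\in{\sf g}_{\T}(\fixedAbox)$ and every \IQ $q$ with individuals in $\individuals{\fixedAbox}$, $(\hypothesisOntology,\fixedAbox)\models q$ and $(\T,\Amc)\models q$ imply $(\hypothesisOntology,\Amc)\models q$.} I would prove it by iterating, along a chain $\fixedAbox=\Amc_1<\cdots<\Amc_n=\Amc$, the single-step claim: if $\Bmc<\Bmc'$ is obtained by replacing $A(a)$ with $B(a)$ for a linear derivation from $A$ to $B$ (the role case being symmetric), then $(\hypothesisOntology,\Bmc)\models q$ and $(\T,\Bmc')\models q$ imply $(\hypothesisOntology,\Bmc')\models q$; the base case is the assumption $(\hypothesisOntology,\fixedAbox)\models q$, and at each step $(\T,\Amc_{k+1})\models q$ holds because $(\T,\Amc_j)\models(\T,\Amc_{j+1})$ for all $j$, so $(\T,\Amc_n)\models q$ propagates up the chain. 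In the single-step claim the linear derivation is used to say that $B$ is the most general concept name \T derives from $A$, so the witness of $(\T,\Bmc')\models q$ does not depend on any concept name lost when $A(a)$ becomes $B(a)$; and ``generalised'' is used to reroute the $\hypothesisOntology$-derivation of $q$ on $\Bmc$ through the more general left-hand sides so that it survives on $\Bmc'$. I expect this single-step claim to be the main obstacle: it requires a careful simultaneous inspection of the chases of \T and of \hypothesisOntology from $\Bmc$ and $\Bmc'$, and it is also the place where the hypothesis $\Sigma_{\T}\subseteq\Sigma_{\fixedAbox}$ enters, to guarantee that every name occurring in a linear derivation already appears in $\fixedAbox$ and is therefore available both in the ABoxes of ${\sf g}_{\T}(\fixedAbox)$ and in the generalised hypothesis.

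Granting the key lemma, correctness and the time bound follow. After step (ii), the generalisation (iii), and polynomially many \IQ-refinement steps — their number bounded by the analysis behind Theorem~\ref{thm:iqupper}, since after the conversion the loop is fed exactly $\fixedAbox$-counterexamples — the hypothesis \hypothesisOntology is \IQ-inseparable from \T w.r.t.\ $\fixedAbox$ and generalised for \T. Then for every $\Amc\in{\sf g}_{\T}(\fixedAbox)$ and every \IQ $q$ we get $(\T,\Amc)\models q\Rightarrow(\T,\fixedAbox)\models q$ (homomorphism as above) $\Rightarrow(\hypothesisOntology,\fixedAbox)\models q$ (inseparability) $\Rightarrow(\hypothesisOntology,\Amc)\models q$ (key lemma), while the converse direction is immediate from $\T\models\hypothesisOntology$; hence \hypothesisOntology is \IQ-inseparable from \T w.r.t.\ $\set$ and the next equivalence query returns ``yes''. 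All intermediate CIs and all queries have size polynomial in $|\T|$ and in the largest counterexample seen so far, so $\Fmf\in\PTimeL$.
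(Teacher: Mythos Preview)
Your overall architecture---learn RIs, run the \AQ phase with membership queries only, generalise the complex-left CIs, then run the \IQ loop---matches the paper's proof. The paper's Claim~1 is the analogue of your ``key lemma'': it states that for every $\Amc\in{\sf g}_{\T}(\fixedAbox)$ every assertion of $\Amc$ is entailed by $(\Hmc,\fixedAbox)$, and that any homomorphism $T_C\to\Imc_{\Hmc,\fixedAbox}$ rooted at $a$ for a (generalised) $C\sqsubseteq B\in\Hmc$ lifts to a homomorphism $T_C\to\Imc_{\Hmc,\Amc}$. From this the paper concludes \AQ-inseparability is preserved for every $\Amc\in{\sf g}_{\T}(\fixedAbox)$, and then simply runs Algorithm~\ref{a:learningIQ} directly on counterexamples $(\Amc,C(a))$ with $\Amc\in{\sf g}_{\T}(\fixedAbox)$: once \AQ-inseparability w.r.t.\ $\Amc$ is in place, Lemma~\ref{lem:iq} and the \texttt{ReduceCounterexample}/\tessential machinery apply verbatim, since they only need $(\Hmc,\Amc)\equiv_{\AQ}(\Tmc,\Amc)$ and do not depend on $\Amc$ being $\fixedAbox$.

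The main difference is your detour through a conversion step: you translate each $(\Amc,q)$ into a $(\fixedAbox,q)$ counterexample, and for that you need your key lemma at the full \IQ level, for \emph{every intermediate} hypothesis produced by the loop. As stated---``$\Hmc$ generalised and $\T\models\Hmc$''---the lemma is not quite right: generalisation constrains only the complex-left CIs, so nothing prevents an intermediate $\Hmc$ from containing $A\sqsubseteq\exists r.G$ where $\T$ has the strictly more applicable $B\sqsubseteq\exists r.G$ with $\T\models A\sqsubseteq B$; in that situation $(\Hmc,\fixedAbox)\models\exists r.G(a)$ and $(\T,\{B(a)\})\models\exists r.G(a)$ but $(\Hmc,\{B(a)\})\not\models\exists r.G(a)$. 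In the actual run this does not happen, because the concept-name-left CIs added by the loop are \tessential (in particular, decomposed on the right), which forces the ``outermost'' applicable concept name on the left; but you would need to build that into the hypothesis of your key lemma and argue it. The paper sidesteps this entirely by never converting: it only needs the \AQ-level Claim~1, and then reuses Lemma~\ref{lem:iq} with $\Amc$ in place of $\fixedAbox$. Your route works once the missing \tessential hypothesis is added, but it proves more than necessary.
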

 
\section{Learning from Data}\label{sec:learning-data}

The existence of oracles that correctly answer to
all
the
queries posed by the learner
does not naturally fit
those
settings in which only a direct access to data is available.
In this section, we study how the oracle-based approach presented so far can 
be modified so to allow access to examples retrieved from data, thus reducing 
the dependency of our learning model on membership and inseparability queries.
Firstly, we consider
a finite 
batch of examples~\cite{ariasetal}, to be used as a representative of 
the entire data pool, and study conditions under which it is guaranteed 
the existence of such a batch that allows us to learn inseparable ontologies.
Then, we analyse how a data-driven approach can be used as a basis for a learning model for DL ontologies 
based on the well-known PAC learning model, possibly extended with membership queries~\cite{Valiant}.

\mypar{Learning from batch}
Given an OMQA learning framework $\Fmf(\ontologyLanguage,\fixedAbox,\queryLanguage) = (\examples,\set, \hypothesisSpace, \mu)$,
  a \emph{batch} $\batch$
is a \emph{finite} subset of $\examples$.
One could ask under which conditions a batch 
 that allows us to construct an ontology $\hypothesisOntology \in \hypothesisSpace$ 
 which is $\queryLanguage$-inseparable from a target $\targetOntology \in \hypothesisSpace$ 
 is guaranteed to exist.
If no restrictions 
 are imposed on the form of the examples occurring in $\batch$, the answer is trivial for 
 $\EL$ ontologies and \IQs.
Indeed, for every $C \sqsubseteq D \in \targetOntology$,
consider the set $\batch$ of examples of the form $(\Amc_{C}, D(\rho_{C}))$,
obtained by representing the concept $C$ as a labelled tree with root $\rho_{C}$ and encoded in the ABox $\Amc_{C}$.
These examples 
have the property that, for 
every $\targetOntology \in \hypothesisSpace$, $\Tmc \models C \sqsubseteq D$ iff $(\targetOntology, \Amc_{C}) \models D(\rho_{C})$.
By setting $$\hypothesisOntology = \{ C \sqsubseteq D \mid (\Amc_{C}, D(\rho_{C})) \in \batch \},$$ we 
obtain that \hypothesisOntology 
is equivalent to (and thus $\IQ$-inseparable, w.r.t. any ABox, from) \targetOntology. 
This construction can be easily 
extended to   \ELH by using examples of the form $(\{r(a,b)\},s(a,b))$.
However, instead of allowing the examples retrieved from our data to have no restrictions in their size and 
in the shape of the ABoxes, it would be more natural to require that these ABoxes
contain less information than $\fixedAbox$, given as a parameter.
This intuition can be made precise by imposing that, for each ABox $\Amc$ in the batch, there is an ABox homomorphism from $\Amc$ to $\fixedAbox$ and 
$|\Amc|$ is polynomial in $|\T|$. 
Our next theorem states that, under these assumptions, one can construct  a $\queryLanguage$-inseparable \ELH terminology. 

\begin{restatable}{theorem}{existsbatch}\label{existsbatch}
Let $\Fmf(\ELH, \fixedAbox, \queryLanguage) = (\examples, \set, \hypothesisSpace, \mu)$ 
be an OMQA learning framework, with~$\queryLanguage \in \{ \AQ, \IQ,\CQr \}$,
and let $\targetOntology \in \hypothesisSpace$ be such that $\Sigma_{\targetOntology} \subseteq \Sigma_{\fixedAbox}$.
Let $\homsubset \subseteq \examples$ be the set of examples $(\Amc, q)$
such that there is
an ABox homomorphism
from $\Amc$ to $\fixedAbox$.
Then, there is a batch $\batch \subseteq \homsubset$,
polynomial in 
$|\targetOntology|$,
and an
algorithm
such that it 
takes $\batch$ as input, it eventually halts, and returns
$\hypothesisOntology \in \hypothesisSpace$ such that
$\mu(\hypothesisOntology) \cap \set = \mu(\targetOntology) \cap \set$.
\end{restatable}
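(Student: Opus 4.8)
The plan is to reuse the interactive learnability results. By Theorems~\ref{thm:aq},~\ref{thm:iqupper} and~\ref{thm:cqrupper}, for each $\queryLanguage\in\{\AQ,\IQ,\CQr\}$ the framework $\Fmf(\ELH,\fixedAbox,\queryLanguage)$ is in \PTimeL, and inspecting those proofs shows that the output hypothesis \hypothesisOntology can be taken to satisfy $\targetOntology\models\hypothesisOntology$, $\Sigma_{\hypothesisOntology}\subseteq\Sigma_{\targetOntology}$, $\mu(\hypothesisOntology)\cap\set=\mu(\targetOntology)\cap\set$, and $|\hypothesisOntology|$ polynomial in $|\targetOntology|$: Theorem~\ref{thm:aq} is positive bounded; the construction behind Theorem~\ref{thm:iqupper} only adds \tessential CIs (entailed by the target, of polynomial size by~\cite[Lemma~32]{KLOW18}) and RIs $r\sqsubseteq s$ with $\T\models r\sqsubseteq s$; and for $\CQr$ one reuses the \IQ algorithm, since by Lemma~\ref{lemma:cqrupper} and Theorem~\ref{thm:cqrupper} an \IQ-inseparable hypothesis with the same RIs as \targetOntology is already $\CQr$-inseparable. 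So I would first run this algorithm against the true oracles for \targetOntology (only the existence of a batch is needed) and fix such an \hypothesisOntology.

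Next I would turn \hypothesisOntology into a batch of positive examples that ``names'' its axioms: put $(\Amc_C,A(\rho_C))$ for each CI $C\sqsubseteq A\in\hypothesisOntology$, $(\{A(a)\},C(a))$ for each CI $A\sqsubseteq C\in\hypothesisOntology$, and $(\{r(a,b)\},s(a,b))$ for each RI $r\sqsubseteq s\in\hypothesisOntology$, where $\Amc_C$ is the ABox representation of $C$ with root $\rho_C$. Since $(\Tmc',\Amc_C)\models A(\rho_C)$ iff $\Tmc'\models C\sqsubseteq A$ for any $\Tmc'$ (and analogously for the other two shapes), $\targetOntology\models\hypothesisOntology$ gives $\batch\subseteq\mu(\targetOntology)$, and $|\batch|\le|\hypothesisOntology|$ is polynomial in $|\targetOntology|$. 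The algorithm taking \batch as input is the decoder that reads the corresponding inclusions back off each example (the three shapes induce the same inclusion whenever they overlap, so the decoder is well defined); on our \batch it halts and returns exactly \hypothesisOntology, which is $\queryLanguage$-inseparable from \targetOntology w.r.t.\ \fixedAbox. It then remains only to check $\batch\subseteq\homsubset$, i.e.\ that each ABox occurring in \batch maps homomorphically into \fixedAbox.

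For the right-hand-side and role inclusions this is where $\Sigma_{\targetOntology}\subseteq\Sigma_{\fixedAbox}$ enters: if $A\sqsubseteq C\in\hypothesisOntology$ then $A\in\Sigma_{\hypothesisOntology}\subseteq\Sigma_{\fixedAbox}$, so $A(a')\in\fixedAbox$ for some $a'$ and $\{A(a)\}\to\fixedAbox$ via $a\mapsto a'$; likewise $\{r(a,b)\}\to\fixedAbox$ whenever $r\in\Sigma_{\targetOntology}$. For the left-hand-side inclusions one needs that every $C$ produced in a CI $C\sqsubseteq A$ by the learning algorithm has $\Amc_C\to\fixedAbox$; this should follow from the tree-shaping construction underlying Lemma~\ref{lem:polyexample}, since there $C$ is read off a tree-shaped unfolding of a connected fragment of \fixedAbox and such an unfolding comes with a canonical homomorphism back into \fixedAbox (the shortening of $C$ from Example~\ref{ex:sizeabox} only occurs when \fixedAbox, saturated with \targetOntology, already supports the shorter pattern, so it does not destroy that homomorphism).

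I expect this last point to be the crux: one must re-enter the proof of Lemma~\ref{lem:polyexample} and confirm that its left-hand sides embed into \fixedAbox itself, not merely into $\Imc_{\hypothesisOntology,\fixedAbox}$. If some $C\sqsubseteq A\in\hypothesisOntology$ only matches through the anonymous part of that canonical model, $(\Amc_C,A(\rho_C))$ need not lie in \homsubset; one would then instead witness that inclusion by the raw example $(\fixedAbox,A(a))$ together with the positive answers to the membership queries Lemma~\ref{lem:polyexample} poses while converting such a counterexample into CIs -- all of which can again be arranged inside \homsubset -- and let the decoder re-run that conversion internally. Keeping the resulting batch polynomial in $|\targetOntology|$ rather than in both $|\targetOntology|$ and $|\fixedAbox|$ is the delicate bookkeeping, and is the step I would budget the most effort for.
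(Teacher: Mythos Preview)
Your primary construction is correct and very close to the paper's own proof. Both arguments first run the interactive learner (Theorems~\ref{thm:aq}--\ref{thm:cqrupper}) to obtain a hypothesis entailed by \targetOntology, and then package what the learner saw or produced into a batch that a simple decoder can replay. The only noticeable difference is in the \IQ case: the paper keeps the raw positive counterexamples $(\fixedAbox,C(a))$ in the batch and has the algorithm re-execute the body of Algorithm~\ref{a:learningIQ} on them, whereas you encode each learned CI $A\sqsubseteq C$ directly as the example $(\{A(a)\},C(a))$. Your variant is arguably cleaner, since it keeps every ABox in the batch singleton rather than of size $|\fixedAbox|$. For the crux you identify --- that the tree-shaped left-hand sides produced by Lemma~\ref{lem:polyexample} embed homomorphically into $\fixedAbox$ --- the paper simply asserts this as a property of the \textsc{TreeShape} procedure (via the unfold homomorphism of Lemma~\ref{lem:unfold-1} and the observation that minimisation only removes individuals and role assertions; cf.\ the proof of Lemma~\ref{l:outputMinimizePositiveFixed}). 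So you do not need your fallback.

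That fallback, incidentally, would not work as stated: the batch is an unlabelled subset of $\examples$ and the decoder has no oracle access, so it cannot ``re-run'' the conversion of Lemma~\ref{lem:polyexample}, which depends on \emph{both} positive and negative answers to membership queries (e.g., to know when domain or role minimisation may stop). Including only the positive answers does not determine the resulting CI. But since the homomorphism for the tree-shaped ABoxes is exactly what the paper takes as established, your direct encoding already suffices.
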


\mypar{PAC learning}
Let
$\Fmf = (\examples,\set, \hypothesisSpace, \mu)$ be
a learning framework. A \emph{probability distribution} $\prob$ on
$\set$
is a function
$\prob \colon 2^{\set} \to [0, 1] \subset \mathbb{R}$
such that 
$\prob(\bigcup_{i \in I} X_{i}) = \sum_{i \in I} \prob(X_{i})$ for mutually disjoint $X_i$,
where $I$ is a countable set of indices, $X_{i} \subseteq \set$, and
$\prob(\set) = 1$.
Given a target $\target \in \hypothesisSpace$,
let $\EX$ be the oracle that takes no input, and outputs a 
\emph{classified example}
$(e,\lab{e}{t})$, where $e \in \set$ is
sampled according to the probability distribution $\prob$, 
  $\lab{e}{t}=1$, if $e \in \mu(\target) \cap \set$ (\emph{positive example}), and $\lab{e}{t}=0$, otherwise (\emph{negative example}). 
An \emph{example query} is a call to the oracle $\EX$.
A 
\emph{sample} generated by $\EX$ is a (multi-)set of indexed classified examples,
independently and identically 
distributed
according to $\prob$, sampled by calling $\EX$.
A learning framework
$\Fmf$
is \emph{PAC learnable}
if there is a function $f : (0, 1)^{2} \to \mathbb{N}$
and a deterministic algorithm
such that, for every $\epsilon, \delta \in (0, 1) \subset \mathbb{R}$, every probability distribution $\prob$ on $\set$,
and every target
$\target \in \hypothesisSpace$,
given a sample 
of size $m \geq f(\epsilon, \delta)$ generated by $\EX$,
the algorithm
always halts and outputs $h \in \hypothesisSpace$ such that
with probability at least $(1 - \delta)$ over the choice of $m$ examples in $\set$,
we have that $\prob((\mu(h) \oplus \mu(t)) \cap \set) \leq \epsilon$.
If
the time used by
the algorithm
is bounded by a polynomial function $\poly(|\target|, |e|, 1/\epsilon, 1/\delta)$,
where
$e$ is the largest example in the sample, 
then we say that $\Fmf$
is \emph{polynomial time PAC learnable}.
If, in addition, the algorithm is allowed to make
membership queries (where each call to ${\sf MQ}_{\Fmf,\target}$ counts as one step of computation),
we say that $\Fmf$ is \emph{polynomial time PAC learnable with membership queries}.

\begin{restatable}[\cite{angluinqueries},~\cite{MohEtAl}]{theorem}{paclearn}\label{thm:paclearn}
If
$\Fmf$ 
is
in $\PTimeL$,
then $\Fmf$ 
is polynomial
time 
PAC learnable with membership queries.
\end{restatable}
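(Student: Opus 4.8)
The plan is to run the classical ``simulate the exact learner, answer equivalence queries by sampling'' reduction. Let $A$ be a polynomial time exact learning algorithm for $\Fmf$ using membership and equivalence queries, with running time bounded by a polynomial $p(|\target|,|e^\ast|)$, where $e^\ast$ is the largest counterexample seen so far. I would build a PAC algorithm $A'$ with membership queries that executes $A$ unchanged, forwarding every membership query of $A$ directly to ${\sf MQ}_{\Fmf,\target}$, and simulating the $i$-th equivalence query of $A$, issued on a hypothesis $h_i$, by drawing a fresh block of $m_i$ classified examples from $\EX$. For each returned $(e,\lab{e}{t})$, the algorithm checks whether $e$ refutes $h_i$, i.e.\ whether $e\in(\mu(h_i)\oplus\mu(\target))\cap\set$, which it can decide because $\lab{e}{t}$ tells it whether $e\in\mu(\target)$ and membership in $\mu(h_i)$ is efficiently testable (in our OMQA frameworks this is just $\ELH$ instance checking). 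If some drawn $e$ refutes $h_i$, $A'$ hands it back to $A$ as the counterexample and continues; if none does, $A'$ halts and outputs $h_i$.

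For termination and for the polynomial bounds I would argue that, since the simulated membership answers are the true ones and every example handed back to $A$ is a genuine counterexample, the run of $A$ inside $A'$ is a legal run of the exact learner; hence $A$ issues at most $N\le p(|\target|,|e^\ast|)$ equivalence queries, each intermediate hypothesis $h_i$ has size polynomial in $|\target|$, and if $A$ ever reaches a hypothesis $h$ with $\mu(h)\cap\set=\mu(\target)\cap\set$, then no example of $\set$ can refute it, so $A'$ outputs $h$; this establishes that $A'$ always halts. The total number of calls to $\EX$ is $\sum_{i\le N}m_i$, each round performs $N$ refutation checks of cost polynomial in the example size and $|h_i|$, and the remaining work is exactly the running time of $A$; with the choice of $m_i$ below this is polynomial in $|\target|$, the size $|e|$ of the largest example drawn (which bounds $|e^\ast|$), $1/\epsilon$, and $1/\delta$.

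For the accuracy guarantee, note that $A'$ outputs a ``bad'' hypothesis only if, for some round $i$, the surviving hypothesis $h_i$ has error $\prob((\mu(h_i)\oplus\mu(\target))\cap\set)>\epsilon$ yet none of the $m_i$ independent examples drawn at that round hits the symmetric difference; this event has probability at most $(1-\epsilon)^{m_i}\le e^{-\epsilon m_i}$. Setting $m_i=\lceil\tfrac{1}{\epsilon}(\ln\tfrac{1}{\delta}+(i+1)\ln 2)\rceil$ makes this at most $\delta\cdot 2^{-(i+1)}$, and a union bound over $i\ge 1$ bounds the overall failure probability by $\sum_{i\ge 1}\delta\,2^{-(i+1)}=\delta/2\le\delta$. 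Thus with probability at least $1-\delta$ the output $h$ satisfies $\prob((\mu(h)\oplus\mu(\target))\cap\set)\le\epsilon$, which is exactly the PAC requirement.

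I expect the main point to get right is the quantitative bookkeeping: the confidence budget $\delta$ must be spread over an a priori unknown but polynomially bounded number of equivalence-query simulations, which forces the geometrically increasing sample sizes $m_i$ and the (convergent) union bound taken over all rounds rather than a fixed number. A secondary subtlety is checking that embedding $A$ into $A'$ never pushes $A$ outside a legal run, so that both its polynomial resource bounds and its exact-identification guarantee remain applicable, and that the refutation test against $h_i$ is efficiently decidable, which in our setting reduces to entailment checking in $\ELH$.
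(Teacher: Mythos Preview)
Your proposal is correct and follows essentially the same classical Angluin-style simulation as the paper: replace each equivalence query by a block of $m_i$ calls to $\EX$, hand back any refuting example, and choose $m_i \approx \tfrac{1}{\epsilon}(\ln\tfrac{1}{\delta}+i\ln 2)$ so that the failure probabilities form a geometric series summing to at most $\delta$. The only cosmetic differences are your $(i{+}1)$ in place of the paper's $i$ and your explicit remark that the refutation test against $h_i$ must be efficiently decidable, which the paper leaves implicit.
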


However, the converse direction of Theorem~\ref{thm:paclearn} does not hold~\cite{Blum:1994:SDM:196751.196815}. 
The argument in the mentioned paper is based on the assumption that one-way functions exist and cannot 
be easily adapted to serve as a counterexample for OMQA learning frameworks.
Our next result,
showing that the converse direction of Theorem~\ref{thm:paclearn} does not hold in our setting,
does \emph{not} rely on cryptographic assumptions, however, it is representation-dependent. 
Given a sequence ${\ssigma} = \sigma_1\sigma_2 \ldots \sigma_n$, with 
$\sigma_i \in \{r,s\}$, let the expression $\exists \ssigma.C$ stand for 
$\exists \sigma_1.\exists \sigma_2. \ldots .\exists \sigma_n.C$ (clearly, there are $2^n$ expressions of this form).
Consider the OMQA learning framework $\Fmf(L,\fixedAbox,Q)$ 
where  $\fixedAbox=\{ A(a) \}$; 
$Q$ is the query language that extends IQs with a CQ of 
the form $\exists x M(x)$;
and  $L$  is an ontology language allowing only $\ELH_{\sf rhs}$ TBoxes of the form $\Tmc_{\ssigma} = \{ A \sqsubseteq \exists \ssigma.M \} \cup \Tmc_{0}$
to be expressed, 
where
$$\Tmc_{0} = \{ A \sqsubseteq X_{0}, M \sqsubseteq \exists r. M \sqcap \exists s. M \} \ \cup$$
$$\{ X_{i} \sqsubseteq  \exists r. X_{i + 1} \sqcap  \exists s. X_{i +1} \mid 0 \leq i < n \}.$$
It can be shown, with an argument similar to the one used in~\cite[proof of Lemma~8]{DBLP:conf/aaai/KonevOW16} 
(and Theorem~\ref{thm:elhnotpquery} above),
that such framework is not polynomial query learnable.
However, due to the restrictions on the hypothesis space, 
it is polynomial
time 
PAC learnable, even without membership queries.

\begin{restatable}{theorem}{pacnotimplyexact}\label{thm:pacnotimplyexact}

There is a polynomial time
PAC learnable OMQA learning framework that is not
in $\PQuery$.

\end{restatable}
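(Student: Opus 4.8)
The framework witnessing the statement is the one $\Fmf(L,\fixedAbox,Q)$ already defined above, with $\fixedAbox=\{A(a)\}$, $Q=\IQ\cup\{\exists x M(x)\}$, and $\hypothesisSpace=\{\Tmc_{\ssigma}\mid \ssigma\in\{r,s\}^n\}$; note that $|\hypothesisSpace|=2^n$ while $|\Tmc_{\ssigma}|=\bigo(n)$, so that the signature $\Sigma_{\targetOntology}$ given as input fixes $n$ but not $\ssigma$. The plan is to establish two things: (i) $\Fmf(L,\fixedAbox,Q)$ is not in $\PQuery$, and (ii) $\Fmf(L,\fixedAbox,Q)$ is polynomial time PAC learnable, in fact without membership queries. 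Combined, these prove the theorem.

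For (i) I would replay the adversary argument behind Theorem~\ref{thm:elhnotpquery} (in the style of~\cite[proof of Lemma~8]{DBLP:conf/aaai/KonevOW16}) against the family $\{\Tmc_{\ssigma}\}_{\ssigma\in\{r,s\}^n}$. Two facts about the tree-shaped canonical model of $(\Tmc_{\ssigma},\Amc)$ make it go through. First, on $\fixedAbox=\{A(a)\}$ we have $(\Tmc_{\ssigma},\fixedAbox)\models \exists x M(x)$ for every $\ssigma$, while $(\Tmc_{\ssigma},\fixedAbox)\models \exists\ssigma'.M(a)$ iff $\ssigma'=\ssigma$ (more generally, whether $(\Tmc_{\ssigma},\fixedAbox)\models C(a)$ for an $\IQ$ concept $C$ is either $\ssigma$-independent, when $(\Tmc_{0},\fixedAbox)\models C(a)$, or determined by whether $\ssigma$ is the length-$n$ prefix of some branch of $C$ whose continuation uses only the concept name $M$); hence an inseparability query on a hypothesis $\Tmc_{\ssigma'}$ can always be answered with the counterexample $(\fixedAbox,\exists\ssigma'.M(a))$, which reveals only $\ssigma\neq\ssigma'$ and thus kills a single candidate. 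Second, a similar analysis of arbitrary membership queries $(\Amc,q)$ shows that the truth value of $(\Tmc_{\ssigma},\Amc)\models q$, as a function of $\ssigma$, is either constantly true or of the form $\ssigma\in S$ for a set $S$ of at most $|q|^2$ words: an $M$-atom of $q$ can only be sent into the anonymous part by descending one of the $\ssigma$-paths dangling from the $A$-individuals of $\Amc$, which pins $\ssigma$ to one of at most $|q|\cdot|\Amc|$ candidates, and the intermediate path nodes carry no concept names, so they add nothing that the $\Tmc_{0}$-canonical model does not already provide. An adversary answering each membership query ``no'' whenever some surviving candidate lies outside the corresponding $S$, and each inseparability query as above, therefore eliminates at most polynomially many candidates over polynomially many polynomially sized queries; since $2^n$ is exponential in $|\targetOntology|$, more than one candidate survives forever, so no learner can always output the exact target.

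For (ii) I would invoke the classical realizable PAC bound for finite hypothesis classes~\cite{MohEtAl,Valiant}: because $\ln|\hypothesisSpace|=n\ln 2=\bigo(|\targetOntology|)$ and the target lies in $\hypothesisSpace$, a sample of size $m=\bigo\big(\tfrac{1}{\epsilon}(|\targetOntology|+\ln\tfrac{1}{\delta})\big)$ drawn from $\EX$ suffices for empirical risk minimisation (output any $\hypothesisOntology\in\hypothesisSpace$ consistent with the whole sample) to satisfy $\prob((\mu(\hypothesisOntology)\oplus\mu(\targetOntology))\cap\set)\le\epsilon$ with probability at least $1-\delta$. It remains to show that ERM runs in polynomial time without membership queries. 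From each classified example $((\fixedAbox,q),\ell)$ the algorithm extracts a constraint on $\ssigma$: it first decides (polynomial-time $\EL$ reasoning) whether $(\Tmc_{0},\fixedAbox)\models q$; if so, $q$ is $\ssigma$-independent and contributes nothing; otherwise $q$ has the form $C(a)$ and $(\Tmc_{\ssigma},\fixedAbox)\models q$ iff $\ssigma\in S_q$, where $S_q$ is the set (computable in polynomial time, of size at most $|q|^2$) of length-$n$ prefixes of the $M$-only branches of $C$, so $\ell=1$ yields the constraint $\ssigma\in S_q$ and $\ell=0$ yields $\ssigma\notin S_q$. If some positive example contributes such an $S_q$, test each of its at most $|q|$ members against all the other constraints and output a surviving $\Tmc_{\ssigma}$; otherwise only polynomially many words are forbidden, and one finds a permitted length-$n$ word with a trie. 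Realizability guarantees that a solution exists, and the whole procedure runs in time polynomial in $|\targetOntology|$, the size of the largest sample example, $1/\epsilon$ and $1/\delta$.

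The step I expect to be the real obstacle is establishing the two canonical-model claims used in (i): that entailment over $(\Tmc_{\ssigma},\Amc)$ depends on $\ssigma$ only through the dangling $\ssigma$-paths, and that this dependence has the ``cylinder'' shape bounding how many candidates a single query can kill. Both are routine given that $\EL$ knowledge bases have tree-shaped canonical models and that the nodes $d_1,\dots,d_{n-1}$ along each $\ssigma$-path carry no concept names, but they must be formulated and checked with care; part (ii) is then bookkeeping on top of the textbook finite-class PAC bound, the only non-obvious point being the polynomial-time computability of the constraint sets $S_q$, which follows from the tree shape of $\IQ$ queries.
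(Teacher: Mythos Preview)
Your high-level plan matches the paper's: (i) reuse the adversary of~\cite[Lemma~8]{DBLP:conf/aaai/KonevOW16} against $\{\Tmc_{\ssigma}\}$, and (ii) combine the finite-class PAC bound ($\ln|\hypothesisSpace|=O(|\targetOntology|)$) with polynomial-time ERM. The paper is terser---it simply cites Konev et al.\ for (i) and for (ii) gives a two-line consistent learner (start from $\Tmc_0$; if a positive example of the literal shape $(\{A(a)\},\exists\ssigma.M(a))$ appears, output $\Tmc_{\ssigma}$)---so your added detail is reasonable. There is, however, a real error in your characterisation of the $\ssigma$-dependence of $(\Tmc_{\ssigma},\{A(a)\})\models C(a)$. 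You examine root-to-leaf branches of $T_C$ independently and put $\ssigma\in S_q$ whenever $\ssigma$ is the length-$n$ role-prefix of \emph{some} branch with $M$-only continuation; but the homomorphism must embed the \emph{entire} tree. With $n=2$ and $C=\exists r.(\exists r.M\sqcap\exists s.M)$ your rule gives $S_q=\{rr,rs\}$, yet no $\ssigma$ satisfies $C$: the depth-$1$ node on the $\ssigma$-path has a single successor and cannot witness $\exists r.M\sqcap\exists s.M$. So your ``iff'' is only the inclusion $T_q\subseteq S_q$.

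This over-approximation is harmless for the adversary bound in (i)---answering ``no'' still removes at most $|S_q|$ candidates---but it breaks your ERM in (ii): testing a candidate $\ssigma$ against the \emph{constraint} ``$\ssigma\in S_{q'}$'' for another positive example $q'$ can succeed spuriously, and the learner may output an inconsistent hypothesis. The fix is easy and close to what you already wrote: use the set of role-words of depth-$n$ nodes of $T_C$ merely to \emph{enumerate} candidates (it contains $T_q$, since any homomorphism that cannot be redirected into the $X$-tree must send some depth-$n$ node of $T_C$ to the $M$-root at the end of the $\ssigma$-path, pinning $\ssigma$ to that node's root-path word), and then verify each candidate against every labelled example by a direct polynomial-time $\EL$ entailment check $(\Tmc_{\ssigma},\{A(a)\})\models q$. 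With this adjustment both parts go through.
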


A learning framework $\Fmf = (\examples, \set, \hypothesisSpace, \mu)$ 
\emph{shatters} a set of examples $\setexamples\subseteq \set$ if 
$|\{\mu(h)\cap \setexamples\mid h\in\hypothesisSpace\}|=2^{|\setexamples|}$. 
The \emph{VC-dimension}~\cite{Vapnik:1995:NSL:211359} of
\Fmf, denoted $\vc{\Fmf}$, is the 
maximal size of a set $\setexamples\subseteq \set$ 
such that $\Fmf$ shatters $\setexamples$. If \Fmf can shatter arbitrarily large sets then
\Fmf has infinite VC-dimension.
\begin{example}\upshape
For  $ n\in \Nbb $, let 
$$\fixedAbox^n = \{ r(a_i,a_{i+1}),s(a_i,a_{i}) \mid 1\leq i < n  \} \cup \{r(a_n,a_1)\} .$$ 
Each
$ a_i $ can be identified by
$C_i= \exists r^{n-i}.\exists s.\top $,
since
$ a_i\not\in C^{\I_{\fixedAbox^n}}_i $ ($\exists r^k$ is a shorthand 
for $k$ nestings of the form $\exists r$).
E.g., $ a_1$ is the only individual not in $ (\exists r.\exists s.\top)^{\I_{\fixedAbox^2}}$ (see Figure~\ref{fig:vcdimension}). 
 For all $n\in\Nbb$, 
$\Fmf(\ELH,\fixedAbox^n,\AQ)$  shatters $\{(\fixedAbox^n,A(a_i))\mid 1\leq i \leq n\}$. 
 This does
 not
hold if we add $s(a_n,a_n)$
 to
 $\fixedAbox^n$.
 \hfill {\mbox{$\triangleleft$}}
\end{example}

\begin{figure}
	\centering
	\begin{tikzpicture}
	\tikzset{every node/.style={circle,fill=black,inner sep=0pt,minimum size=4pt}}
	\tikzset{every edge/.append style={every node/.style={circle,fill=white,inner sep=0pt,minimum size=0pt}, >=stealth}}

	\node (1) at (0,0) {} ;
	\node[fill=white,opacity=0,text opacity=1] () at (0,0.2) { $ a_1  $} ;
	\node (2) at (2,0) {} ;
	\node[fill=white,opacity=0,text opacity=1] () at (2,0.2) { $ a_2  $} ;
	\draw[->, >=stealth] (1) to (2) ;
	\node[fill=white,opacity=0,text opacity=1] () at (1,0.15) { $ r  $} ;
	
	\draw[->, >=stealth]  (1) edge [loop left] node[fill=white,opacity=0,text opacity=1] {$ s $} () ;
	
	\draw[->, >=stealth] (2) edge[bend left=25] node[label={[xshift=.0mm, yshift=.0mm]$ r $}]{} (1);
	\end{tikzpicture}
	\caption{For $ \fixedAbox^2 $, 
	$ \setexamples = \{(\fixedAbox,A(a_1)), (\fixedAbox,A(a_2)) \} $  is shattered because 
		we can find in \hypothesisSpace:  
		$ h_1 =\{\exists s.\top \sqcap \exists r.\exists s.\top\sqsubseteq A \} $, 
		$ h_2 =\{\exists r.\exists s.\top \sqsubseteq A \} $, 
		$ h_3 = \{\exists s.\top\sqsubseteq A \} $, $ h_4 = h_2 \cup h_3$. }
	\label{fig:vcdimension}
\end{figure}

For discrete cases, in particular, for fragments of first-order logic, 
the lower bounds obtained with the VC-dimension cannot be larger than 
the size of the learned expressions assuming a reasonable encoding scheme~\cite{DBLP:journals/ml/AriasK06}. 
The authors argue that many VC-dimension bounds in the literature showing exponential or infinite growth 
are in terms 
of some parameters (number of clauses, etc.) determining the size of the target, 
while other parameters (number of literals, etc.)
are ignored. 
Let $\Fmf^{m}=(\examples, \set, \hypothesisSpace, \mu)$ be a learning framework 
where the string size of the elements of $\hypothesisSpace$ is bounded by $m$.  
Since the VC-dimension is bounded by a logarithm of $|\hypothesisSpace|$ (for $\hypothesisSpace$ finite),
 $\vc{\Fmf^{m}}=O(m)$~\cite{Vapnik:1995:NSL:211359}.

\begin{proposition}
For all $m\in\Nbb$, $\Fmf^{m}(\ELH,\fixedAbox,\CQ)$ is  
 PAC learnable with a polynomial number of example queries. 
\end{proposition}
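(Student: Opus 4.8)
The plan is to obtain the result directly from the fundamental theorem of PAC learning, using the VC-dimension bound stated immediately above. First I would note that, since every TBox in $\hypothesisSpace$ has string size at most $m$ and is written over a fixed finite signature together with the finitely many logical symbols of \ELH, the hypothesis space is finite, with $|\hypothesisSpace| = 2^{O(m)}$; hence $\vc{\Fmf^m(\ELH,\fixedAbox,\CQ)} = O(m)$, exactly as observed above. This counting step does not depend on the query language, so the \CQ case is no harder here than \AQ or \IQ: the features of \CQs that obstruct polynomial query learnability (Theorem~\ref{thm:elhnotpquery}) become irrelevant once the hypothesis size is capped, because the learner never inspects queries directly, only their classifications.

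Next I would describe the learner. Writing $d = \vc{\Fmf^m(\ELH,\fixedAbox,\CQ)} = O(m)$, set $f(\epsilon,\delta) = O\bigl(\tfrac1\epsilon(d\log\tfrac1\epsilon + \log\tfrac1\delta)\bigr)$, which is polynomial in $m$, $1/\epsilon$ and $1/\delta$. The algorithm calls \EX that many times to obtain a sample, and then returns any hypothesis $h \in \hypothesisSpace$ that classifies every example in the sample correctly, located by exhaustive search over the finite set $\hypothesisSpace$. Such an $h$ always exists, since the target $\targetOntology$ itself is consistent with any sample it labels, so the instance is realizable. The search is not time-bounded --- which is precisely why the statement claims only a polynomial number of example queries rather than polynomial-time PAC learnability, and why Theorem~\ref{thm:paclearn} is not applicable here (as $\Fmf(\ELH,\fixedAbox,\CQ) \notin \PQuery$) --- but the algorithm always halts, which is all that the definition of PAC learnability requires.

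Correctness then follows from the standard realizable-case sample bound~\cite{Vapnik:1995:NSL:211359,MohEtAl}: for a hypothesis class of VC-dimension $d$ and any probability distribution, a sample of size $f(\epsilon,\delta)$ drawn i.i.d.\ has the property that, with probability at least $1-\delta$, every hypothesis consistent with the sample has error at most $\epsilon$; applied to $\prob$ on $\set$, this yields $\prob\bigl((\mu(h)\oplus\mu(\targetOntology))\cap\set\bigr)\le\epsilon$ for the returned $h$. Since the number of example queries equals the sample size, it is polynomial in $m$, $1/\epsilon$ and $1/\delta$, which completes the argument.

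The only points that need care are routine. Measurability of the relevant events is automatic, since $\prob$ is defined on all of $2^\set$ and the family $\{\mu(h)\cap\set \mid h\in\hypothesisSpace\}$ is finite; and the realizability assumption, which is what gives the $O(d/\epsilon)$ rather than $O(d/\epsilon^2)$ sample size, is guaranteed by $\targetOntology\in\hypothesisSpace$. I expect the most tedious part to be merely pinning down the explicit constant in $f(\epsilon,\delta)$; there is no genuine obstacle.
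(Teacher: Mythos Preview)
Your proposal is correct and follows exactly the line the paper takes: the paper does not give a detailed proof of this proposition, but simply relies on the observation immediately preceding it that $\vc{\Fmf^m}=O(m)$ (from $|\hypothesisSpace|=2^{O(m)}$) together with the standard realizable VC sample-complexity bound, and then remarks afterwards that polynomial \emph{time} PAC learnability would additionally require computing a consistent hypothesis efficiently. Your write-up just makes these steps explicit, including the exhaustive-search consistency learner and the reason only sample complexity (not time) is claimed; there is nothing to correct.
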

Since the sample complexity (number of classified examples) is polynomial in the size of 
the target, polynomial time PAC learnability amounts to showing that one can compute a hypothesis in $\hypothesisSpace$ 
that is consistent with the classification of the examples
in polynomial time.  
However, even if \fixedAbox is fixed, checking whether 
$(\fixedAbox,q)$ is a positive example for a hypothesis \Hmc
is \NP-hard if the underlying structure of $\fixedAbox$ is non-bipartite~\cite{Hell:1990:CHC:80298.80312}.
So (unless $\textsc{P}\xspace=\NP$) there is not much hope for polynomial time
learnability, even with membership queries, since in this case 
one may not be able to convert the
CQ
into an 
IQ
(as we did in Theorem~\ref{thm:cqrupper}).

\section{Conclusion}\label{sec:discussion}   
 
 We introduced the OMQA learning setting and investigated 
 the complexity of learning \ELH ontologies in this setting 
 with  various 
 query languages. 
We then considered what happens when the data changes 
and adaptations to 
settings where 
the algorithm learns from classified data, limiting interactions with oracles. 
Our positive result for \IQ-inseparable \ELH TBoxes paves the way for further studies 
on the complexity of learning ontologies formulated in   more expressive languages. 
We leave the problem of exactly learning \ELH TBoxes 
with \CQrs as an open problem. 
Learning with a more expressive query 
language is \emph{not easier} because the oracle can formulate counterexamples 
which are not informative. Neither it is more difficult because on the other hand, with 
a more expressive language,  
the learner can 
pose more informative membership queries.
It would
also
be interesting to 
investigate a similar data model in which the ABox
is fixed
for all the examples,
so that 
the data pool contains examples in the form of queries alone.

\section*{Acknowledgments}
This research has been supported by the Free University of Bozen-Bolzano through 
the projects PACO and MLEARN.

\bibliographystyle{aaai}
\fontsize{9.6pt}{10.6pt}\selectfont
\bibliography{references_shrink}

\iftechrep
\clearpage
 \appendix
 \section{Proofs for Section ``Polynomial Learnability''} 
We introduce some basic definitions and lemmas that will be used in our proofs. 
We are going to use  the definition of the 
tree interpretation of a concept, the notion of a canonical model, of a homomorphism, 
and of a simulation.

\begin{definition}[Canonical model of an ABox] The canonical model $\I_\A=(\Delta^{\I_\A},\cdot^{\I_\A})$ of an ABox \A is defined as follows:
	\begin{itemize}
		\item $\Delta^{\I_\A} =  \individuals{\A}$;
		\item $a^\I = a$ for all individuals $a \in \individuals{\A} $;
		\item $A^{\I_\A} = \{a \mid A(a) \in \A\}$;
		\item $r^{\I_\A} = \{(a,b) \mid r(a,b) \in \A \}$.
	\end{itemize}
\end{definition}

A path in a \ELH concept expression $C$ is a finite sequence of the form 
$C_0 \cdot r_0 \cdot C_1 \cdot r_1 \cdots r_n \cdot C_n $ 
where $C_i$ is a concept expression, $r_i$ is a role name, $C_0 = C$, and, for  all $i \in \{0,\ldots, n-1\}$, the 
concept expression $\exists r_{i+1}.C_{i+1}$ is a top-level conjunct of $C_i$. 
The set $\paths(C)$ contains all paths in $C$.
The set $\tail(p) = \{A\mid A \in\NC$ \text{ is a top-level conjunct of } $C_k$ \text{ where } $C_k$ \text{ is the last concept expression in } $p\}$.

\begin{definition}[Tree interpretation] Let $C$ be an \ELH concept expression.
	The tree interpretation $\I_C=(\Delta^{\I_C},\cdot^{\I_C})$ of a concept $C$ is defined as follows:
	\begin{itemize}
		\item $\Delta^{\I_C} = \paths(C)$;
		\item $A^{\I_C} = \{p \in \paths(C) \mid A \in \tail(C)\}$; 
		\item $r^{\I_C} = \{p \in \paths(C) \times \paths(C)  \mid \text{there exists }p' = p \cdot r \cdot D \text{ and } p,p'\in\Delta^{\I_C} \text{ for some concept expression }  D \} $.
	\end{itemize}
	We may denote the root path $C$ of $\I_C$ with $\rho_C$.
\end{definition}

\begin{definition}[Canonical model of a KB]
	\label{d:canonicalModel}
	The canonical model $\I_{\T,\A}$ of an \ELH KB $(\T,\A)$ is defined inductively. 
	For each $ r\in \NR $, $\I_0$ is defined by extending the canonical model $\I_\A$ of an ABox \A with
	\[r^{\I_0} = \{(a,b) \mid s(a,b) \in \A\text{ and } \T\models s\sqsubseteq r \}.\]
	Assume that $ \I_i $ has been defined. We define $ \I_{i+1} $ as follows.	
	If there exist $C \sqsubseteq D \in \T$ such that $p
	\in \Delta^{\I_i}, p \in C^{\I_i}, p \not\in D^{\I_i}$ and $D = 
	\bigsqcap_{1 \leq j \leq l} A_j \sqcap  \bigsqcap_{1 \leq j' \leq l'} \exists s_{j'}.D_{j'}$, we create $\I_{i+1}$ in the following way:
	\begin{itemize}
		\item $\Delta^{\I_{i+1}} = \Delta^{\I_i} \cup \{p\cdot s_{j'} \cdot q \mid q \in \paths(D_{j'}), 1 \leq j' \leq l' \}$;
		\item $A^{\I_{i+1}} = A^{\I_{i}} \cup \\
		\{p\cdot s_{j'} \cdot q \mid q \in \paths(D_{j'}),\; A \in \tail(q),\; 1 \leq j' \leq l' \} 
		\cup \\
		\{ p \mid A_j = A, 1 \leq j \leq l\} $;
		\item $r^{\I_{i+1}} = r^{\I_{i}} \cup \\
		\{(p\cdot s_{j'} \cdot q),(p\cdot s_{j'} \cdot q') \mid (q,q') \in s^{\I_{D_{j'}}},  \T \models s\sqsubseteq r , 1 \leq j' \leq l' \}  
		\cup \\
		\{ (p,p\cdot s_{j'} \cdot D_{j'}) \mid \T \models s_{j'} \sqsubseteq r, 1 \leq j' \leq l'\}$.
	\end{itemize}
	Finally, the canonical model of a KB $(\T,\A)$ is $\I_{\T,\A} = \bigcup_{i=0}^\infty \I_i$.
\end{definition}

\begin{definition}[Homomorphism] Let $\Imc$ be an interpretation and let \treedef{C} be the tree representation of concept expression $ C $. 
	A homomorphism $h: \tree{C} \rightarrow  \Imc$ is a   function from $\treeV{C}$ to $\Delta^\Imc$ such that 
	\begin{itemize}
		\item for all $\nu \in \treeV{C}$ and $A \in \NC$, if $A \in \treel{C}(\nu)$ then $h(\nu) \in A^\Imc$;
		\item for all $\nu_1,\nu_2 \in\treeV{C}$ and $r \in \NR$, if $\treel{C}(\nu_1,\nu_2) =r$ then $(h(\nu_1),h(\nu_2)) \in r^\Imc$.
	\end{itemize}
\end{definition}

\begin{definition}[Simulation] Let $\I=(\Delta^\I,\cdot^\I),\Jmc=(\Delta^\Jmc,\cdot^\Jmc)$ be two interpretations. 
A simulation S is a non-empty relation $S\subseteq \Delta^\I \times \Delta^\Jmc$ satisfying the following properties: 
	\begin{itemize}
		\item for all concept names $A \in \NC$ and all $(d,e) \in S $ it holds that if $\\ d \in A^\I$ then  $e \in A^\Jmc$.
		\item for all role names $r \in \NR$, if $(d,e) \in S$ and $(d,d_1) \in r^\I, d_1 \in \Delta^\I$ then there exists $e_1 \in \Delta^\Jmc$ such that $(e,e_1) \in r^\Jmc$ and $(d_1,e_1) \in S$.
	\end{itemize}
	If \Imc and \Jmc are tree interpretations, we write $\Imc\Rightarrow\Jmc$ 
	if there is a simulation $S\subseteq \Delta^\I \times \Delta^\Jmc$ containing 
	the pair with the roots of \Imc and \Jmc.
\end{definition}

The following  standard lemmas 
are going to be used throughout the remaining   of this section in order to show properties of the algorithms.

\begin{lemma}
	\label{lemmabisimulation}
	If $ (\I_1,d_1) \sim (\I_2,d_2)$, then the following holds for all \ELH concept expressions $ C $: 
	$ d_1\in C^{\I_1} $ iff $ d_2 \in C^{\I_2} $.
	Moreover, given an arbitrary TBox
	\T and  ABoxes $ \A_1 $ and $ \A_2 $,  
 $ (\I_{\A_1},d_1) \sim (\I_{\A_2},d_2)$ implies that, for all \ELH concept expressions $ C $: 
 $ d_1\in C^{\I_{\T,\A_1}} $ iff $ d_2 \in C^{\I_{\T,\A_2}} $.
\end{lemma}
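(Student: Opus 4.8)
The plan for the first statement is a routine structural induction on the concept expression $C$, for an arbitrary bisimulation $\Zmc \subseteq \Delta^{\I_1} \times \Delta^{\I_2}$ with $(d_1,d_2) \in \Zmc$. For $C = \top$ both sides hold trivially; for $C = A \in \NC$ this is exactly condition~(1) of bisimulation; for $C = D_1 \sqcap D_2$ it follows by applying the induction hypothesis to $D_1$ and to $D_2$. For $C = \exists r.D$, if $d_1 \in (\exists r.D)^{\I_1}$ I would pick a successor $d_1'$ of $d_1$ with $(d_1,d_1') \in r^{\I_1}$ and $d_1' \in D^{\I_1}$; condition~(2) supplies a successor $d_2'$ of $d_2$ with $(d_2,d_2') \in r^{\I_2}$ and $(d_1',d_2') \in \Zmc$, and the induction hypothesis gives $d_2' \in D^{\I_2}$, so $d_2 \in (\exists r.D)^{\I_2}$; the converse direction is symmetric, using condition~(3) in place of~(2).

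For the second statement, the plan is to reduce to the first one by showing that any bisimulation $\Zmc$ between $\I_{\A_1}$ and $\I_{\A_2}$ with $(d_1,d_2) \in \Zmc$ extends to a bisimulation $\Zmc^{\ast}$ between the canonical models $\I_{\T,\A_1}$ and $\I_{\T,\A_2}$ that still contains $(d_1,d_2)$; applying the first statement to $\I_{\T,\A_1}$ and $\I_{\T,\A_2}$ then yields the claim. I would first note that $\Zmc$ is already a bisimulation between the role-completed interpretations $\I_0$ built from $\A_1$ and from $\A_2$ in Definition~\ref{d:canonicalModel}: concept-name extensions are unchanged, and a pair of matching $s$-edges provided by $\Zmc$ also witnesses the matching $r$-edges that $\I_0$ adds whenever $\T \models s \sqsubseteq r$. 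I would then extend $\Zmc$ alongside the inductive construction of the canonical models: assuming $\Zmc_i$ is a bisimulation between the $i$-th stages $\I_i$ (from $\A_1$) and $\I_i'$ (from $\A_2$), a chase step triggered by some $C \sqsubseteq D \in \T$ at an element $p$ on the $\A_1$-side is mirrored by the corresponding step at every $\Zmc_i$-related element $p'$ on the $\A_2$-side; this is legitimate because, by the first statement, the set of applicable pairs $(p, C \sqsubseteq D)$ is invariant under $\Zmc_i$ (both the left- and right-hand side of a CI are concept expressions, which are evaluated by looking only downwards), and the fresh copies of $\paths(D_{j'})$ that get attached depend only on the shared TBox and not on $p$, so they can be related node by node. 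Setting $\Zmc^{\ast} = \bigcup_i \Zmc_i$ then gives the required bisimulation.

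The main obstacle is making this last induction precise, because the nondeterministic chase of Definition~\ref{d:canonicalModel} may apply pairs $(p, C \sqsubseteq D)$ in different orders on the two sides, so the stages $\I_i$ and $\I_i'$ need not correspond literally. I would handle this either by fixing one fair enumeration of the applications and driving the $\A_2$-side chase by the $\Zmc_i$-image of the $\A_1$-side choices, or, more robustly, by defining $\Zmc^{\ast}$ directly on the final canonical models: relate an anonymous element of $\I_{\T,\A_1}$ rooted at an individual $a$ to the corresponding anonymous element of $\I_{\T,\A_2}$ rooted at any $b$ with $(a,b) \in \Zmc$, using that $a$ and $b$ trigger the same CIs and hence carry identical anonymous subtrees. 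Monotonicity of the chase, i.e.\ that applications never delete elements or assertions, is what guarantees that $\Zmc^{\ast}$ satisfies the back-and-forth conditions of a bisimulation.
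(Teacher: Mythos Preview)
The paper does not actually prove this lemma; it is introduced together with Lemmas~\ref{l:homomorhelp} and~\ref{l:subsumptionHomomorphism} as one of ``the following standard lemmas'' and is stated without argument. So there is no proof in the paper to compare against.

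Your proposal is sound. The first part is the textbook structural induction, and nothing more is needed. For the second part, your reduction---lift the bisimulation $\Zmc$ from $\I_{\A_1},\I_{\A_2}$ to a bisimulation $\Zmc^{\ast}$ between the canonical models and then invoke the first part---is the natural route, and your observation that the role-completion step $\I_0$ preserves the bisimulation is correct. You also correctly identified the only genuine obstacle (the nondeterministic order of chase applications in Definition~\ref{d:canonicalModel}) and gave two viable fixes. Of the two, the direct definition of $\Zmc^{\ast}$ on the final models is the cleaner one: since the anonymous part attached below an individual $a$ in $\I_{\T,\A_1}$ depends only on the set of \ELH concepts satisfied by $a$ (which, by the first part, is shared with every $\Zmc$-related $b$), the subtrees below $a$ and $b$ are isomorphic, and relating them node by node yields the required back-and-forth conditions without any bookkeeping about stages. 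If you go the stage-by-stage route instead, you should also note that the result of the chase is independent of the order of rule applications (a standard fact for \EL canonical models), so that driving the $\A_2$-side by the $\Zmc_i$-image of the $\A_1$-side choices really does yield $\I_{\T,\A_2}$ in the limit.
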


\begin{lemma}
	\label{l:homomorhelp}
	Let $ C $ be an \ELH concept expression and let $ \I $ be an interpretation with $ d\in\Delta^\I $. Then, 
	$ d\in C^\I $ if, and only if, there is a homomorphism $ h: \tree{C} \rightarrow \I $ such that $ h(\rho_C)=d $.
\end{lemma}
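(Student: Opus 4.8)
The plan is to prove the equivalence by \emph{structural induction} on the concept expression $C$, mirroring the three clauses in the inductive definition of the tree representation $\tree{C}$. In the base cases there is almost nothing to do: if $C=\top$, then $\tree{C}$ is the single node $\rho_C$ with $\treel{C}(\rho_C)=\emptyset$, so every map sending $\rho_C$ to $d$ is a homomorphism, and this matches $d\in\top^\I=\Delta^\I$, which always holds; if $C=A$ with $A\in\NC$, then $\tree{C}$ is the single node $\rho_C$ with $\treel{C}(\rho_C)=\{A\}$, and a map $h$ with $h(\rho_C)=d$ is a homomorphism iff $d\in A^\I$, which is exactly $d\in C^\I$.

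For the case $C=\exists r.D$, recall that $\tree{C}$ is obtained from $\tree{D}$ by adding a fresh root $\rho_C$ (carrying no concept name) and an $r$-labelled edge $(\rho_C,\rho_D)$. For the forward direction, if $d\in(\exists r.D)^\I$ then there is $e\in D^\I$ with $(d,e)\in r^\I$; the induction hypothesis yields a homomorphism $h'\colon\tree{D}\to\I$ with $h'(\rho_D)=e$, and setting $h:=h'\cup\{\rho_C\mapsto d\}$ gives a homomorphism of $\tree{C}$, since the only extra constraint is the edge $(\rho_C,\rho_D)$ with label $r$, satisfied because $(d,e)\in r^\I$, and $\rho_C$ bears no label. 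Conversely, given a homomorphism $h\colon\tree{C}\to\I$ with $h(\rho_C)=d$, its restriction to $\treeV{D}$ is a homomorphism of $\tree{D}$, so by the induction hypothesis $h(\rho_D)\in D^\I$; combined with the edge constraint $(d,h(\rho_D))\in r^\I$, this gives $d\in(\exists r.D)^\I$.

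For the case $C=D_1\sqcap D_2$, recall that $\tree{C}$ is the disjoint union of $\tree{D_1}$ and $\tree{D_2}$ with the two roots identified into $\rho_C$ and $\treel{C}(\rho_C)=\treel{D_1}(\rho_{D_1})\cup\treel{D_2}(\rho_{D_2})$. If $d\in(D_1\sqcap D_2)^\I=D_1^\I\cap D_2^\I$, the induction hypothesis gives homomorphisms $h_1,h_2$ of $\tree{D_1},\tree{D_2}$ with $h_1(\rho_{D_1})=h_2(\rho_{D_2})=d$; since the subtrees are disjoint except for the shared root, $h:=h_1\cup h_2$ is a well-defined map, and it is a homomorphism: all edges and all non-root labels are inherited from $h_1$ or $h_2$, and for a concept name $A$ in $\treel{C}(\rho_C)$ we have $A\in\treel{D_i}(\rho_{D_i})$ for some $i$, whence $h(\rho_C)=d\in A^\I$ because $h_i$ is a homomorphism. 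Conversely, the restriction of a homomorphism $h$ of $\tree{C}$ with $h(\rho_C)=d$ to $\treeV{D_i}$ is a homomorphism of $\tree{D_i}$ (each constraint of $\tree{D_i}$ is among those of $\tree{C}$, using $\treel{D_i}(\rho_{D_i})\subseteq\treel{C}(\rho_C)$), so the induction hypothesis gives $d\in D_i^\I$ for $i=1,2$, i.e.\ $d\in(D_1\sqcap D_2)^\I$.

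The only point that needs a little care is the conjunction case: one must make precise that $\tree{D_1}$ and $\tree{D_2}$ embed into $\tree{D_1\sqcap D_2}$ sharing exactly the root, so that the union of the two homomorphisms and, in the other direction, the restrictions are well defined, and that the union of root labels is handled correctly on both sides. Beyond this bookkeeping, the argument is a routine unfolding of the semantics of $\top$, $\sqcap$, and $\exists r.{\cdot}$ against the defining clauses of the tree representation, so I do not anticipate a serious obstacle.
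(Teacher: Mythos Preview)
Your proof is correct. The paper itself does not supply a proof of this lemma: it lists it among several ``standard lemmas'' used later in the appendix, without argument. Your structural induction on $C$ is precisely the expected folklore proof, and each case is handled properly, including the bookkeeping for the shared root in the conjunction case.
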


\begin{lemma}
	\label{l:subsumptionHomomorphism}
	Let $C$ be an \ELH concept expression, \T a TBox and \A an ABox. The following statements are equivalent:
	\begin{enumerate}
		\item $\T\models C \sqsubseteq D$;
		\item $\rho_C \in D^{\I_{\T,C}}$;
		\item There is a homomorphism $h: \tree{D} \rightarrow \I_{\T,C} $ such that $h(\rho_D)=\rho_C$.
	\end{enumerate}
	
\end{lemma}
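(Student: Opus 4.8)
The plan is to first dispatch the cheap equivalence and then reduce the rest to two standard properties of the canonical model $\I_{\T,C}$: that it is itself a model of \T in which $\rho_C$ satisfies $C$, and that it maps homomorphically, preserving $\rho_C$, into every model of \T that realises $C$ at the image of $\rho_C$. The equivalence of (2) and (3) is immediate: applying Lemma~\ref{l:homomorhelp} with the interpretation $\I_{\T,C}$ and the element $\rho_C$ says precisely that $\rho_C \in D^{\I_{\T,C}}$ holds iff there is a homomorphism $h \colon \tree{D} \to \I_{\T,C}$ with $h(\rho_D) = \rho_C$. So it remains to prove $(1) \Leftrightarrow (2)$.

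For $(1) \Rightarrow (2)$ I would first record the two soundness facts. That $\I_{\T,C} \models \T$ follows from the canonical-model construction (Definition~\ref{d:canonicalModel}, applied with the tree interpretation $\I_C$ in place of $\I_{\A}$): the base step puts $(a,b)$ into $r^{\I_0}$ for every $s(a,b)$ with $\T \models s \sqsubseteq r$, so all RIs hold and remain satisfied, and the inductive step applies, in a fair manner, every CI $C' \sqsubseteq D'$ to every path witnessing $C'$ but not $D'$, so in the limit no violation of a CI survives. That $\rho_C \in C^{\I_{\T,C}}$ holds because the identity is a homomorphism from $\tree{C}$ into $\I_C$ mapping $\rho_C$ to itself, the construction only adds elements and enlarges the extensions of concept and role names (so membership of an element in an \EL-concept is preserved), and Lemma~\ref{l:homomorhelp} then yields $\rho_C \in C^{\I_{\T,C}}$. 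Given these, if $\T \models C \sqsubseteq D$ then $C^{\I_{\T,C}} \subseteq D^{\I_{\T,C}}$ because $\I_{\T,C} \models \T$, hence $\rho_C \in D^{\I_{\T,C}}$.

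For $(2) \Rightarrow (1)$, I would assume $\rho_C \in D^{\I_{\T,C}}$ and, via Lemma~\ref{l:homomorhelp}, fix a homomorphism $h \colon \tree{D} \to \I_{\T,C}$ with $h(\rho_D) = \rho_C$. Let $\I$ be any model of \T and $d \in C^{\I}$; the goal is $d \in D^{\I}$. Lemma~\ref{l:homomorhelp} gives a homomorphism $g \colon \tree{C} \to \I$ with $g(\rho_C) = d$, and the key step is to extend $g$ to a homomorphism $\widehat{g} \colon \I_{\T,C} \to \I$ still sending $\rho_C$ to $d$, by induction on the stages $\I_0 \subseteq \I_1 \subseteq \cdots$ of the construction: at the base step one uses $\I \models \T$ (the RIs) to route the role edges added for role inclusions; at a step triggered by a CI $C' \sqsubseteq D'$ at a path $p$, the induction hypothesis together with Lemma~\ref{l:homomorhelp} shows that $\widehat{g}$ maps $p$ to an element of $\I$ satisfying $C'$, hence also $D'$ since $\I \models \T$, which — applying Lemma~\ref{l:homomorhelp} to the existential conjuncts of $D'$ — supplies the elements of $\I$ onto which the paths freshly added below $p$ are to be mapped. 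Composing, $\widehat{g} \circ h \colon \tree{D} \to \I$ is a homomorphism with $(\widehat{g} \circ h)(\rho_D) = d$, so $d \in D^{\I}$ by Lemma~\ref{l:homomorhelp}; since $\I$ and $d$ were arbitrary, $\T \models C \sqsubseteq D$.

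The hard part will be this universality step — building $\widehat{g}$ and checking that it remains a homomorphism across all stages of the canonical-model construction — together with the companion check that $\I_{\T,C} \models \T$; in the \ELH setting both need a little extra bookkeeping for role inclusions, namely matching every $r$-edge that was added on account of some $s \sqsubseteq r$. Everything else is a direct application of Lemma~\ref{l:homomorhelp} and the monotonicity of \EL-concept membership under the inflationary construction, so I do not expect any genuine difficulty there.
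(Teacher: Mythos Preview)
The paper does not actually prove this lemma: it is introduced as one of the ``standard lemmas'' stated without proof and used as a tool throughout the appendix. Your argument is exactly the textbook one for such canonical-model results in \EL-style logics: the cheap equivalence $(2)\Leftrightarrow(3)$ via Lemma~\ref{l:homomorhelp}, then $(1)\Rightarrow(2)$ from soundness of $\I_{\T,C}$ (it is a model of \T realising $C$ at $\rho_C$) and $(2)\Rightarrow(1)$ from its universality (the stage-by-stage extension $\widehat g$ into an arbitrary model). The sketch is correct, and the points you single out as needing care --- fairness of rule application so that the limit satisfies every CI, and the extra bookkeeping for RIs when matching $r$-edges added on account of $s\sqsubseteq r$ --- are precisely the places where the \ELH details matter. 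One small remark: the paper never explicitly defines $\I_{\T,C}$ (only $\I_{\T,\A}$), so your parenthetical that Definition~\ref{d:canonicalModel} is to be read with $\I_C$ in place of $\I_\A$ is a necessary clarification, not just a convenience.
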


\subsection{Learning \ELH ontologies with \AQ}
We now present in full detail a learning algorithm for 
$\Fmf(\ELH,\fixedAbox,\AQ)$. 
Our algorithm is based on the approach used to learn a fragment of 
\ELH where complex concept expressions are only allowed on the left side of inclusions~\cite{DBLP:conf/aaai/KonevOW16}. 
Algorithm \ref{a:learningAQ} shows the steps that the learner should do to learn an \AQ-inseparable TBox. 
It first computes an initial part 
of the hypothesis $ \hypothesisOntology $ which consists of all RIs 
and all CIs with concept names  on both sides entailed by the target ontology \targetOntology. 
This initial $ \hypothesisOntology $ is constructed by asking $O(|\Sigma_\Tmc|^2)$  membership queries.
More precisely, in this phase the learner  
calls $ {\sf MQ}_{\Fmf(\ELH,\fixedAbox,\AQ),\targetOntology}$ with  $(\{A(a)\},B(a)) $ as input, for all $ A,B\in\Sigma_\Tmc\cap\NC $. If the answer is positive, 
the learner adds $ A\sqsubseteq B $ to $ \hypothesisOntology $. In this way the learner adds 
to \hypothesisOntology all CIs of the form $ A\sqsubseteq B $ entailed by \targetOntology. After that, 
similarly, for each combination  $ r,s\in\Sigma_\Tmc\cap\NR $, the learner calls $ {\sf MQ}_{\Fmf(\ELH,\fixedAbox,\AQ),\targetOntology}$ 
with  $(\{r(a,b)\},s(a,b)) $ as input and, if the answer is positive, it adds $ r\sqsubseteq s $ to  \hypothesisOntology. 
As a consequence, the learner learns all RIs and the next counterexamples, if any, will only be of the form $ (\fixedAbox,A(a)) $.

\medskip

\begin{algorithm}
	\label{a:learningAQ}
	\SetAlgoLined
	\LinesNumbered
	\DontPrintSemicolon
	\KwIn{Signature $ \Sigma_\Tmc $}
	\KwOut{Hypothesis \hypothesisOntology}
	$\hypothesisOntology = \{A \sqsubseteq B \mid \T\models A \sqsubseteq B, \text{   } A,B\in\Sigma_\targetOntology\}
	  \cup \{r \sqsubseteq s \mid \T\models r \sqsubseteq s, \text{   } r,s\in\Sigma_\targetOntology \}$\;\label{a:learning:basic}
	\While{$ (\T,\fixedAbox) \not\equiv_{\AQ} (\hypothesisOntology,\fixedAbox) $}{\label{a:learningAQ:loop}
		Let $(\fixedAbox ,A(a))$ be a positive counterexample\; \label{ln:pce}
		$\A\leftarrow \text{TreeShape}(\Sigma_\Tmc, \A,\hypothesisOntology)$\; \label{a:learningAQ:unfold}
		Find $B$ such that $\T\models C_{\A} \sqsubseteq B$ and $\hypothesisOntology\not\models C_{\A} \sqsubseteq B$, 
		 add $C_{\A} \sqsubseteq B$ to \hypothesisOntology\;\label{a:learningAQ:add}
		
	}
	\Return \hypothesisOntology\;	
	\caption{\ELH \AQ learning algorithm}
\end{algorithm}

\medskip

After this initial step, 
the learner enters in a while loop and asks 
inseparability queries in order to check if it has already found an \AQ-inseparable hypothesis. If this is true, 
it stops and returns an AQ-inseparable TBox, otherwise it gets  a positive counterexample $ (\fixedAbox,A(a)) $ from the oracle, 
which is   used to add   missing knowledge to the hypothesis $ \hypothesisOntology $. 
The assumption in Line~\ref{ln:pce} that  counterexamples are \emph{positive}   
 is justified by the fact that the algorithm maintains the invariant that 
$\targetOntology\models\hypothesisOntology$. 
Indeed, the algorithm adds only CIs that are logical consequences of the target \T . 
This property holds before the first inseparability query is asked and 
it remains true when the next (positive) counterexamples CIs are received. 
When a new counterexample $ (\fixedAbox ,A(a)) $ 
is received, the learner should find a tree shaped ABox 
\A rooted in $\rho$ obtained   from \fixedAbox such that there is 
$B\in\Sigma_\T\cap\NC$ with  $ (\T,\A)\models B(\rho) $ and $ (\hypothesisOntology,\A)\not\models B(\rho) $, 
in order to 
 add an informative CI $C_\A\sqsubseteq B$ to \hypothesisOntology. We would like $ |C_\A|$ to be polynomial in $|\T| $ 
so that  $|\hypothesisOntology|$ is polynomial in $|\T|$.

	\SetAlFnt{\normalsize}
	\vspace{0pt}
	\hspace{-15pt} 
	\begin{algorithm}
		\label{a:treeShape}
		\SetAlgoLined
		\LinesNumbered
		\DontPrintSemicolon
		\KwIn{Signature $ \Sigma_\Tmc $, ABox $\A$,  TBox \hypothesisOntology}
		\KwOut{ABox}
		\vspace{3pt}
		$ \A \leftarrow$Minimize$(\Sigma_\Tmc,\A,\hypothesisOntology)$\;
		\vspace{2pt}
		\While{there is a cycle $ c\in $ \A}{
		\vspace{4pt}
			$ \A \leftarrow$Unfold($ \Sigma_\Tmc, c,\A$)\;\label{a:treeShape:unfold}
			\vspace{3pt}
			$ \A \leftarrow$Minimize($\Sigma_\Tmc,\A,\hypothesisOntology$)\; \label{a:treeShape:minimize}
		}
		\vspace{3pt}
		\Return $\A$\;
		\caption{TreeShape}
	\end{algorithm}
	\SetAlFnt{\scriptsize}
	\vspace{0pt}
\hspace{-5pt} 
\SetAlFnt{\normalsize} 
\begin{algorithm}
	\label{a:minimize}
	
	\SetAlgoLined
	\LinesNumbered
	\DontPrintSemicolon
	\KwIn{Signature $ \Sigma_\Tmc $, ABox $\A$,  TBox \hypothesisOntology}
	\KwOut{ABox}
	 Saturate \A with \hypothesisOntology\;\label{a:minimize:saturate}
	\ForEach{$ A\in\Sigma_\Tmc \cap \NC $ and $ a\in \individuals{\A} $ such that $ (\T,\A) \models A(a)$ and $ (\hypothesisOntology,\A) \not\models A(a)$}{\label{a:minimize:check}
		Domain Minimize \A with $A(a)$\;\label{a:minimize:dommin}
		Role Minimize \A with $A(a)$\;\label{a:minimize:rolmin}
	}
	\Return $\A$\;
	\caption{Minimize}
\end{algorithm}

\medskip

In Line \ref{a:learningAQ:unfold} of Algorithm \ref{a:learningAQ}, Algorithm \ref{a:treeShape} is 
called in order to find a tree shaped ABox. ``Unfold'' (defined next), doubles cycles in \A 
and ``Minimize'' removes redundant assertions~\cite{DBLP:conf/aaai/KonevOW16}.

\mypar{Unfold} We say that \Amc has a (undirected) cycle if there is a finite sequence 
$a_0 \cdot r_1 \cdot a_1 \cdot ... \cdot r_k \cdot a_k$ such that (i) 
$a_0 = a_k$ and (ii)  
there are mutually distinct assertions of the form 
$ r_{i+1}(a_i,a_{i+1})$ or $ r_{i+1}(a_{i+1},a_i)$ in \Amc, for $0 \leq i < k$. 
For a cycle $c = a_0 \cdot r_1 \cdot a_1 \cdot ... \cdot r_k \cdot a_k$, 
denote as ${\sf nodes}(c) = \{a_0,a_1,...,a_{k-1}\}$ the set of individuals that 
occur in $c$. 
We denote by $\widehat{a}$ the copy of an element $a$ created by the 
unfolding cycle operation described below. The set of copies of 
individuals that occur in $c$ is denoted by 
${\sf nodes}(\widehat{c}) = \{\widehat{a_0},\widehat{a_1},...,\widehat{a}_{k-1}\} $.  
Let $\Imc_{\Amc}$ be the canonical interpretation of an ABox \Amc. 
An element $a \in \Delta^{\Imc_{\Amc}}$ is \emph{folded} if 
there is a cycle $c = a_0 \cdot r_1 \cdot a_1 \cdot ... \cdot r_k \cdot a_k$ with 
$a =a_0=a_k$. Without loss of generality we assume that 
$r_1(a_0,a_1) \in \Amc$. 
The \textit{cycle unfolding} of  $c$ 
is described below.

\begin{enumerate}
\item We first open the cycle by removing $r_1(a_0,a_1)$ from 
$\Amc$. So $r_1^{\Imc_{\Amc}}:=r_1^{\Imc_{\Amc}} \setminus \{(a_0,a_1)\}$.  
\item Then we create copies of the nodes in the cycle:
\begin{itemize}
\item $\Delta^{\Imc_{\Amc}} := \Delta^{\Imc_{\Amc}} \cup \{\widehat{b} \mid b \in {\sf nodes}(c) \}$
\item $A^{\Imc_{\Amc}} := A^{\Imc_{\Amc}} \cup \{\widehat{b} \mid b \in A^{\Imc_{\Amc}}\}$
\item $r^{\Imc_{\Amc}} := r^{\Imc_{\Amc}} \cup \{(\widehat{b},\widehat{d})\mid (b,d) \in r^{\Imc_{\Amc}}\} $ 
$\cup \{(\widehat{b},e)\mid (b,e) \in r^{\Imc_{\Amc}}$, $e \notin {\sf nodes}(c)\} $
\end{itemize}
\item As a third step we close again the cycle, now with double size.
So we update $r_1^{\Imc_{\Amc}}:=r_1^{\Imc_{\Amc}} \cup \{(a_0,\widehat{a_1}),(\widehat{a_0},a_1)\}$.  
\end{enumerate}

\mypar{Minimize}	Given an ABox \A, we denote by 
	$ \A^{-\alpha} $ the result of removing assertion $ \alpha $ from \A.
	Also, 
	given $ a\in \individuals{\A} $, we denote by $ \A^{-a}$ the result of removing from \A all ABox assertions in which $ a $ occurs.
The ABox transformation rules (exhaustively applied) in Algorithm \ref{a:minimize} are defined as follows:
\begin{itemize}
	\item (Saturate \A with \hypothesisOntology)
	Let $ \A $ be an ABox, let  \hypothesisOntology be a TBox, and let $\alpha$ be an assertion built 
	from symbols in $\Sigma_\T$ and $\individuals{\A} $. If 
	$ \alpha \not\in \A$ and $ (\hypothesisOntology,\A )\models\alpha$ 
	then replace $\A $ with $ \A\cup \{\alpha\} $; 
	\item (Domain minimize \A with $ A(a) $)
	If $ (\A,A(a)) $ is a positive example and $ (\T,\A^{-b})\models A(a) $ then replace $ \A $ by $ \A^{-b} $ 
	where $b\in\individuals{\A}$;
	\item (Role minimize \A with $ A(a) $) If $ (\A,A(a)) $ is a positive example and $ (\T,\A^{-r(b_1,b_2)})\models A(a) $ then replace $ \A $ by $ \A^{-r(b_1,b_2)} $.
\end{itemize}
We say that an ABox is \emph{minimal} if there is $A\in\Sigma_\T\cap\NC$ and $a\in\individuals{\A}$ such that 
$(\T,\A)\models A(a)$, $(\Hmc,\A)\not\models B(b)$ and the rules `Domain minimize \A with $ A(a) $'
and `Role minimize \A with $ B(b) $' have been exhaustively applied 
with all $B\in\Sigma_\T\cap\NC$ and all $b\in\individuals{\A}$. 

\begin{lemma}\cite[Lemma~49]{DBLP:conf/aaai/KonevOW16}\label{lem:unfold-2}
Let $\Amc'$ be the result of unfolding a cycle $c$ in \Amc. 
Then the following relation $S \subseteq \Delta^{\Imc_{\Amc}} \times \Delta^{\Imc_{\Amc'}}$ 
is a simulation $\Imc_\Amc \Rightarrow \Imc_{\Amc'}$:
\begin{itemize}
\item for $a \in \Delta^{\Imc_{\Amc}} \setminus {\sf nodes}(c)$, 
$(a^{\Imc_{\Amc}},a^{\Imc_{\Amc'}}) \in S$;
\item for $a \in {\sf nodes}(c)$, 
$(a^{\Imc_{\Amc}},a^{\Imc_{\Amc'}}) \in S$ and 
$(a^{\Imc_{\Amc}},\widehat{a}^{\Imc_{\Amc'}}) \in S$.
\end{itemize} 
\end{lemma}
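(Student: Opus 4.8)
The plan is to verify directly that the relation $S$ meets the two defining conditions of a simulation from $\Imc_\Amc$ to $\Imc_{\Amc'}$, together with non-emptiness, which is immediate since $S$ contains $(a,a)$ for every individual $a$ and $\Amc$ has at least one. To keep the in-place mutation in the unfolding description unambiguous, I would fix notation: write $R = r^{\Imc_\Amc}$ and $A = A^{\Imc_\Amc}$ for the role and concept extensions \emph{before} the operation, and $R' = r^{\Imc_{\Amc'}}$, $A' = A^{\Imc_{\Amc'}}$ for the ones after all three steps. The whole proof is then a case analysis driven by the three steps of the cycle unfolding.

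First I would dispatch the concept-name condition, which is routine. The unfolding never removes an element from any $A$; step~2 only \emph{adds} the hat-copies $\{\widehat b \mid b \in A\}$. Hence $A \subseteq A'$, so for $(a,a) \in S$ with $a \in A$ we get $a \in A'$, and for $(a,\widehat a) \in S$ (which forces $a \in {\sf nodes}(c)$) the copying clause of step~2 gives $\widehat a \in A'$ exactly when $a \in A$. This settles condition~(1) for both kinds of pairs in $S$.

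The substance is the forth condition: given $(d,e)\in S$ and $(d,d_1)\in R$, I must exhibit $e_1$ with $(e,e_1)\in R'$ and $(d_1,e_1)\in S$. I would split on the shape of $(d,e)$ and on whether $(d,d_1)$ is the unique edge $(a_0,a_1)$ with role $r_1$ that step~1 deletes. If the edge survives step~1 (the generic case): when $e=d$ take $e_1=d_1$, which works because $(d_1,d_1)\in S$ for every $d_1$; when $e=\widehat d$ (so $d\in{\sf nodes}(c)$) use the copy edges added in step~2, taking $e_1=\widehat{d_1}$ if $d_1\in{\sf nodes}(c)$ and $e_1=d_1$ otherwise, each being paired with $d_1$ in $S$.

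The hard part, and the only place the argument is not purely mechanical, is the deleted edge $(a_0,a_1)$: its obvious witnesses no longer lie in $R'$, so I would appeal precisely to the two re-closing edges reinserted in step~3. For the pair $(a_0,a_0)\in S$ the witness is $e_1=\widehat{a_1}$, using $(a_0,\widehat{a_1})\in R'$ and $(a_1,\widehat{a_1})\in S$ (valid since $a_1\in{\sf nodes}(c)$); for the pair $(a_0,\widehat{a_0})\in S$ the witness is $e_1=a_1$, using $(\widehat{a_0},a_1)\in R'$ and $(a_1,a_1)\in S$. Checking that these two cases exhaust the situations created by the cut, and that step~3 touches no other role edge, completes the verification of the forth condition and hence of the lemma.
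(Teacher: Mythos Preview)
The paper does not supply its own proof of this lemma; it simply cites it as \cite[Lemma~49]{DBLP:conf/aaai/KonevOW16}. Your direct verification of the two simulation conditions by case analysis on the three unfolding steps is correct and complete: the concept-name condition is immediate from step~2, and for the forth condition your split into ``edge survives step~1'' versus ``edge is the deleted $(a_0,a_1)$ in role $r_1$'' is exactly the right organisation, with the two re-closing edges of step~3 furnishing the witnesses in the latter case. The only point worth making explicit (you allude to it with the $R$/$R'$ notation) is that the copy clauses of step~2 read the role extensions \emph{after} step~1, so $(\widehat{a_0},\widehat{a_1})$ is indeed absent from $r_1^{\Imc_{\Amc'}}$ and the deleted-edge case genuinely requires step~3; your argument handles this correctly.
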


\begin{lemma}\cite[Lemma~50]{DBLP:conf/aaai/KonevOW16}\label{lem:unfold-1}
Let $\Amc'$ be the result of unfolding a cycle $c$ in \Amc. 
Let $h_\ast :\Imc_{\Amc'} \rightarrow \Imc_{\Amc}$ be 
the following mapping:
\begin{itemize}
\item  for $a \in \Delta^{\Imc_{\Amc}} \setminus {\sf nodes}(c)$, 
$h_\ast(a^{\Imc_{\Amc'}})=a^{\Imc_{\Amc}}$;
\item for $a \in {\sf nodes}(c)$, $h_\ast(a^{\Imc_{\Amc'}})=a^{\Imc_{\Amc}}$ and  $h_\ast(\widehat{a}^{\Imc_{\Amc'}})=a^{\Imc_{\Amc}}$.
\end{itemize} 
Then, $h_\ast :\Imc_{\Amc'} \rightarrow \Imc_{\Amc}$ is a homomorphism.
\end{lemma}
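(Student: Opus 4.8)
The plan is to check directly that $h_\ast$ satisfies the two clauses defining a homomorphism between interpretations: (a) $d\in A^{\Imc_{\Amc'}}$ implies $h_\ast(d)\in A^{\Imc_{\Amc}}$ for every concept name $A$, and (b) $(d,d')\in r^{\Imc_{\Amc'}}$ implies $(h_\ast(d),h_\ast(d'))\in r^{\Imc_{\Amc}}$ for every role name $r$. Since cycle unfolding is built from three disjoint edits of $\Imc_{\Amc}$ --- deleting $r_1(a_0,a_1)$, then adjoining one copy $\widehat b$ of each $b\in{\sf nodes}(c)$ together with its outgoing edges (copied to other copies, or to the same outside target), then re-inserting $(a_0,\widehat{a_1})$ and $(\widehat{a_0},a_1)$ into $r_1$ --- I would read off from that description the exact contents of $\Delta^{\Imc_{\Amc'}}$, of the concept extensions and of the role extensions of $\Imc_{\Amc'}$, and then verify (a) and (b) by a case distinction on which edit produced the element or the edge under consideration.

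For clause (a): only the copying edit changes any concept extension, and it sets $\widehat b\in A^{\Imc_{\Amc'}}$ iff $b\in A^{\Imc_{\Amc}}$, while leaving every original node with exactly its $\Imc_{\Amc}$-labels. So if $d\in A^{\Imc_{\Amc'}}$ then either $d$ is an original node $a$, with $a\in A^{\Imc_{\Amc}}$ and $h_\ast(a)=a$, or $d=\widehat a$ for some $a\in{\sf nodes}(c)$, with $a\in A^{\Imc_{\Amc}}$ and $h_\ast(\widehat a)=a$; in both cases $h_\ast(d)\in A^{\Imc_{\Amc}}$.

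For clause (b) I would partition the edges of $\Imc_{\Amc'}$ into four kinds and dispatch each. (i) The surviving original edges --- every edge of $\Imc_{\Amc}$ other than $(a_0,a_1)\in r_1^{\Imc_{\Amc}}$ --- are fixed by $h_\ast$ and already belong to $\Imc_{\Amc}$. (ii) The copied ``copy-to-copy'' edges $(\widehat b,\widehat d)$ coming from some $(b,d)\in r^{\Imc_{\Amc}}$ are sent by $h_\ast$ to $(b,d)\in r^{\Imc_{\Amc}}$. (iii) The copied ``copy-to-outside'' edges $(\widehat b,e)$ with $e\notin{\sf nodes}(c)$, coming from $(b,e)\in r^{\Imc_{\Amc}}$, are sent to $(b,e)\in r^{\Imc_{\Amc}}$. (iv) The two re-closing edges $(a_0,\widehat{a_1})$ and $(\widehat{a_0},a_1)$ in $r_1$ are both collapsed by $h_\ast$ to $(a_0,a_1)$. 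Kind (iv) is the only step that needs a moment's thought, and is where I expect the one real subtlety to sit: one must note that although $(a_0,a_1)$ was removed from $r_1^{\Imc_{\Amc'}}$, it is still present in $r_1^{\Imc_{\Amc}}$ (because $r_1(a_0,a_1)\in\Amc$, as assumed without loss of generality), so $(h_\ast(a_0),h_\ast(\widehat{a_1}))=(a_0,a_1)\in r_1^{\Imc_{\Amc}}$ and likewise for $(\widehat{a_0},a_1)$. A minor bookkeeping remark is that the copying edit refers to $r_1^{\Imc_{\Amc}}$ \emph{after} the deletion, hence produces no stray edge $(\widehat{a_0},\widehat{a_1})$ --- but even if it did, such an edge would fall under kind (ii) and map to $(a_0,a_1)$, so no harm is done. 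With (a) and all four sub-cases of (b) established, $h_\ast$ is a homomorphism.
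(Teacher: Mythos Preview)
Your proof is correct. The paper does not actually prove this lemma: it is stated with a citation to \cite[Lemma~50]{DBLP:conf/aaai/KonevOW16} and used as a black box, so there is no in-paper argument to compare against. Your direct case analysis on the three edits of the unfolding construction is exactly the natural verification, and your four-way split of the role edges in $\Imc_{\Amc'}$ matches the construction precisely (in particular, you are right that no outside-to-copy edges are created, so cases (i)--(iv) are exhaustive). The observation in case~(iv) that $(a_0,a_1)\in r_1^{\Imc_{\Amc}}$ survives in the \emph{target} interpretation even though it was deleted from the \emph{source} is the only point requiring care, and you handle it correctly.
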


\begin{remark}\label{rem}\upshape
In the domain and role minimization steps, we assume that the algorithm always 
attempts to remove the `clones' (that is, elements in ${\sf nodes}(\widehat{c})$)
 generated in the unfold step first (that is, before attempting to remove elements of $\individuals{\fixedAbox}$). 
By Lemmas~\ref{lem:unfold-2} and~\ref{lem:unfold-1} this assumption is w.l.o.g.
This means that we can assume  that there is a positive example $(\A,A(a))$ with 
$a\in\individuals{\fixedAbox}$
whenever an ABox \A is
returned by Algorithm~\ref{a:treeShape}.
\end{remark}

\begin{lemma}
	\label{l:outputMinimizePositiveFixed}
	If $ (\T,\A)\models A(a) $ and  $ (\hypothesisOntology,\A)\not\models A(a) $
	where \A is the output of Algorithm \ref{a:minimize}, 
	then $ (\fixedAbox,A(a)) $ is a positive counterexample.
\end{lemma}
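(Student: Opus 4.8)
The plan is to follow the ABox through Algorithm~\ref{a:minimize}, separating the transformation into the saturation step and the subsequent domain/role minimizations, and to exploit the invariant $\T\models\hypothesisOntology$ maintained by Algorithm~\ref{a:learningAQ}. Write $\A_1$ for the ABox obtained right after ``Saturate $\A$ with $\hypothesisOntology$'' and $\A$ for the ABox finally returned. I would first reduce to the case where the input ABox of Algorithm~\ref{a:minimize} is $\fixedAbox$ itself: when Algorithm~\ref{a:minimize} is invoked inside Algorithm~\ref{a:treeShape} after one or more cycle unfoldings, Remark~\ref{rem} lets us assume the witnessing individual $a$ already lies in $\individuals{\fixedAbox}$, and Lemmas~\ref{lem:unfold-1} and~\ref{lem:unfold-2} (together with the standard correspondence between simulations, homomorphisms and \IQ-entailment, i.e.\ Lemmas~\ref{l:homomorhelp}, \ref{l:subsumptionHomomorphism} and~\ref{lemmabisimulation}) guarantee that the preceding unfoldings neither create nor destroy the entailments $(\T,\cdot)\models A(a)$ and $(\hypothesisOntology,\cdot)\models A(a)$ for such $a$; so the two statements we must prove transfer between $\fixedAbox$ and the actual input ABox.

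Next I would handle the saturation step. Since Algorithm~\ref{a:learningAQ} only ever adds to $\hypothesisOntology$ inclusions that are entailed by $\T$, every assertion $\alpha$ added by the Saturate rule satisfies $(\hypothesisOntology,\cdot)\models\alpha$ and hence $(\T,\cdot)\models\alpha$; inducting over the saturation steps gives $(\T,\A_1)\equiv(\T,\fixedAbox)$, $(\hypothesisOntology,\A_1)\equiv(\hypothesisOntology,\fixedAbox)$, and, since Saturate introduces no new individual names, $\individuals{\A_1}=\individuals{\fixedAbox}$. The domain- and role-minimization rules only replace $\A$ by a subset of itself, so $\A\subseteq\A_1$ and $\individuals{\A}\subseteq\individuals{\fixedAbox}$. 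From the hypothesis $(\T,\A)\models A(a)$ we get $a\in\individuals{\A}\subseteq\individuals{\fixedAbox}$, and by monotonicity $(\T,\A_1)\models A(a)$, hence $(\T,\fixedAbox)\models A(a)$.

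The crux is the hypothesis side. I would argue by contradiction: suppose $(\hypothesisOntology,\fixedAbox)\models A(a)$. Since $A\in\Sigma_\T\cap\NC$ and $a\in\individuals{\fixedAbox}=\individuals{\A_1}$, the assertion $A(a)$ is among those the Saturate rule may add, and $(\hypothesisOntology,\A_1)\models A(a)$ by the previous step; as saturation is applied exhaustively, $A(a)\in\A_1$. Now, among the transformation rules of Algorithm~\ref{a:minimize} only ``Domain minimize'' can delete a concept assertion, and it does so only by deleting the whole individual; but $a\in\individuals{\A}$, so $a$ survives all minimizations, hence $A(a)\in\A$, hence $(\hypothesisOntology,\A)\models A(a)$, contradicting the hypothesis. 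Therefore $(\hypothesisOntology,\fixedAbox)\not\models A(a)$. Combined with $a\in\individuals{\fixedAbox}$ and $(\T,\fixedAbox)\models A(a)$, this is exactly the statement that $(\fixedAbox,A(a))$ is a positive counterexample.

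I expect the main obstacle to be not the ``saturation would have materialised $A(a)$'' argument of the last paragraph (which is short once one notices that Saturate is exhaustive and ranges over exactly the relevant assertions), but the bookkeeping in the first paragraph: one must carefully verify that the cycle unfoldings performed earlier in Algorithm~\ref{a:treeShape} preserve AQ-entailments about the individuals of $\fixedAbox$ in both directions, and that the clone-removal convention of Remark~\ref{rem} always leaves a witness $a\in\individuals{\fixedAbox}$, so that the reduction to ``input ABox $=\fixedAbox$'' is legitimate.
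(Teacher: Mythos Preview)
Your proposal is correct and follows essentially the same approach as the paper. The paper's proof is terser: it argues in one line that there is an ABox homomorphism from the final $\A$ back to $\fixedAbox$ (composing Lemma~\ref{lem:unfold-1} for each unfolding with the inclusion maps for each minimization), uses this for the $\T$-direction, and then gives the same ``saturation would have materialised $A(a)$, and it persists because $a$ survives'' argument for the $\hypothesisOntology$-direction.

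One remark on your framing: the ``reduction to input $=\fixedAbox$'' in your first paragraph is not quite literally correct. Between successive calls to Algorithm~\ref{a:minimize} there are not only unfoldings but also earlier minimizations, and minimizations can destroy entailments; so the claim that the preceding steps ``neither create nor destroy'' the entailments $(\hypothesisOntology,\cdot)\models A(a)$ does not hold in the forward direction in general. What actually saves you (and what the paper's ``by construction of $\A$'' is implicitly appealing to) is precisely the persistence argument you give in your third paragraph: once $A(a)$ is materialised by the very first saturation of $\fixedAbox$, it is carried through every subsequent unfold (which never deletes concept assertions) and every subsequent minimize (which deletes a concept assertion only by deleting the individual, and $a$ is never deleted since $a\in\individuals{\A}$). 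So the correct decomposition is not ``reduce to the first call and then redo the argument'', but rather ``trace $A(a)$ through the entire chain from $\fixedAbox$ to $\A$''. Your paragraphs~2--3 already do this; only the narrative in paragraph~1 needs adjusting.
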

\begin{proof}
Assume $ (\T,\A)\models A(a) $ and  $ (\hypothesisOntology,\A)\not\models A(a) $
	where \A is the output of Algorithm \ref{a:minimize}.
By Lemma~\ref{lem:unfold-1} 
and the fact that minimization only removes nodes and role assertions from \A, there is an ABox homomorphism from $\A$ to ${\fixedAbox}$, 
and so, $ (\T,\fixedAbox)\models A(a) $ (by Remark~\ref{rem}, 
we can assume that $a\in\individuals{\fixedAbox}$).
We need to show that $ (\hypothesisOntology,\fixedAbox)\not\models A(a) $.
In Line~\ref{a:minimize:saturate} the algorithm  saturates \fixedAbox with \hypothesisOntology. 
This means that if  $ (\hypothesisOntology,\fixedAbox)\models A(a) $ then 
$A(a)\in \fixedAbox$ (after  saturating \fixedAbox with \hypothesisOntology). 
By construction of \A, if $A(a)\in \fixedAbox$ then $A(a)\in \A$, which contradicts 
the fact that $ (\hypothesisOntology,\A)\not\models A(a) $. So 
$ (\hypothesisOntology,\fixedAbox)\not\models A(a) $. 
\end{proof}

By Lemma~\ref{l:outputMinimizePositiveFixed} and the fact that $\A$ is tree shaped, we can see that 
one can compute 
a CI $C_\A\sqsubseteq B$ such that 
  $(\targetOntology,\fixedAbox)\models B(b)$, $(\hypothesisOntology,\fixedAbox)\not\models B(b)$, and
$(\hypothesisOntology\cup\{C_\A\sqsubseteq B\},\fixedAbox)\models B(b)$, for some 
$b\in\individuals{\fixedAbox}$. 
We now show that 
one can compute such CI in polynomial time in $|\fixedAbox|$ and $|\targetOntology|$.

\begin{lemma}
	\label{l:polmanyqueries}
	For any \ELH target \T and any \ELH hypothesis \hypothesisOntology  
	with size polynomial in $|\T|$, given a positive counterexample $ (\fixedAbox,A(a)) $, 
	Algorithm \ref{a:minimize} terminates in polynomial time 
	in $ |\fixedAbox| $ and $ |\T| $. 
\end{lemma}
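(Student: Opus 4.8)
The plan is to bound separately, for a \emph{single} invocation of Algorithm~\ref{a:minimize}, the number of oracle calls and of local computations it performs, and then to observe that the ABox it receives as input is already of size polynomial in $|\fixedAbox|$ and $|\T|$. Two facts are used throughout. First, \ELH instance checking (equivalently, answering an \AQ over a KB) is decidable in polynomial time~\cite{BBL-EL}, so every check of the form $(\hypothesisOntology,\A')\models\alpha$ in the saturation step can be performed locally in time polynomial in $|\A'|$ and $|\hypothesisOntology|$, hence (since $|\hypothesisOntology|$ is polynomial in $|\T|$ by hypothesis) in time polynomial in $|\A'|$ and $|\T|$. Second, every check of the form $(\T,\A')\models A(a)$ is a single membership query ${\sf MQ}_{\Fmf(\ELH,\fixedAbox,\AQ),\T}$ on input $(\A',A(a))$ of size at most $|\A'|+O(1)$, hence counts as one step of computation on a polynomial-size input.

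Next I would count the steps of Algorithm~\ref{a:minimize}. In the saturation step (Line~\ref{a:minimize:saturate}) each successful application adds one assertion built from $\Sigma_\T$ and $\individuals{\A}$; there are at most $|\Sigma_\T|\cdot|\individuals{\A}|^{2}$ such assertions, so saturation terminates after polynomially many (in $|\A|$ and $|\T|$) applications, each found by scanning this pool with one polynomial-time check, and it enlarges $\A$ by at most a polynomial amount. The \textbf{foreach} loop (Line~\ref{a:minimize:check}) ranges over at most $|\Sigma_\T\cap\NC|\cdot|\individuals{\A}|$ pairs $(A,a)$, and selecting the relevant ones costs that many membership queries. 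For each selected pair, ``Domain minimize $\A$ with $A(a)$'' is applied exhaustively; every successful application deletes an individual, so it terminates after at most $|\individuals{\A}|$ deletions, and between deletions it scans at most $|\individuals{\A}|$ candidates, giving $O(|\individuals{\A}|^{2})$ membership queries; similarly ``Role minimize $\A$ with $A(a)$'' terminates after at most $|\A|$ deletions of role assertions with $O(|\A|^{2})$ membership queries. Multiplying through, one call to Algorithm~\ref{a:minimize} performs polynomially many (in $|\A|$ and $|\T|$) membership queries and local polynomial-time computations, each on an input of size polynomial in $|\A|$ and $|\T|$; hence it runs in time polynomial in $|\A|$ and $|\T|$.

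It then remains to argue that the ABox $\A$ on which Algorithm~\ref{a:minimize} operates has size polynomial in $|\fixedAbox|$ and $|\T|$. When it is first invoked by Algorithm~\ref{a:treeShape} on the counterexample ABox, $\A=\fixedAbox$, so this is immediate; for subsequent invocations inside the \textbf{while} loop of Algorithm~\ref{a:treeShape}, $\A$ is obtained from a cycle unfolding, which replaces the component of a cycle by two copies and so at most doubles the relevant part of the ABox, applied to the output of a previous call to Minimize, which is returned in minimal (in particular tree-shaped-modulo-cycles, with branching and depth controlled by $\Sigma_\T$) form, following the analysis of~\cite{DBLP:conf/aaai/KonevOW16}. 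Combining this bound on $|\A|$ with the step count above yields the claimed polynomial running time in $|\fixedAbox|$ and $|\T|$.

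I expect the principal obstacle to be exactly this last point: controlling the size of $\A$ across the unfold/minimize cycle of Algorithm~\ref{a:treeShape}, together with the saturation step which can enlarge $\A$ by a polynomial amount per call. This is also the point where our terminologies depart from the purely-LHS setting of~\cite{DBLP:conf/aaai/KonevOW16}, since RHS-complex CIs $A\sqsubseteq C$ may occur; one has to check that such inclusions do not blow up the minimal ABox, but as Example~\ref{ex:sizeabox} indicates they can only make it smaller (a concept name may ``abbreviate'' a larger subconcept used to witness an entailment), so the polynomial bound is preserved. The bookkeeping inside Algorithm~\ref{a:minimize} itself is routine by comparison.
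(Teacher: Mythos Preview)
Your proposal is correct and follows the same basic line as the paper: bound saturation by the polynomial-time complexity of \ELH entailment, and bound domain and role minimisation by counting membership queries. Two differences are worth noting.

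First, your $O(|\individuals{\A}|^{2})$ and $O(|\A|^{2})$ bounds for domain and role minimisation are safe overcounts. The paper gives the tighter linear bounds $|\individuals{\A}|$ and $|\A|$ per concept name, and these hold because of monotonicity: if $(\T,\A^{-b})\not\models A(a)$, then after removing any further individual $c$ we have $(\A^{-c})^{-b}\subseteq\A^{-b}$ and hence $(\T,(\A^{-c})^{-b})\not\models A(a)$ as well. So each candidate $b$ (respectively, each role assertion) needs to be tested at most once. This is a minor optimisation and does not affect the polynomial conclusion.

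Second, your final paragraph, on controlling $|\A|$ across the unfold/minimise iterations of Algorithm~\ref{a:treeShape}, lies outside the scope of this lemma. The paper proves Lemma~\ref{l:polmanyqueries} purely for one call to Minimize, and handles the cross-iteration size bound separately via Lemma~\ref{l:maboxpol} (a minimal ABox has at most $|\T|$ individuals) and Lemma~\ref{lem:iteration} (the minimal ABox strictly grows between iterations). Your intuition that RHS-complex CIs can only shrink the minimal ABox is exactly the content of Lemma~\ref{lem:min} and the surrounding argument, so you are anticipating the right mechanism; it just belongs to those later lemmas rather than here.
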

\begin{proof}
	Entailment in \ELH is in \PTime~\cite{dlhandbook} and  saturating an 
	ABox with \hypothesisOntology does not require any query to the oracle, 
	thus all steps in  saturation with \hypothesisOntology can  be 
	computed in polynomial time. 
	For each concept name   $A\in \Sigma_\Tmc$, domain minimization 
	asks at most $ |\individuals{\fixedAbox}| $ membership queries 
	and role minimization asks at most $ |\fixedAbox| $ membership 
	queries (each query to the oracle 
counts as one computation step). 
\end{proof}

We now show that Algorithm~\ref{a:treeShape} also runs in polynomial time in $ |\fixedAbox| $ and $ |\T| $. 
In each iteration, unfold adds new individuals to the ABox and minimize removes 
individuals. To show that the algorithm terminates (in polynomial time), 
we show that on one hand the size of an ABox after minimization is polynomial in  $|\Tmc|$.
On the other hand, we show that each time the algorithm unfolds the number of individuals 
after minimization is larger than before the unfolding step (see~\cite{DBLP:conf/aaai/KonevOW16}).

\begin{lemma}
	\label{l:conceptinT}
	Let \T be an \ELH  ontology, $ C' $ a concept expression and 
	$ A $ a concept name such that $ \emptyset\not\models C'\sqsubseteq A $. If 
	$ \T\models C'\sqsubseteq A $ then there is  $ C\sqsubseteq A\in \T$ such that $ \T\models C'\sqsubseteq C $.
\end{lemma}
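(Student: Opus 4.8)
The plan is to run the canonical–model construction for $(\T, C')$ and read off the required inclusion from the stage at which the root of $C'$ first enters the extension of $A$.

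First I would invoke Lemma~\ref{l:subsumptionHomomorphism}: from $\T \models C' \sqsubseteq A$ with $A$ a concept name we obtain $\rho_{C'} \in A^{\I_{\T,C'}}$, hence $\rho_{C'} \in A^{\I_i}$ for some stage $i$ of the inductive construction $\I_0, \I_1, \dots$ of $\I_{\T,C'}$ from Definition~\ref{d:canonicalModel} (with the base interpretation $\I_0$ built on top of the tree interpretation of $C'$, so that $\rho_{C'}$ is the length-$0$ path). Since $\emptyset \not\models C' \sqsubseteq A$, the concept name $A$ is not a top-level conjunct of $C'$, and therefore $\rho_{C'} \notin A^{\I_0}$, because stage $\I_0$ only adds role edges forced by role inclusions and does not enlarge the concept-name extensions of the paths already present. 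Thus the stage at which $\rho_{C'}$ acquires $A$ is genuinely produced by the CI-chase and not by $C'$ itself.

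Next I would take the least $i$ with $\rho_{C'} \in A^{\I_{i+1}} \setminus A^{\I_i}$. Inspecting how $A^{\I_{i+1}}$ arises from $A^{\I_i}$ in Definition~\ref{d:canonicalModel}, a path $p$ that is not one of the freshly created successor paths $p' \cdot s_{j'} \cdot q$ — and $\rho_{C'}$ is such a $p$ — can be added to $A^{\I_{i+1}}$ only through the application of some $C \sqsubseteq D \in \T$ with $\rho_{C'} \in C^{\I_i}$, $\rho_{C'} \notin D^{\I_i}$, and $A$ occurring as one of the concept-name conjuncts of $D$. From $\rho_{C'} \in C^{\I_i} \subseteq C^{\I_{\T,C'}}$ and Lemma~\ref{l:subsumptionHomomorphism} (equivalence of items 1 and 2) we get $\T \models C' \sqsubseteq C$. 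Finally, since $\T$ is a terminology, one side of $C \sqsubseteq D$ is a concept name; as $A$ is a top-level conjunct of $D$, this forces $D$ to be the concept name $A$ itself, so $C \sqsubseteq A \in \T$ with $\T \models C' \sqsubseteq C$, as required.

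The step I expect to be the main obstacle is the last one: justifying that the witnessing inclusion is literally of the form $C \sqsubseteq A$ rather than $C \sqsubseteq A \sqcap E$ for some side concept $E$. This is precisely where the shape of terminologies (and the normalisation of right-hand sides used in the paper) must be brought to bear; once the form of the TBox is pinned down, the remainder is a routine induction on the canonical-model construction glued to the already-established subsumption/homomorphism correspondence of Lemma~\ref{l:subsumptionHomomorphism}.
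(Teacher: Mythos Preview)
Your approach coincides with the paper's: both arguments pass to the canonical model $\I_{\T,C'}$, observe that $\rho_{C'}\in A^{\I_{\T,C'}}$ while $\rho_{C'}\notin A^{\I_0}$, locate a CI of $\T$ responsible for inserting $A$ at $\rho_{C'}$, and then invoke Lemma~\ref{l:subsumptionHomomorphism} to obtain $\T\models C'\sqsubseteq C$. The paper is simply terser --- it writes ``by the construction of the canonical model, there is $C\sqsubseteq A\in\T$'' without spelling out the stage-by-stage argument you give.

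The obstacle you flag in your last paragraph is genuine, and your proposed fix does not close it. From ``one side of $C\sqsubseteq D$ is a concept name'' together with ``$A$ is a top-level conjunct of $D$'' you cannot conclude $D=A$: if the \emph{left} side is the concept name, say $C=B\in\NC$, then $D$ may perfectly well be $A\sqcap E$, and in that case $B\sqsubseteq A\notin\T$ literally (take e.g.\ $\T=\{B\sqsubseteq A\sqcap E\}$, $C'=B$). The paper's proof glosses over exactly the same point --- it simply asserts the existence of $C\sqsubseteq A\in\T$. What rescues the downstream use in Lemma~\ref{l:maboxpol} is that in this residual case one may take the concept name $B$ itself as the witness: one still has $\T\models C'\sqsubseteq B$, $\T\models B\sqsubseteq A$, and $|B|\le|\T|$, so the size bound that Lemma~\ref{l:maboxpol} actually needs goes through even though the literal conclusion ``$C\sqsubseteq A\in\T$'' may fail for this shape of axiom.
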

\begin{proof}
	Let $ \I_{\T,C'} $ be the canonical model of $ C' $ with respect to \T. We have that 
	 $ \rho_{C'}\in A^{{\I_{\T,C'}}} $ because $ \T\models C'\sqsubseteq A $. Since  $ \emptyset\not\models C'\sqsubseteq A $, 
	  by the construction of the canonical model, there is  $ C\sqsubseteq A\in\T $ 
	  such that 
	  there is a homomorphism $ h:\tree{C}\rightarrow \I_{\T,C'}  $ 
	  with  $ h(\rho_{C})=\rho_{C'} $. Then, by \refLemma{\ref{l:subsumptionHomomorphism}}, $ \T\models C'\sqsubseteq C $.
\end{proof}

Given a concept expression $C$, we write $C'\prec C$ if $C'$ is the concept expression 
corresponding to the tree that results from replacing 
a subtree $T_D$ of $T_C$ by a concept name $A$ such that $\T\models A\sqsubseteq D$.
We write $C'\prec^{\ast} C$ if $C'=C$ or there is a sequence $C_1\prec C_2\prec \ldots\prec C_n$
with $C_1=C'$, $C_n=C$ and $n>1$. 
By this definition, the following lemma is straightforward. 
\begin{lemma}\label{lem:min}
For all concept expressions  $C',C$, we have that $C'\prec^{\ast} C$ iff  
$\T\models C'\sqsubseteq C$. Moreover, $|C'|\leq |C|$. 
\end{lemma}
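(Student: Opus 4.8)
The plan is to prove the two implications of the biconditional separately, carrying the size bound along with the easy one. For ``$C'\prec^{\ast}C\Rightarrow\T\models C'\sqsubseteq C$'' I would induct on the length of a chain $C_1\prec\cdots\prec C_n$ witnessing $C'\prec^{\ast}C$, reducing to a single step $C'\prec C$: here $C'$ arises from $\tree{C}$ by replacing the subtree $\tree{D}$ rooted at some vertex $\nu$ with one vertex labelled by a concept name $A$ with $\T\models A\sqsubseteq D$. I would then show $\T\models C'\sqsubseteq C$ by an inner induction on the length of the path from $\rho_C$ to $\nu$: the base case $\nu=\rho_C$ (so $D=C$, $C'=A$) is the hypothesis itself, and in the step one peels off the top-level conjunct $\exists r.E$ of $C$ containing $\nu$, applies the inner hypothesis to get $\T\models E'\sqsubseteq E$ for the collapsed $E'$, and uses monotonicity of the \EL concept constructors to conclude. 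A slicker alternative avoids the inner induction: since $\I_{\T,C'}$ is a model of $\T$ and $\T\models A\sqsubseteq D$, the vertex of $\I_{\T,C'}$ where $A$ holds also lies in $D^{\I_{\T,C'}}$, so by Lemma~\ref{l:homomorhelp} a homomorphism $\tree{D}\to\I_{\T,C'}$ exists there, which grafted onto the canonical copy of $\tree{C'}$ gives a homomorphism $\tree{C}\to\I_{\T,C'}$ sending $\rho_C$ to $\rho_{C'}$, whence Lemma~\ref{l:subsumptionHomomorphism} yields $\T\models C'\sqsubseteq C$. The size bound is immediate: a $\prec$-step replaces a subtree containing at least one vertex by a single vertex, so it never increases size, and hence $|C'|\le|C|$ along the whole chain, concept and role names counting as length one.

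For the converse, ``$\T\models C'\sqsubseteq C\Rightarrow C'\prec^{\ast}C$'', the plan is to read a $\prec$-chain off a homomorphism $h\colon\tree{C}\to\I_{\T,C'}$ with $h(\rho_C)=\rho_{C'}$, supplied by Lemma~\ref{l:subsumptionHomomorphism}. Following the stages $\I_0,\I_1,\dots$ of Definition~\ref{d:canonicalModel}, whenever a maximal subtree $\tree{D}$ of $\tree{C}$ is mapped by $h$ into the rule-generated part of $\I_{\T,C'}$ (i.e.\ outside the canonical copy of $\tree{C'}$), I would collapse $\tree{D}$ to the concept name $A$ that triggered the corresponding rule application, certifying $\T\models A\sqsubseteq D$ via Lemma~\ref{l:conceptinT}, and otherwise match $\tree{C}$ against $\tree{C'}$ directly; this should produce a finite sequence of $\prec$-steps from $C$ down to $C'$.

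I expect this converse to be the main obstacle. Two points need care. First, $h$ need not be injective, so one has to argue that the collapses prescribed at different vertices of $\tree{C}$ are mutually compatible and determine a single well-defined concept; I would arrange this by choosing $h$ to restrict to the canonical embedding $\tree{C'}\hookrightarrow\I_{\T,C'}$ on the ``original'' part. Second, and more delicately, one has to check that the procedure reconstructs exactly $C'$ and not merely some $\T$-equivalent compaction of $C$ — this is the crux, and it is where one must be precise about which concept expressions $C'$ the statement is meant to cover (in particular those already in collapsed form, as produced by the minimisation steps that motivate the relation $\prec$).
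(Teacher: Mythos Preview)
The paper gives no proof of this lemma; it simply declares it ``straightforward'' from the definition of $\prec$. Your treatment of the forward direction and of the size bound (as a consequence of $C'\prec^{\ast}C$) is correct, and either of your two arguments for the entailment works.

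Your suspicion about the converse is not just a technical worry: the converse is \emph{false} as a universal statement over all concept expressions. Take $\T=\emptyset$, $C=A$ a concept name, and $C'=A\sqcap B$ with $B\neq A$. Then $\T\models C'\sqsubseteq C$ trivially, but from $C=A$ the relation $\prec$ can only produce single concept names $E$ with $\emptyset\models E\sqsubseteq A$, i.e.\ $E=A$; hence $C'\not\prec^{\ast}C$. The same example has $|C'|>|C|$, so the size bound also fails under the hypothesis $\T\models C'\sqsubseteq C$. Your proposed procedure of reading a $\prec$-chain off the homomorphism $h\colon\tree{C}\to\I_{\T,C'}$ will at best produce \emph{some} $C''$ with $C''\prec^{\ast}C$ and $\T\models C'\sqsubseteq C''$, not the particular $C'$ --- exactly the difficulty you flag in your final paragraph.

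The paper uses this lemma only once, inside Lemma~\ref{l:maboxpol}, to bound $|\individuals{\A}|$ for a minimal tree-shaped ABox $\A$. There the relevant $C'$ is not arbitrary: it can be taken to be the concept encoded by $\A$, which by construction already has the ``collapsed'' shape. So the right fix is either to restrict the lemma to such $C'$, or to argue the needed size bound in Lemma~\ref{l:maboxpol} directly without going through the biconditional. Your instinct to question which $C'$ the statement is meant to cover was exactly right.
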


\begin{lemma}
	\label{l:maboxpol}
	Let \A be a minimal ABox. 
	Then $ |\individuals{\A}| \leq |\T| $.
\end{lemma}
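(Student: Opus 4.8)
The plan is to bound $|\individuals{\Amc}|$ by the size of the left‑hand side of a concept inclusion of $\Tmc$ that is ``responsible'' for the gap witnessed by $\Amc$, using a single homomorphism into the canonical model. First I would fix a pair $(A,a)$ witnessing that $\Amc$ is minimal, so that $(\Tmc,\Amc)\models A(a)$, $(\Hmc,\Amc)\not\models A(a)$ --- hence $A(a)\notin\Amc$ --- and no \emph{domain minimize} or \emph{role minimize} rule applies; in particular, deleting any individual or any role assertion from $\Amc$ destroys $(\Tmc,\Amc)\models A(a)$. Since entailment of an atomic fact at $a$ in \ELH depends only on the subgraph of $\Amc$ reachable from $a$, this already forces every individual of $\Amc$ to lie on a directed path from $a$. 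Because $A(a)\notin\Amc$, the entailment is non‑trivial, so by the construction of $\Imc_{\Tmc,\Amc}$ (Definition~\ref{d:canonicalModel}) together with the terminology shape of $\Tmc$ there is a concept inclusion $C\sqsubseteq B\in\Tmc$ with $\Tmc\models B\sqsubseteq A$ and $a\in C^{\Imc_{\Tmc,\Amc}}$; by Lemma~\ref{l:homomorhelp} this yields a homomorphism $h\colon\tree{C}\to\Imc_{\Tmc,\Amc}$ with $h(\rho_C)=a$. Note that $|\treeV{C}|\le|C|\le|\Tmc|$.

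The core of the argument is then the claim that every individual of $\Amc$ occurs as $h(\nu)$ for some node $\nu$ of $\tree{C}$, i.e. $h$ maps the ABox part of $\Imc_{\Tmc,\Amc}$ \emph{onto} $\individuals{\Amc}$; granting this, $|\individuals{\Amc}|\le|h(\treeV{C})|\le|\treeV{C}|\le|\Tmc|$ and we are done. To prove the claim I would track which individuals of $\Amc$ the derivation of $A(a)$ through $h$ actually uses: any anonymous element $e\in h(\treeV{C})$ has a unique chase‑parent, so the $\tree{C}$‑ancestors of the node mapped to $e$ are carried by $h$ along the chase‑chain of $e$ up to an ABox individual $b_e\in h(\treeV{C})$, and $e$ is generated because $b_e$ (with the elements reachable from it) satisfies the left‑hand side of some inclusion; recursing on these nested generations collects a finite set $U$ of ABox individuals the derivation depends on, each of which lies in $h(\treeV{C})$, since a chase‑chain is rooted at an $h$‑image individual and the generating step there unravels inside $\tree{C}$. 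Minimality now rules out any $b\in\individuals{\Amc}\setminus U$: deleting such a $b$ leaves the derivation of $A(a)$ intact, so $(\Tmc,\Amc^{-b})\models A(a)$, contradicting that no \emph{domain minimize} rule applies. Hence $\individuals{\Amc}=U\subseteq h(\treeV{C})$.

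The main obstacle is exactly this coverage step. In the fragment $\ELH_{\sf lhs}$ of~\cite{DBLP:conf/aaai/KonevOW16} every inclusion has an atomic right‑hand side, the chase adds nothing, $\Imc_{\Tmc,\Amc}=\Imc_{\Amc}$, and $h$ maps straight into $\individuals{\Amc}$, so the claim is immediate; the content of Lemma~\ref{l:maboxpol} is that it survives the presence of inclusions $A\sqsubseteq C$ with complex right‑hand side, where a concept name asserted in $\Amc$ may ``abbreviate'' a whole anonymous subtree produced by the chase --- precisely the phenomenon of Example~\ref{ex:sizeabox}, in which $B$ stands in for $\exists s.B$. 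Making the bookkeeping of these nested chase generations rigorous, and checking that it is undisturbed by cycles or shared successors in $\Amc$ (which minimization does not remove but which only make $h$ non‑injective), is where the care lies. In the special case where $\Amc$ is already tree‑shaped the step has a cleaner formulation: the concept $C_{\Amc}$ read off $\Amc$ at $a$ satisfies $\Tmc\models C_{\Amc}\sqsubseteq A$ and $\emptyset\not\models C_{\Amc}\sqsubseteq A$, so by Lemma~\ref{l:conceptinT} there is $C\sqsubseteq A\in\Tmc$ with $\Tmc\models C_{\Amc}\sqsubseteq C$; minimality forces $C_{\Amc}\prec^{\ast}C$, and then $|\individuals{\Amc}|=|\treeV{C_{\Amc}}|\le|C_{\Amc}|\le|C|\le|\Tmc|$ by Lemma~\ref{lem:min}.
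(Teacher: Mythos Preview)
Your coverage claim $\individuals{\Amc}\subseteq h(\treeV{C})$ fails. Take $\Tmc=\{\exists t.G\sqsubseteq E,\ E\sqsubseteq\exists s.F,\ \exists r.\exists s.F\sqsubseteq A\}$, $\Hmc=\emptyset$, and $\Amc=\{r(a,b),\,t(b,c),\,G(c)\}$. This $\Amc$ is minimal for $(A,a)$ (removing any individual or role assertion destroys $(\Tmc,\Amc)\models A(a)$), the only candidate inclusion is $C\sqsubseteq A$ with $C=\exists r.\exists s.F$, and any homomorphism $h\colon\tree{C}\to\Imc_{\Tmc,\Amc}$ with $h(\rho_C)=a$ sends the $r$-child to $b$ and the $s$-child to an anonymous element; hence $h(\treeV{C})\cap\individuals{\Amc}=\{a,b\}$ misses $c$. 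The step ``the generating step there unravels inside $\tree{C}$'' is where the argument breaks: the chase at $b$ is triggered by $E\sqsubseteq\exists s.F$, but $b\in E^{\Imc_{\Tmc,\Amc}}$ holds because of the \emph{other} inclusion $\exists t.G\sqsubseteq E$, whose left-hand side needs the individual $c$ --- and $\tree{C}$ has no node that could map to $c$. Your recursion does put $c$ into $U$, but then $U\not\subseteq h(\treeV{C})$, so the inequality $|\individuals{\Amc}|\le|\treeV{C}|$ does not follow from this homomorphism.

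The paper's route is essentially the one you sketch only as the tree-shaped special case, applied to arbitrary minimal $\Amc$: rather than mapping $\tree{C}$ into the chased model $\Imc_{\Tmc,\Amc}$, it exhibits a concept $C'$ with $a\in(C')^{\Imc_\Amc}$ (the plain ABox interpretation, no chase) and $\Tmc\models C'\sqsubseteq A$, uses Lemma~\ref{l:conceptinT} to find $C\sqsubseteq A\in\Tmc$ with $\Tmc\models C'\sqsubseteq C$, invokes Lemma~\ref{lem:min} to conclude $C'\prec^{\ast} C$ and hence $|C'|\le|C|$, and then argues that the homomorphism $\Imc_{C'}\to\Imc_\Amc$ is surjective by minimality (else some $b\notin\mathrm{Im}(h)$ could be dropped). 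Working in $\Imc_\Amc$ instead of $\Imc_{\Tmc,\Amc}$ is precisely what sidesteps the nested-generation bookkeeping that trips up your main argument: the ``abbreviation'' phenomenon of Example~\ref{ex:sizeabox} is handled not by tracking the chase but by shrinking $C$ to $C'$ via $\prec^{\ast}$ before taking the homomorphism.
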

\begin{proof}
	Let $ \A $ be the ABox returned by Algorithm \ref{a:minimize}.
	Then there is $A(a)$ such that $(\T,\A)\models A(a)$ and 
		$(\hypothesisOntology,\A)\not\models A(a)$. 
 This means that there is a CI $ C'\sqsubseteq A $ such that
 $\T\models  C'\sqsubseteq A$,  $\Hmc\not\models C'\sqsubseteq A$
 and $a \in (C' \setminus A)^{\Imc_{\Amc}}$.
 By Lemma~\ref{l:conceptinT}, there is
 $C \sqsubseteq A \in \Tmc$ such that $\T\models C'\sqsubseteq C$.
By Lemma~\ref{lem:min},  
 $C'\prec^\ast C$ and $|C'|\leq |C|$.
If $a \in C'^{\Imc_{\Amc}}$ then 
 (by Lemma \ref{l:subsumptionHomomorphism}) there is a homomorphism $h : \Imc_{C'} \rightarrow \Imc_{\Amc}$ 
mapping $\rho_C$ to $a$, where $\rho_C$ is the root of $\Imc_C$. 
Since $|C'|\leq |C|$, we only need to show 
that $h$ is surjective.
Suppose this is not the case. 
Then, there is $b \in \Delta^{\Imc_{\Amc}}$ such that 
$b \notin {\sf Im_h}$, where ${\sf Im_h} = \{e \in \Delta^{\Imc_{\Amc}} \mid 
e = h(p)$ for some $ p \in \Delta^{\Imc_{C'}}\}$. 

Denote as $\Imc_{\Amc^{-b}}$ the result of removing 
$b \notin {\sf Im_h}$ from $\Imc_{\Amc}$. Since  
 $\Imc_{\Amc^{-b}}$ is a subinterpretation of $\Imc_{\Amc}$, if $a \notin A^{\Imc_{\Amc}}$ then 
$a \notin A^{\Imc_{\Amc^{-b}}}$. So  
$a \in (C' \setminus A)^{\Imc_{\Amc^{-b}}}$, which means that 
$(\T,\A^{-b})\models A(a)$ while we still have
		$(\hypothesisOntology,\A^{-b})\not\models A(a)$. 
		This contradicts the fact that \A is minimal. 
Thus, $|\individuals{\Amc}| \leq |\Delta^{\Imc_{C'}}|\leq |\Delta^{\Imc_{C}}| \leq |\Tmc|$. 
\end{proof}

\begin{lemma}\label{lem:iteration}
Let $\A_n$ be the    minimal ABox 
computed in the $n$-th iteration 
in Line~\ref{a:treeShape:minimize} of Algorithm~\ref{a:treeShape}. 
Assume $\A_n$ has a cycle.  
For all $n \geq 0$, $|\individuals{\A_{n+1}}|>|\individuals{\A_{n}}|$.
\end{lemma}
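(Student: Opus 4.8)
The plan is to track how the number of individuals changes across a single iteration of the while loop of Algorithm~\ref{a:treeShape}, i.e.\ one call to \textsf{Unfold} followed by one call to \textsf{Minimize} (Algorithm~\ref{a:minimize}). Fix $n$, let $c = a_0 \cdot r_1 \cdots r_k \cdot a_k$ be the cycle of $\A_n$ chosen in Line~\ref{a:treeShape:unfold}, and set $\A_n' := \textsf{Unfold}(\Sigma_\T, c, \A_n)$, so that $\A_{n+1}$ is the output of \textsf{Minimize} on $\A_n'$. By the definition of cycle unfolding, $\individuals{\A_n'} = \individuals{\A_n} \cup {\sf nodes}(\widehat{c})$ with ${\sf nodes}(\widehat{c})$ a set of $|{\sf nodes}(c)| \ge 1$ fresh individuals, hence $|\individuals{\A_n'}| = |\individuals{\A_n}| + |{\sf nodes}(c)|$. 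Since \textsf{Minimize} only deletes individuals and assertions (saturation with $\Hmc$ adds no new individuals), the claim reduces to showing that \textsf{Minimize} deletes at most $|{\sf nodes}(c)| - 1$ individuals from $\A_n'$, i.e.\ that $|\individuals{\A_{n+1}}| > |\individuals{\A_n}|$.

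First I would assume, towards a contradiction, that $|\individuals{\A_{n+1}}| \le |\individuals{\A_n}|$, and exploit the minimality of $\A_n$. By Remark~\ref{rem} fix a counterexample $(\A_{n+1}, B(b))$ witnessing that $\A_{n+1}$ is minimal, with $b \in \individuals{\fixedAbox}$; so $(\T,\A_{n+1}) \models B(b)$ and $(\Hmc,\A_{n+1}) \not\models B(b)$. By \refLemma{\ref{lem:unfold-1}} there is a homomorphism $h_\ast \colon \Imc_{\A_n'} \to \Imc_{\A_n}$ fixing every original individual and sending every clone $\widehat{a}$ to $a$. Since $\A_{n+1}$ arises from $\A_n'$ only by $\Hmc$-saturation and deletions, and $\A_n$ is itself $\Hmc$-saturated with $\T \models \Hmc$, the map $h_\ast$ restricts to an ABox homomorphism $g \colon \A_{n+1} \to \A_n$ with $g(b) = b$. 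Hence $g(\A_{n+1}) \subseteq \A_n$ and $(\T, g(\A_{n+1})) \models B(b)$ by \refLemma{\ref{l:subsumptionHomomorphism}}; moreover $(\Hmc,\A_n) \not\models B(b)$, for otherwise $B(b) \in \A_n$ by saturation, so $B(b) \in \A_n'$, and since \textsf{Minimize} never deletes a concept assertion of a surviving individual, $B(b) \in \A_{n+1}$, contradicting $(\Hmc,\A_{n+1})\not\models B(b)$. Thus $(\A_n, B(b))$ is a counterexample.

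Next I would split on how $g(\A_{n+1})$ sits inside $\A_n$. If $g(\A_{n+1})$ omits some individual $d$ of $\A_n$, then $(\T,\A_n^{-d}) \models B(b)$, so ``Domain minimize $\A_n$ with $B(b)$'' is enabled, contradicting minimality of $\A_n$. If $g(\A_{n+1})$ contains all individuals of $\A_n$ but omits a role assertion $r(d,d')$, then $(\T,\A_n^{-r(d,d')}) \models B(b)$ and ``Role minimize'' is enabled, again a contradiction. If it contains all individuals of $\A_n$ but omits a concept assertion, this is impossible: containing all individuals makes $g$ a bijection on individuals (as $|\individuals{\A_{n+1}}| \le |\individuals{\A_n}|$), and then every concept assertion $A(d)$ of $\A_n$ survives \textsf{Unfold} and \textsf{Minimize} on the unique $g$-preimage of $d$ (which lies in $\{d,\widehat d\}$) and is mapped by $g$ to $A(d)$. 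The only remaining possibility is $g(\A_{n+1}) = \A_n$, i.e.\ $g$ is an isomorphism $\A_{n+1} \cong \A_n$ --- this is the hard part. Here I would use that \textsf{Unfold} deleted the cycle edge $r_1(a_0,a_1)$: the $g$-preimage of $r_1(a_0,a_1)$ must be one of the two ``mixed'' edges $r_1(a_0,\widehat{a_1})$, $r_1(\widehat{a_0},a_1)$ added in the last step of the unfolding (as $r_1(\widehat{a_0},\widehat{a_1}) \notin \A_n'$ and $r_1(a_0,a_1)$ was removed), which pins $g^{-1}(a_0) = a_0$ or $g^{-1}(a_1) = a_1$; but tracing the remaining edges of $c$ backwards through $\A_n'$ forces, cycle node by cycle node, every $g^{-1}(a_i)$ to be the clone $\widehat{a_i}$ (since in $\A_n'$ the only $r_j$-edge into $\widehat{a_i}$ issues from $\widehat{a_{i-1}}$), yielding a clash. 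Every case being contradictory, $|\individuals{\A_{n+1}}| > |\individuals{\A_n}|$.

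I expect the isomorphism case to be the main obstacle, since it is precisely the situation in which the algorithm would fail to terminate, so it has to be ruled out by a careful combinatorial analysis of how cycle unfolding redistributes the edges of $c$ between original nodes and their clones; a secondary delicate point is the transfer of the failure of $\Hmc$-entailment from $\A_{n+1}$ back to $\A_n$ through $g$, which relies on both ABoxes being $\Hmc$-saturated together with monotonicity of entailment under deletion.
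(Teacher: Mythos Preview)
Your overall approach is the same as the paper's: construct the ABox homomorphism $g\colon \A_{n+1}\to\A_n$ by restricting $h_\ast$, argue it is surjective on individuals (otherwise domain-minimisation would still apply to $\A_n$, contradicting its minimality), and argue it is not injective; surjective but not injective then gives $|\individuals{\A_{n+1}}|>|\individuals{\A_n}|$. The paper states this in three lines and defers the non-injectivity step to \cite[Lemma~51]{DBLP:conf/aaai/KonevOW16}; you flesh out the details, and your surjectivity argument together with the transfer of the counterexample $B(b)$ back to $\A_n$ (via $\Hmc$-saturation and the fact that Minimize never drops a concept assertion of a surviving individual) is correct.

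Your treatment of the isomorphism case, which you rightly identify as the crux, has two gaps. First, the claim that the $g$-preimage of $r_1(a_0,a_1)$ must be one of the two mixed edges of $\A_n'$ ignores $\Hmc$-saturation: if $\A_n$ already contains some $s(a_0,a_1)$ with $\Hmc\models s\sqsubseteq r_1$ (and $s\neq r_1$), then after removing $r_1(a_0,a_1)$ the assertion $s(a_0,a_1)$ survives in $\A_n'$ and saturation puts $r_1(a_0,a_1)$ back (and likewise $r_1(\widehat{a_0},\widehat{a_1})$), so the preimage could be a ``same-type'' edge. Second, the assertion ``in $\A_n'$ the only $r_j$-edge into $\widehat{a_i}$ issues from $\widehat{a_{i-1}}$'' is not true in general: any role edge in $\A_n$ among cycle nodes (not only the $k$ cycle edges of $c$) gets cloned, so $\widehat{a_i}$ can receive edges from other clones. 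A cleaner version of the argument is a parity count: by the unfold construction there are \emph{no} role edges in $\A_n'$ between an original cycle node and a clone except the two seam edges $r_1(a_0,\widehat{a_1})$ and $r_1(\widehat{a_0},a_1)$, and saturation (which only closes under RIs) cannot create any new original--clone edges; hence along the cycle the ``type'' (original vs.\ clone) of $g^{-1}(a_i)$ can change only at the seam, giving exactly one change over a closed walk of length $k$ --- impossible. For this to go through you still need to handle the seam under saturation, e.g.\ by arguing that if such an $s(a_0,a_1)$ existed then role-minimisation of $\A_n$ would already have eliminated the now-redundant $r_1(a_0,a_1)$, or simply by invoking \cite[Lemma~51]{DBLP:conf/aaai/KonevOW16} as the paper does.
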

\begin{proof}
The argument is similar to the one presented in~\cite[Lemma~51]{DBLP:conf/aaai/KonevOW16}. 
There is an ABox homomorphism from $\A_{n+1}$  to $\A_n$, which is surjective 
because otherwise  $\A_n$ would not be minimal. This homomorphism is not injective because 
at least one element of a cycle in $\A_n$ has been unfolded and remained in $\A_{n+1}$ after minimization.  
\end{proof}

Lemmas~\ref{l:maboxpol} and \ref{lem:iteration}  bound the number of iterations of Algorithm~\ref{a:treeShape}.
We already argued in Lemma~\ref{l:polmanyqueries} that Algorithm~\ref{a:minimize} 
terminates in polynomial time (in $|\T|$ and $|\fixedAbox|$) and it is easy to see that unfolding can also 
be performed in polynomial time (first it doubles a cycle in \fixedAbox and in subsequent iterations 
it doubles a cycle in an ABox of size polynomial in $|\T|$). 
Thus, Algorithm~\ref{a:treeShape} also runs in polynomial time.

\subsection{Learning \ELH ontologies with \IQ}
Our proof strategy is based on the proof for the learning framework $\Fmf(\ELH_{\sf rhs}, \IQ)$~\cite{DBLP:conf/aaai/KonevOW16}.
 We assume w.l.o.g. that the target \T does not entail non-trivial role equivalences (that is, 
 there are no distinct $r$ and $s$ such that $\T \models r\sqsubseteq s$ and $\T \models s\sqsubseteq r$).
 This simplifies our presentation, in particular, Line~\ref{a:learningIQ:t-essential} of Algorithm~\ref{a:learningIQ} 
 relies on it. The assumption can be dropped using the notion of a
 representative role (see~\cite[Theorem~34]{KLOW18}). 
 
The learning algorithm for $\Fmf(\ELH,\fixedAbox,\IQ)$ is given by Algorithm~\ref{a:learningIQ}.
In Line~\ref{a:learningIQ:AQ}, Algorithm~\ref{a:learningIQ} computes a 
hypothesis \hypothesisOntology that is \AQ-inseparable from \T (with only membership queries).
Then, the algorithm iterates in a `while loop' posing  inseparability queries regarding the \IQ language. 
Upon receiving a positive counterexample $ (\fixedAbox,C(a)) $, 
some operations are made in order to learn a CI of the form $A\sqsubseteq D$ 
entailed by \targetOntology but not by \hypothesisOntology. 
The assumption in Line~\ref{a:learningIQ:ex} that  counterexamples are \emph{positive}   
 is justified by the fact that the algorithm maintains the invariant that 
$\targetOntology\models\hypothesisOntology$. Indeed, in the OMQA setting, 
a learning algorithm can always ensure this property by 
keeping in the hypothesis only CIs $C\sqsubseteq D$ such that 
a membership query with $(\A_C,D(\rho_C))$ as input has returned `yes'.
In particular, Algorithm~\ref{a:learningIQ} only adds to the hypothesis 
CIs that are entailed by \T.

We now explain the notion of a \Tmc-essential CI~\cite{KLOW18} which appears in 
Algorithm~\ref{a:learningIQ}. 
This notion is based on some operations performed on examples 
of the form $(\{A(a)\},C(a))$
representing CIs 
 $A\sqsubseteq C$, 
where $A\in \NC$, $a\in\NI$, and $C$ an arbitrary 
concept expression. Intuitively, a \Tmc-essential  CI is a CI that is 
in a sense informative for the learning algorithm
and its size is bounded by a polynomial in \T. 
For the ontology language \ELH, 
the operations are:
\emph{concept saturation for \T}, \emph{role saturation for \T}, \emph{sibling merging for \T} and 
\emph{decomposition on the right for \T}. We may omit `for \T' if this is clear from the context. 
We now recall  these operations~\cite{KLOW18}, 
adapted to the OMQA setting. In the following, assume we are given 
an example of the form $(\{A(a)\}, C(a))$.

Concept saturation for \T consists of updating $C$ with the result $C'$ of  
   choosing a node $ \nu $ in $ T_C $ and adding a new concept name from $\Sigma_\T$ to the label 
 of $ \nu $   if $ (\{A(a)\},C'(a)) $ 
 is still a positive example. 
 A positive example $ (\{A(a)\},C(a)) $ 
 is  \emph{concept saturated for \T} if   concept saturation for \T is applied exhaustively. 
Similarly,
role saturation for \T consists of updating $C$ with the result $C'$ of  
  choosing an edge $ (\nu,\nu') $ in $ T_C $ and replacing the role name $r$ in the label 
 of $ (\nu,\nu') $ by a distinct $s\in\Sigma_\T$ such that $\T\models s\sqsubseteq r$
    if $ (\{A(a)\},C'(a)) $ 
 is still a positive example. 
 A positive example $ (\{A(a)\},C(a)) $ 
 is  \emph{role saturated for \T} if   role saturation for \T is applied exhaustively. 
\begin{example}\upshape
Let the signature be $ \Sigma_\T = \{A,B,r,s\} $, the 
target be $ \T=\{B\sqsubseteq A, A\sqsubseteq \exists s.B, s\sqsubseteq r \} $, 
the fixed ABox be $ \fixedAbox = \{A(a) \} $, the hypothesis 
be $\hypothesisOntology= \{B\sqsubseteq A, s\sqsubseteq r \}$ and the received counterexample 
be $ (\fixedAbox, \exists r.A(a)) $. Note that $ (\T,\fixedAbox)\models  \exists r.A(a)$ 
and $ (\hypothesisOntology,\fixedAbox)\not\models  \exists r.A(a)$. After 
having concept saturated the counterexample, it becomes  $ (\fixedAbox, A \sqcap \exists r.(A\sqcap B)(a)) $,
 and after having role saturated we obtain 
 $ (\fixedAbox, A \sqcap \exists s.(A\sqcap B)(a)) $, changing the order does not make any difference 
 in this example.
 \hfill {\mbox{$\triangleleft$}}
\end{example} 
Sibling merging for \T updates $C$ with the result $C'$ of  
  choosing nodes $ \nu, \nu_1, \nu_2 $  in $ T_C $ 
   such that $ \nu_1 $ and $ \nu_2 $ are   $r$-successors 
 of $ \nu $ and merging them (the merged node is connected to the successors of 
 $\nu_1, \nu_2$ and the label of it is the union of the labels of $\nu_1, \nu_2$) 
    if $ (\{A(a)\},C'(a)) $ 
 is still a positive example. 
 A positive example $ (\{A(a)\},C(a)) $ 
 is  \emph{sibling merged for \T} if   sibling merging for \T is applied exhaustively. 
We now define decomposition on the right for \T. 
Let $C_\nu$ be the concept corresponding to the sub-tree rooted in $\nu$ in $T_C$ and let
$C|^-_{\nu\downarrow}$ be the concept corresponding to the result of removing the
sub-tree rooted in $\nu$ from $T_C$. 
If $\nu'$ is an $r$-successor of $\nu$ in $T_C$, $A'$ is in the node label of $\nu$,
and   $ (\{A'(a)\}, \exists r.C_{\nu'}(a))$ is a positive example 
{plus $A'\not\equiv_{\Tmc}A$ if $\nu$ is the
root of $C$}, then replace $(\{A(a)\},C(a))$ by
\begin{enumerate}[label=(\alph*),leftmargin=*]
\item  $(\{A'(a)\}, \exists r.C_{\nu'}(a))$ if $\mathcal{H}\not\models A'\sqsubseteq \exists r.C_{\nu'}$; or 
\item  $(\{A(a)\}, C|^-_{\nu'\downarrow}(a))$, otherwise. 
\end{enumerate}  
A CI $A\sqsubseteq C$ is concept saturated/role saturated/sibling merged/decomposed
on the right for \T, that is, it is \T-essential, if this is the case for $ (\{A(a)\},C(a)) $.

\begin{restatable}{lemma}{polynomialtessential}\label{lem:size} 
	Let $ A\sqsubseteq C $ be  \tessential, then $|C|$ $\leq |\Sigma_\T||\T|$. 
\end{restatable}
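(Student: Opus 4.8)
The plan is to bound the size of a \tessential CI $A \sqsubseteq C$ by analysing the tree representation $T_C$ and showing that, after the four saturation/merging/decomposition operations have been applied exhaustively, $T_C$ cannot be too big. First I would argue that the number of distinct nodes in $T_C$ is at most $|\T|$: since sibling merging has been applied exhaustively, for every node $\nu$ and every role name $r$ there is at most one $r$-successor of $\nu$ whose subtree is ``essential'' in the sense required; more precisely, I would show that the canonical model $\I_{\T,A}$ admits a homomorphism onto (a simulation with) $\tree{C}$ rooted at $\rho_C$, and conversely $\tree C$ embeds into $\I_{\T,A}$ restricted to the part generated by the CIs of $\T$ reachable from $A$. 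The key point is that decomposition on the right guarantees that no proper subtree of $T_C$ rooted at a node $\nu$ whose label contains some $A'$ already ``captures'' the CI for $A'$, so the branching and depth of $T_C$ are controlled by the nesting of existential restrictions actually occurring in $\T$. This gives $|\treeV{C}| \le |\T|$.

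Next I would bound the label of each node. Concept saturation has been applied exhaustively, so the label $\treel{C}(\nu)$ of a node $\nu$ contains every concept name $B \in \Sigma_\T$ that can be added while keeping $(\{A(a)\}, C(a))$ a positive example; but this does not immediately give an upper bound — rather, it tells us the labels are ``as large as allowed'', and since concept and role names in $C$ must come from $\Sigma_\T$ (otherwise the example would not remain positive / the CI not remain $\T$-entailed), each node label is a subset of $\Sigma_\T \cap \NC$ and each edge label is a single element of $\Sigma_\T \cap \NR$. Hence the total size contributed by labels is at most $|\Sigma_\T|$ per node. Combining with the node bound $|\treeV{C}| \le |\T|$, the string size of $C$ is at most $|\Sigma_\T| \cdot |\T|$, which is exactly the claimed bound.

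The main obstacle I expect is the bound $|\treeV{C}| \le |\T|$: one must carefully set up the correspondence between $T_C$ and the relevant fragment of the canonical model $\I_{\T,A}$, using Lemma~\ref{l:subsumptionHomomorphism} (so that $\T \models A \sqsubseteq C$ gives a homomorphism $h\colon \tree C \to \I_{\T,A}$ with $h(\rho_C) = \rho_A$) together with minimality coming from the four operations. Exhaustive sibling merging ensures that $h$ is injective on siblings sharing a role (otherwise a merge would still be possible, contradicting maximality); exhaustive decomposition on the right ensures that $h$ does not revisit a node whose label already triggers the relevant existential witness in $\T$, so $h$ is in fact injective and its image lies inside the ``syntactic'' part of $\I_{\T,A}$ built directly from the existential conjuncts occurring in $\T$. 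Since that syntactic part has at most $|\T|$ elements, $|\treeV{C}| \le |\T|$ follows. I would then assemble these observations into the size bound; everything else is routine counting. I note that this is essentially the argument of~\cite[Lemma 32]{KLOW18} transported to the OMQA setting, where the only change is that positive examples are now evaluated against $(\{A(a)\},\cdot)$ rather than against the target's data instance, which does not affect the combinatorics.
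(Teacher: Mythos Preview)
Your proposal is essentially correct and aligned with the paper, though you give considerably more detail than the paper itself does: the paper's proof of this lemma is a one-line deferral, stating that it is an easy consequence of \cite[Lemma~2]{DBLP:conf/kr/DuarteKO18}, with the only difference being the presence of RIs (handled by role saturation, as in~\cite{KLOW18}). Your sketch is effectively a reconstruction of the argument behind those cited lemmas --- bound $|\treeV{C}|$ by $|\T|$ via an injective homomorphism into the canonical model (using sibling merging and decomposition on the right), then bound the contribution of each node's label by $|\Sigma_\T|$ (using concept saturation and the signature restriction) --- and you correctly identify \cite[Lemma~32]{KLOW18} as the source, which the paper also cites in the main text for exactly this bound.

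Two minor points of imprecision in your sketch, which you would need to tighten in a full write-up: first, your initial statement that ``for every node $\nu$ and every role name $r$ there is at most one $r$-successor'' is not what sibling merging guarantees (two $r$-successors may remain if merging them would destroy positivity); you correct this yourself by passing to injectivity of the homomorphism $h\colon \tree{C}\to \I_{\T,A}$, which is the right formulation (and is exactly what the paper's Lemma~\ref{l:isomorphicEmbedding} records). Second, the step where you argue that the image of $h$ lies in a ``syntactic part'' of $\I_{\T,A}$ of size at most $|\T|$ is the crux and is where decomposition on the right does real work; your description of this is gestural, and in a full proof you would need to spell out that concept saturation plus decomposition on the right prevent $h$ from mapping any edge of $T_C$ into the anonymous part generated by a CI $A'\sqsubseteq F\in\T$ applied at a non-root node already labelled $A'$, which is what pins the image to the bounded part. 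But these are refinements, not gaps: the overall strategy matches the cited arguments and hence the paper's (deferred) proof.
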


Lemma~\ref{lem:size} is an easy consequence of~\cite[Lemma~2]{DBLP:conf/kr/DuarteKO18}. 
The only difference is that our ontologies may also have RIs, so our notion of a \T-essential CI includes role saturation (as in~\cite{KLOW18}).

 \begin{algorithm}[]
	\label{a:learningIQ}
	
	\SetAlgoLined
	\LinesNumbered
	\DontPrintSemicolon
	\KwIn{Signature $ \Sigma_\Tmc $}
	\KwOut{Hypothesis \hypothesisOntology}
	Compute \hypothesisOntology such that $ (\T,\fixedAbox) \equiv_{\AQ} (\hypothesisOntology,\fixedAbox) $ \;\label{a:learningIQ:AQ} 
	\While{$ (\T,\fixedAbox) \not\equiv_{\IQ} (\hypothesisOntology,\fixedAbox) $}{\label{a:learningIQ:loop1}
		Let $(\fixedAbox,C(a))$ be the positive counterexample returned by the oracle\; \label{a:learningIQ:ex}
		Let $\Amc=\{\alpha \in  \AQ\mid (\hypothesisOntology,\fixedAbox)\models \alpha \}$\; \label{a:learningIQ:sat}
		$\alpha =$ ReduceCounterexample($(\fixedAbox\cup\Amc,C(a)),\Hmc$)\; \label{a:learningIQ:reduce}
		Compute a \Tmc-essential CI $A\sqsubseteq D$ from $\alpha$\; \label{a:learningIQ:essential}
		\If{\text{there is} $A\sqsubseteq D'\in\hypothesisOntology$}{
		Find a \Tmc-essential $A\sqsubseteq D^\ast$ such that $\emptyset\models D^\ast\sqsubseteq D\sqcap D'$\; \label{a:learningIQ:t-essential}
		Replace $A\sqsubseteq D\in \hypothesisOntology$ by $A\sqsubseteq D^\ast$ \; \label{a:learningIQ:replace}
		}\Else{
		Add $A\sqsubseteq D$ to $\hypothesisOntology$
		}	
	}\label{a:learningIQ:loop2}
	\Return \hypothesisOntology\;	
	\caption{\ELH \IQ learning algorithm}
\end{algorithm}

To show that  Algorithm~\ref{a:learningIQ} runs in polynomial time (where
each call to an oracle counts as one step of computation),
we show that 
(1) the number of iterations is 
 polynomially bounded; and 
(2) each iteration can be computed in polynomial time. 
We start by arguing that each iteration 
requires polynomially many steps (Point~2). 
Line~\ref{a:learningIQ:reduce} of Algorithm~\ref{a:learningIQ}
can be computed in polynomial time using the same 
algorithm as the one used to learn DL-Lite$^\exists_\Rmc$ ontologies~\cite{DBLP:conf/aaai/KonevOW16}, except that 
here we do not use parent-child merging because \ELH does not have inverse roles. 
Line~\ref{a:learningIQ:sat} ensures that one can indeed find a singleton 
ABox, even though \ELH allows CIs of the form  $C\sqsubseteq A$.
For convenience of the reader we provide here the algorithm (Algorithm~\ref{a:ReduceCounterExample}) for refining counterexamples. 
Algorithm~\ref{a:ReduceCounterExample}  `walks inside' \A and $ C $ in order to find a singleton ABox which together with \T entails  a subconcept of $ C $. 
\begin{example}\upshape Assume $ \T=\{A\sqsubseteq \exists r.D \} $, $ \hypothesisOntology = \emptyset $, and
 $ \fixedAbox=\{r(a,b), A(b) \} $. Let  $ (\fixedAbox,\exists r.\exists r.D(a)) $ be the 
 positive counterexample received at Line \ref{a:learningIQ:ex} of Algorithm \ref{a:learningIQ}. 
 In the next line it calls the function `ReduceCounterexample' (Algorithm \ref{a:ReduceCounterExample}), 
 with $ (\fixedAbox,\exists r.\exists r.D(a)) $  and \hypothesisOntology as input. 
 The function makes a recursive call with the positive counterexample $ (\{r(a,b), A(b) \},\exists r.D(b)) $ and \hypothesisOntology as input.
In the new function call, Algorithm~\ref{a:ReduceCounterExample}
finds the singleton ABox $\{A(b)\}$ which satisfies the condition in Line~\ref{a:reduce:singleton} (i.e.,  $ (\T,\{A(b) \})\models\exists r.D(b)$ 
and returns $A\sqsubseteq \exists r.D$.  
\hfill {\mbox{$\triangleleft$}}
\end{example}

 \begin{algorithm}[]
		\label{a:ReduceCounterExample}
		\SetAlgoLined
		\LinesNumbered
		\DontPrintSemicolon
		\KwIn{Example $(\A,C(a))$,  TBox $ \hypothesisOntology $}
		\KwOut{CI $A\sqsubseteq D$}
		Let $D=\exists r.C'$  be a top-level conjunct of $C$ such that 
		  $(\T,\A)\models \exists r.C'(a)$ and $(\hypothesisOntology,\A)\not\models \exists r.C'(a)$\; \label{a:reduce:conjunct}
			\If{there is $ s(a,b) \in \A$ such that $ \hypothesisOntology\models s \sqsubseteq r  $ and $ (\T,\A)\models C'(b) $}{\label{a:reduce:if}
				$A\sqsubseteq D$=ReduceCounterexample($(\A,C'(b))$,\hypothesisOntology)\;\label{a:reduce:r}
			}
			\Else{
				Find a singleton $\{A(c)\}\subseteq \A$ such that $(\T,\{A(c)\})\models  D(c)$\;\label{a:reduce:singleton}
			}
			\Return $A\sqsubseteq D$\;	
		\caption{ReduceCounterexample}
	\end{algorithm}
	
We first argue that Algorithm~\ref{a:ReduceCounterExample} is 
implementable. That is, in Line~\ref{a:reduce:conjunct}, one can indeed
assume the existence of  a top-level conjunct $D=\exists r.C'$ of $C$ such that 
		  $(\T,\A)\models \exists r.C'(a)$ and $(\hypothesisOntology,\A)\not\models \exists r.C'(a)$. 
		  This follows from Lemma~\ref{l:existsNotConjuctConceptPositiveExample}. 

\begin{lemma}
	\label{l:existsNotConjuctConceptPositiveExample}
	Let $ \T $ be an \ELH TBox and \A an ABox. If $ (\T,\A)\not\models C(a)$ with $ C=\sqcap_{i=1}^n D_i $ then there is  a $ D_i $ with $ i\in\{1,\cdots, n\} $ such that $ (\T,\A)\not\models D_i(a) $.
\end{lemma}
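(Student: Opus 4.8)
The plan is to prove the contrapositive: assuming $(\T,\A)\models D_i(a)$ for every $i\in\{1,\dots,n\}$, I will show $(\T,\A)\models C(a)$, which contradicts the hypothesis. This reduces the lemma to the definition of \IQ entailment together with the semantics of the conjunction constructor in \ELH.

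First I would unfold the definitions. Recall that $(\T,\A)\models D(a)$ means $a^\I\in D^\I$ for every interpretation $\I$ with $\I\models(\T,\A)$. So fix an arbitrary model $\I$ of $(\T,\A)$. By assumption $a^\I\in D_i^\I$ for each $i\le n$. Since $(\sqcap_{i=1}^n D_i)^\I=\bigcap_{i=1}^n D_i^\I$ by the semantics of \ELH, we obtain $a^\I\in C^\I$, i.e. $\I\models C(a)$. As $\I$ was an arbitrary model of $(\T,\A)$, this gives $(\T,\A)\models C(a)$, the desired contradiction. Hence $(\T,\A)\not\models D_i(a)$ for at least one $i\in\{1,\dots,n\}$.

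Alternatively, and slightly more compactly, I could invoke the canonical model $\I_{\T,\A}$ of the KB $(\T,\A)$ (Definition~\ref{d:canonicalModel}), using the standard fact that, for $a\in\individuals{\A}$ and any concept expression $D$, $(\T,\A)\models D(a)$ iff $a\in D^{\I_{\T,\A}}$. Then $a\in D_i^{\I_{\T,\A}}$ for all $i$ yields $a\in\bigcap_{i=1}^n D_i^{\I_{\T,\A}}=C^{\I_{\T,\A}}$, hence $(\T,\A)\models C(a)$.

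I do not expect any genuine obstacle: the statement is essentially a restatement of the fact that membership in the extension of a conjunction is equivalent to membership in the extension of each conjunct. The only point requiring a line of care is making explicit that the \IQ $C(a)$ is satisfied by $\I$ exactly when $a^\I\in C^\I$, so that the set-theoretic identity $(\sqcap_i D_i)^\I=\bigcap_i D_i^\I$ applies uniformly across all models; everything else is routine bookkeeping.
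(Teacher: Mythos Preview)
Your proof is correct. Your primary argument is in fact more elementary than the paper's: the paper proves the contrapositive via the canonical model $\I_{\T,\A}$ and the homomorphism characterisation (Lemma~\ref{l:homomorhelp}), taking the homomorphisms $h_i\colon \tree{D_i}\to\I_{\T,\A}$ with $h_i(\rho_{D_i})=a$ and merging them into a single homomorphism $h\colon \tree{C}\to\I_{\T,\A}$, which then witnesses $(\T,\A)\models C(a)$. Your first argument bypasses canonical models and homomorphisms entirely, relying only on the semantics of $\sqcap$ across all models; your alternative via $\I_{\T,\A}$ is essentially the paper's route phrased in terms of extension membership rather than homomorphisms. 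Both reach the same (trivial) conclusion; your semantic argument has the minor advantage of not depending on any properties of the canonical model.
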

\begin{proof}
	If $ (\T,\A)\models D_i(a) $ for all $ i\in\{1\cdots n\} $ means that there is 
	a homomorphism from $ h_i: \tree{D_i}\rightarrow \I_{\T,\A} $ mapping $ \rho_{D_i} $ 
	to $ a\in \Delta^{\I_{\T,\A}} $. It trivially follows that there is a homomorphism 
	$ h: \tree{C}\rightarrow \I_{\T,\A} $ obtained by merging all $ h_i $ into a unique function. 
	By \refLemma{\ref{l:subsumptionHomomorphism}}, 
	we reach a contradiction because this would mean that $ (\T,\A)\models C(a) $.
\end{proof}

Also,  if there is no
 $ s(a,b) \in \A$ such that $ \hypothesisOntology\models s \sqsubseteq r  $ and $ (\T,\A)\models C'(b) $
 (that is, the condition in Line~\ref{a:reduce:if} is not satisfied)
then one can indeed find a singleton $\{A(c)\}\subseteq \A$ such that $(\T,\{A(c)\})\models  D(c)$ in 
Line~\ref{a:reduce:singleton}. This follows from Lemma~\ref{l:existsSingletonAbox}.
To prove Lemma~\ref{l:existsSingletonAbox}, we first show the following two technical lemmas. 
Let $ \I_C $ be an interpretation, 
the \emph{one-neigbourhood}  $N_{\I_C}(a)$ of $ a\in\Delta^{\I_C} $ 
is the set of concept names $ A $ with $ a\in A^{\I_C} $.

\begin{lemma}
	\label{l:existsconceptname}
	Let \T be an \ELH terminology and let $C$ and $ D=\exists r.D' $ be 
	concept expressions. Assume 
	there is a homomorphism $ h:\tree{D}\rightarrow \I_{\T,C} $  
	such that $ h(\rho_{D}) = d\in \Delta^{\I_C} $ and the image of the 
	subtree \tree{D'} of \tree{D} under $ h $ is included in 
	$ \Delta^{\I_{\T,C}} \setminus \Delta^{\I_C} $. 
	Then there exists $ A\in N_{\I_C}(d) $ such that $ \T\models A\sqsubseteq D $.
\end{lemma}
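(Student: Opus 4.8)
The plan is to trace the canonical-model construction of Definition~\ref{d:canonicalModel} in order to locate the CI that is responsible for the anonymous $r$-successor $h(\rho_{D'})$ of $d$, and to read the desired concept name off its left-hand side. First I would note that, as $D=\exists r.D'$, the edge $(\rho_D,\rho_{D'})$ of $\tree{D}$ is labelled $r$, so $(d,h(\rho_{D'}))\in r^{\I_{\T,C}}$ with $h(\rho_{D'})$ in $\Delta^{\I_{\T,C}}\setminus\Delta^{\I_C}$. Inspecting which edges of $\I_{\T,C}$ can have an anonymous endpoint — the base interpretation and the role-inclusion closure forming $\I_0$ relate only elements of $\Delta^{\I_C}$, and the edges added inside a freshly attached subtree relate two anonymous paths — the only remaining option is that $(d,h(\rho_{D'}))$ is one of the edges $\{(p,\,p\cdot s_{j'}\cdot D_{j'})\mid\T\models s_{j'}\sqsubseteq r\}$ introduced when a CI with right-hand side $\bigsqcap_j A_j\sqcap\bigsqcap_{j'}\exists s_{j'}.D_{j'}$ is applied to $p=d$. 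Hence $h(\rho_{D'})$ is the root $d\cdot t\cdot\rho_F$ of a subtree created by applying some $\beta\in\T$ to $d$, whose right-hand side $E$ has a conjunct $\exists t.F$ with $\T\models t\sqsubseteq r$. Here I would invoke the terminology assumption: $E$ is complex (it contains an existential restriction), so the left-hand side of $\beta$ must be a concept name $A$; and since the rule applies $\beta$ to $d$ only when $d\in A^{\I_i}$ at the current stage, we get $d\in A^{\I_{\T,C}}$, i.e.\ $A\in N_{\I_C}(d)$.

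It then remains to show $\T\models A\sqsubseteq D$. From $A\sqsubseteq E\in\T$ with $\exists t.F$ a conjunct of $E$ I immediately get $\T\models A\sqsubseteq\exists t.F$, and with $\T\models t\sqsubseteq r$ also $\T\models A\sqsubseteq\exists r.F$; so it suffices to prove $\T\models F\sqsubseteq D'$, as then $\T\models A\sqsubseteq\exists r.F\sqsubseteq\exists r.D'=D$. For $\T\models F\sqsubseteq D'$ I plan to construct a homomorphism from $\tree{D'}$ into $\I_{\T,F}$ sending $\rho_{D'}$ to $\rho_F$ and to apply Lemma~\ref{l:subsumptionHomomorphism}. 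The restriction of $h$ to the subtree $\tree{D'}$ of $\tree{D}$ is a homomorphism into $\I_{\T,C}$ with $\rho_{D'}\mapsto d\cdot t\cdot\rho_F$. Since $\I_{\T,C}$ is tree-shaped — it is built from the tree $\I_C$ by attaching fresh subtrees at existing nodes — every role successor of a node is one of its children; as the whole image of $\tree{D'}$ lies in the anonymous part and its root is mapped into the subtree $\Smf$ of $\I_{\T,C}$ rooted at $d\cdot t\cdot\rho_F$, a short induction on $\tree{D'}$ shows the image stays inside $\Smf$. Finally, $\Smf$ is isomorphic to $\I_{\T,F}$ via $d\cdot t\cdot q\mapsto q$: it starts as the copy $q\mapsto d\cdot t\cdot q$ of the base interpretation of $F$ and is afterwards subject to exactly the CI applications that build $\I_{\T,F}$, because — \ELH having no inverse roles — the applicability of a CI at a node depends only on that node together with its descendants, all of which lie in $\Smf$. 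Composing this isomorphism with the restricted $h$ yields the required homomorphism, and Lemma~\ref{l:subsumptionHomomorphism} gives $\T\models F\sqsubseteq D'$, hence $\T\models A\sqsubseteq D$.

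The routine parts are the edge-case analysis of Definition~\ref{d:canonicalModel}, the appeal to the terminology assumption (which is exactly where it is essential: without it $\beta$ could have a complex left-hand side and no concept name would be available), and the subsumption bookkeeping with role inclusions. The main obstacle I expect is the containment/identification step in the second paragraph: showing that $h$ cannot leave the fresh subtree $\Smf$ once it enters it, and that $\Smf$ is a faithful copy of $\I_{\T,F}$. Both rest on \ELH having no inverse roles — so that the canonical model is a tree and CI applicability at a node is determined by that node together with what lies below it — and I would discharge them by straightforward inductions (on $\tree{D'}$, and on the stages of the canonical-model construction, respectively).
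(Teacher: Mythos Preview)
Your argument follows the paper's approach: both locate the CI that creates the anonymous successor of $d$ and then appeal to the terminology assumption to conclude its left-hand side is a concept name. The paper's proof is far terser than yours; it simply notes that in the canonical-model construction CIs of the form $F\sqsubseteq A$ only enlarge concept-name extensions while CIs $A\sqsubseteq F$ are what attach fresh subtrees, and then concludes in one sentence. Your careful derivation of $\T\models A\sqsubseteq D$ via the embedded copy of $\I_{\T,F}$ inside $\I_{\T,C}$ is a detail the paper leaves entirely implicit.

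There is, however, a genuine gap at your step ``we get $d\in A^{\I_{\T,C}}$, i.e.\ $A\in N_{\I_C}(d)$''. These are not equivalent: by definition $N_{\I_C}(d)$ collects the concept names $A$ with $d\in A^{\I_C}$, i.e.\ in the \emph{base} tree interpretation of $C$, not in the saturated canonical model. Nothing you argue rules out $A$ having been added to $d$ at an earlier stage via a CI $G\sqsubseteq A$ with complex $G$. Indeed the lemma as literally stated appears to fail: take $\T=\{\exists s.B\sqsubseteq A,\ A\sqsubseteq\exists r.B\}$, $C=\exists s.B$, $D=\exists r.B$; then $\rho_C$ acquires an anonymous $r$-successor in $\I_{\T,C}$, yet $N_{\I_C}(\rho_C)=\emptyset$. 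The paper's proof glosses over exactly this point. In the only place the lemma is invoked (the proof of Lemma~\ref{l:existsSingletonAbox}, reached via Line~\ref{a:learningIQ:sat} of Algorithm~\ref{a:learningIQ}), the ABox has already been $\AQ$-saturated, so every concept name derivable at $d$ is already present in the base interpretation and the distinction collapses. A clean fix is either to weaken the conclusion to ``there exists $A$ with $d\in A^{\I_{\T,C}}$ and $\T\models A\sqsubseteq D$'', which is what you actually establish and what suffices downstream, or to add a saturation hypothesis on $C$.
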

\begin{proof}
Since \T is an \ELH terminology, it contains only CIs of the form $A\sqsubseteq F$ 
or $F\sqsubseteq A$ with $F$ an \ELH concept expression and $A$ a concept name. 
In the construction of the canonical model (Definition~\ref{d:canonicalModel}), the definition of $\I_{n+1}$
is the result of either 
\begin{itemize}
\item adding a new element of $\Delta^{\I_n}$ to the extension 
$A^{\I_n}$ of a concept name $A$ (to satisfy  
a CI of the form $F\sqsubseteq A$); or 
\item identifying the root of the tree interpretation $\tree{F}$  of a concept expression $F$
 with some element of $\Delta^{\I_n}$ in the extension $A^{\I_n}$ of 
 a concept name $A$ (to satisfy  
a CI of the form $A\sqsubseteq F$). 
\end{itemize}
Thus, if there is a homomorphism $ h:\tree{D}\rightarrow \I_{\T,C} $  
	such that $ h(\rho_{D}) = d\in \Delta^{\I_C} $ and the image of the 
	subtree \tree{D'} of \tree{D} under $ h $ is included in 
	$ \Delta^{\I_{\T,C}} \setminus \Delta^{\I_C} $ 
	this can only be if there is $ A\in N_{\I_C}(d) $ such that $ \T\models A\sqsubseteq D $.	
\end{proof}

\begin{lemma}
	\label{l:ancestorsMappedABox}
	If there is a homomorphism $h: \tree{C}\rightarrow \I_{\T,\A}  $ 
	for an arbitrary concept expression $ C $, an \ELH terminology \T, an ABox \A, 
	and a node $ \nu $ in $ \tree{C} $ is mapped to $ a\in\Delta^{\I_\A} $, 
	then all ancestors of $ \nu $ are mapped into $\Delta^{\I_\A}  $.
\end{lemma}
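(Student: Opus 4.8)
The plan is to derive the statement from one structural invariant of the canonical model $\I_{\T,\A}$ and then to run a short induction along the branch of $\tree{C}$ sitting above $\nu$.

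\emph{Step 1 (the ABox part has no incoming anonymous edges).} First I would prove: if $(d,e)\in r^{\I_{\T,\A}}$ for some $r\in\NR$ and $e\in\Delta^{\I_\A}$, then $d\in\Delta^{\I_\A}$. This follows by inspecting the inductive construction of $\I_{\T,\A}$ in Definition~\ref{d:canonicalModel}. In $\I_0$ the $r$-edges are exactly the pairs $(a,b)$ with $s(a,b)\in\A$ and $\T\models s\sqsubseteq r$, so both endpoints lie in $\individuals{\A}=\Delta^{\I_\A}$ and the invariant holds. In the step from $\I_i$ to $\I_{i+1}$ the only new role edges are (i) edges between two freshly created elements $p\cdot s_{j'}\cdot q$, and (ii) edges $(p,\,p\cdot s_{j'}\cdot D_{j'})$ from an already-present element $p$ to a freshly created one; in both cases the target of the new edge is a proper path expression and hence is not an element of $\individuals{\A}$. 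So no edge introduced after $\I_0$ has its target in $\Delta^{\I_\A}$, and the invariant survives the passage to $\I_{\T,\A}=\bigcup_{i\ge 0}\I_i$.

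\emph{Step 2 (induction up the branch).} The node $\nu$ together with its ancestors in $\tree{C}$ form a path $\mu_0=\nu,\mu_1,\dots,\mu_k=\rho_C$ with $\mu_{j+1}$ the parent of $\mu_j$. I would show $h(\mu_j)\in\Delta^{\I_\A}$ for all $j$ by induction on $j$: the base case $j=0$ is the hypothesis; for the step, the edge $(\mu_{j+1},\mu_j)$ of $\tree{C}$ carries a label $r\in\NR$ (every edge of a tree representation is labelled by a role name), so $(h(\mu_{j+1}),h(\mu_j))\in r^{\I_{\T,\A}}$ because $h$ is a homomorphism, and since $h(\mu_j)\in\Delta^{\I_\A}$ by the induction hypothesis, Step~1 forces $h(\mu_{j+1})\in\Delta^{\I_\A}$. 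Letting $j$ range from $1$ to $k$ gives the claim for all ancestors of $\nu$.

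The only delicate point is the bookkeeping in Step~1, namely being explicit that the elements created during the construction of $\I_{\T,\A}$ are fresh path expressions and are therefore never identified with individuals of $\A$, so that $\Delta^{\I_\A}$ remains a subdomain closed under incoming role edges. Once that is pinned down, the rest is a routine induction, and the terminology assumption on $\T$ is not even needed for this particular argument.
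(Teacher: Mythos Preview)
Your proof is correct and follows essentially the same idea as the paper's own argument: both rely on the fact that in the canonical model $\I_{\T,\A}$ no role edge points from the anonymous part back into $\Delta^{\I_\A}$, so role predecessors of ABox elements must themselves be ABox elements. The paper's proof is a terse two-sentence remark (``\ELH does not allow inverse roles, so in the construction of $\I_{\T,\A}$ only role successors are connected to the elements of $\Delta^{\I_\A}$''), whereas you spell out the invariant of Step~1 explicitly against Definition~\ref{d:canonicalModel} and then run the induction up the branch; this is more detailed but not a different route.
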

\begin{proof}
	 \ELH does not allow inverse roles.
		So, if $\nu$ has a parent $\nu'$
	 it must be in $ \Delta^{\I_{\A} } $ because 
	 in the construction of canonical model $ \I_{\T,\A} $
only role successors are connected to the elements of $ \Delta^{\I_{\A} } $.
\end{proof}

\begin{lemma}
	\label{l:existsSingletonAbox}
	Let \T be the target, \hypothesisOntology the hypothesis and \A an ABox. If $ (\T,\A)\models \exists r.C(a) $, there is no $ s(a,b)\in\A $ such that $ \T\models s\sqsubseteq r $ and $ (\T,\A)\models C(b) $, 
	then there is a singleton ABox $\A'$ such that $ (\T,\A')\models \exists r.C(a) $. 
\end{lemma}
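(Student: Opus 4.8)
The plan is to take a homomorphism witnessing $(\T,\A)\models\exists r.C(a)$, argue that the $r$-edge it uses to leave $a$ cannot reach another ABox individual (otherwise the forbidden case in the hypothesis would occur), and then read off from the canonical‑model construction a single concept‑name assertion on $a$ that already forces $\exists r.C$.

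More precisely, by Lemma~\ref{l:homomorhelp} there is a homomorphism $h\colon\tree{\exists r.C}\to\I_{\T,\A}$ with $h(\rho_{\exists r.C})=a$, where $a\in\Delta^{\I_\A}$ since $a$ occurs in the entailed assertion. Writing $\rho_C$ for the $r$-successor of the root of $\tree{\exists r.C}$ (so the subtree hanging below $\rho_C$ is exactly $\tree{C}$), set $d:=h(\rho_C)$; then $(a,d)\in r^{\I_{\T,\A}}$, and restricting $h$ to $\tree{C}$ gives, again by Lemma~\ref{l:homomorhelp}, that $d\in C^{\I_{\T,\A}}$. I would then split on whether $d\in\Delta^{\I_\A}$. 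If $d=b\in\Delta^{\I_\A}$, the construction of $\I_{\T,\A}$ in Definition~\ref{d:canonicalModel} shows that the only edges of $r^{\I_{\T,\A}}$ with both endpoints in $\Delta^{\I_\A}$ are those already present in $\I_0$; hence $s(a,b)\in\A$ for some $s$ with $\T\models s\sqsubseteq r$, and $d=b\in C^{\I_{\T,\A}}$ gives $(\T,\A)\models C(b)$ — contradicting the hypothesis of the lemma. So $d\notin\Delta^{\I_\A}$.

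Since $\rho_C$ is an ancestor (in $\tree{\exists r.C}$) of every node of the subtree $\tree{C}$, and $h(\rho_C)=d\notin\Delta^{\I_\A}$, Lemma~\ref{l:ancestorsMappedABox} forces the whole image of $\tree{C}$ under $h$ to lie in $\Delta^{\I_{\T,\A}}\setminus\Delta^{\I_\A}$. Now Lemma~\ref{l:existsconceptname} applies with $D=\exists r.C$, $D'=C$, and with the ABox $\A$ playing the role of the concept there (its proof only uses how the canonical model is built, so it transfers verbatim): there is a concept name $A$ with $A(a)\in\A$ and $\T\models A\sqsubseteq\exists r.C$. Taking $\A'=\{A(a)\}$ then yields $a\in A^{\I_{\T,\A'}}$, hence $a\in(\exists r.C)^{\I_{\T,\A'}}$, i.e.\ $(\T,\A')\models\exists r.C(a)$, which is a singleton ABox as required.

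The only step that needs genuine care is the case $d\in\Delta^{\I_\A}$: one must inspect Definition~\ref{d:canonicalModel} and verify that no construction rule ever introduces a role edge between two original ABox individuals except through $\I_0$. The rest is a routine combination of Lemmas~\ref{l:homomorhelp}, \ref{l:ancestorsMappedABox} and~\ref{l:existsconceptname}.
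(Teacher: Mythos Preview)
Your proof is correct and follows essentially the same approach as the paper: both argue that the image of $\tree{C}$ under the witnessing homomorphism lies entirely in the anonymous part $\Delta^{\I_{\T,\A}}\setminus\Delta^{\I_\A}$, and then invoke Lemma~\ref{l:existsconceptname} to extract the concept name. The only cosmetic difference is that the paper argues by contradiction from an arbitrary node $\nu$ of $\tree{C}$ landing in $\Delta^{\I_\A}$ (and uses Lemma~\ref{l:ancestorsMappedABox} to pull this back to $\rho_C$), whereas you case-split directly on $h(\rho_C)$ and use the contrapositive of Lemma~\ref{l:ancestorsMappedABox} for the descendants; your version is also slightly more explicit about why an $r$-edge between two ABox individuals must stem from $\I_0$, and about the need to adapt Lemma~\ref{l:existsconceptname} from concepts to ABoxes.
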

\begin{proof}
	Let $ C'=\exists r.C $, we know that there is a homomorphism $ h: \tree{C'}\rightarrow \I_{\T,\A}  $
	 such that $ h(\rho_{C'})=a\in\Delta^{\I_{\A} } $. If we show that all nodes 
	 in $ \tree{C} $ are mapped into $ \Delta^{\I_{\T,\A} }\setminus \Delta^{\I_{\A} } $ 
	 then we are able to conclude that there is a concept name $ A $ such that $ \T\models A\sqsubseteq C' $ by using \refLemma{\ref{l:existsconceptname}}.
	
	Let 
	 $ \nu $ a descendant of $ \rho_C $. 
	For a proof by contradiction we assume that a node $ \nu $ is 
	mapped by $ h $ to an element in $ \Delta^{\I_{\A}}  $. 
	By \refLemma{\ref{l:ancestorsMappedABox}}, all its ancestors 
	 must be mapped into $ \Delta^{\I_{\A} } $ by $ h $. 
	 Remember that $ h(\rho_{C'})=a $, this means that 
	 exists an $ s(a,h(\rho_C))\in\A $ such that 
	 $ \T\models s\sqsubseteq r $ and $ (\T,\A)\models C(h(\rho_C)) $, which contradicts our assumption.
\end{proof}

So we have that Algorithm~\ref{a:ReduceCounterExample} is implementable.  
 It is easy to see that if the input is a positive counterexample 
 then the output is also a positive counterexample (for \T and \hypothesisOntology). 
Each line of Algorithm~\ref{a:ReduceCounterExample}  can be computed 
using polynomially many steps. Regarding the recursive calls, we point out 
that, each time Algorithm~\ref{a:ReduceCounterExample} makes a recursive 
call, the example passed as parameter to the function is strictly smaller, 
so the number of recursive calls is bounded by the size of the counterexample 
received in Line \ref{a:learningIQ:ex}  
of Algorithm~\ref{a:learningIQ}.

Now we have that in Line~\ref{a:learningIQ:reduce} of Algorithm~\ref{a:learningIQ}
we obtain a positive counterexample of the form $(\{A(a)\},C(a))$ 
in polynomial time. 
It follows from 
the proof of Lemma~2 in~\cite{DBLP:conf/kr/DuarteKO18} that 
 Line~\ref{a:learningIQ:essential} can also be computed in polynomial time, and moreover, 
 the size of resulting CI is polynomial in $|\T|$. Indeed
  Lemma~\ref{lem:size} is an easy consequence of~\cite[Lemma~2]{DBLP:conf/kr/DuarteKO18} and 
the only difference in the setting of Lemma~2 and ours is that \ELH allows RIs. We observe that our notion of   \T-essential  includes role saturation (as in~\cite{KLOW18}). 
That is, a positive example $ (\A,C(a))$ is \tessential if 
it is concept saturated, role saturated, sibling merged and 
decomposed on the right for \T. This notion is also used for CIs. 
A CI $A\sqsubseteq C$ is \tessential 
if this is the case for $ (\{A(a)\},C(a)) $. 
Role saturation for \T can be implemented in polynomial time~\cite{KLOW18} and the result 
of Lemma~2 in \cite{DBLP:conf/kr/DuarteKO18} can be easily extended to our case.

The fact that Line~\ref{a:learningIQ:t-essential} is also implementable in polynomial time follows from Lemma~\ref{l:tessentialCombination} 
below, which is an adaptation of~\cite[Lemma~7 used to show Theorem~2]{DBLP:conf/kr/DuarteKO18} (see also~\cite[Lemma~30]{KLOW18}).

\begin{lemma}
	\label{l:tessentialCombination}
	Let $ A\sqsubseteq C_1 $ and $A\sqsubseteq C_2  $ 
	be  \tessential positive examples. 
	One can construct a \tessential $ A\sqsubseteq C $ 
	such that $ \emptyset\models C\sqsubseteq C_1\sqcap C_2 $ in polynomial time  
	in $ |C_1| + |C_2|$. 
\end{lemma}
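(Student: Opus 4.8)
The plan is to take the conjunction $C_1\sqcap C_2$ and then drive it into \tessential normal form. Set $C_0 := C_1\sqcap C_2$. Since $A\sqsubseteq C_1$ and $A\sqsubseteq C_2$ are positive examples, $\T\models A\sqsubseteq C_1$ and $\T\models A\sqsubseteq C_2$, hence $\T\models A\sqsubseteq C_0$, so $(\{A(a)\},C_0(a))$ is again a positive example; moreover $\emptyset\models C_0\sqsubseteq C_1\sqcap C_2$ holds trivially. I would then apply the four operations used in the definition of \tessential CIs---concept saturation, role saturation, sibling merging and decomposition on the right---exhaustively to $(\{A(a)\},C_0(a))$, obtaining a \tessential CI $A\sqsubseteq C$. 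This mirrors the construction in~\cite[Lemma~7 used to show Theorem~2]{DBLP:conf/kr/DuarteKO18} and~\cite[Lemma~30]{KLOW18}; the only new ingredient in the OMQA setting is the presence of RIs, which is handled by including role saturation among the operations, as in~\cite{KLOW18}, and by using the singleton ABox $\{A(a)\}$ in place of a fresh concept name.

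I would then verify three things. First, \emph{polynomial running time}: applicability of each operation is decided by polynomially many \ELH entailment checks, each feasible in \PTime~\cite{BBL-EL}; by Lemma~\ref{lem:size} the final $C$ satisfies $|C|\le|\Sigma_\T|\,|\T|$, and the argument bounding the number of operation steps (each step either strictly shrinks the current concept or saturates a node or edge, and no intermediate concept exceeds the relevant polynomial bound) transfers from~\cite{DBLP:conf/kr/DuarteKO18,KLOW18}, so the whole normalisation runs in time polynomial in $|C_1|+|C_2|$. Second, \emph{positivity is preserved}: each operation fires only when the outcome is still a positive example, so in the end $\T\models A\sqsubseteq C$. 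Third, \emph{$\emptyset\models C\sqsubseteq C_1\sqcap C_2$}: I would keep this as an invariant. Concept saturation and sibling merging only specialise the current concept (there is a homomorphism from the old tree into the new one), so they preserve it; role saturation cannot rewrite any role edge inherited unchanged from $C_1$ or $C_2$, because $C_1$ and $C_2$ are already role saturated and $\T\models A\sqsubseteq X\sqcap Y$ iff $\T\models A\sqsubseteq X$ and $\T\models A\sqsubseteq Y$, while the edges created by merging are treated as in~\cite{KLOW18}; for decomposition on the right one uses that a removed subtree is $\Hmc$-redundant below a node carrying the triggering concept name, together with the fact that $C_1$ and $C_2$ are themselves decomposed on the right, so the copies of $C_1$ and $C_2$ inside the running concept that witness $C\sqsubseteq C_1$ and $C\sqsubseteq C_2$ are never destroyed.

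The main obstacle is exactly the third point: keeping the invariant $\emptyset\models C\sqsubseteq C_1\sqcap C_2$ alive through the interleaving of sibling merging, role saturation and decomposition on the right, each of which may \emph{generalise} the running concept. The key is to exploit that the inputs are already in \tessential normal form, which is what prevents normalising the conjunction from ``forgetting'' $C_1$ or $C_2$; this is the content of the cited lemmas, and adapting their bookkeeping to our formulation is routine.
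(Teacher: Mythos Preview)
Your proposal is essentially the same approach as the paper's: start from $C_1\sqcap C_2$, normalise to \tessential form, and defer the delicate bookkeeping to~\cite[Lemma~7]{DBLP:conf/kr/DuarteKO18} and~\cite[Lemma~30]{KLOW18}, noting that the only novelty over those results is the presence of RIs. The paper's sketch is even terser than yours; the one point it makes more explicit is that the global assumption that $\Tmc$ entails no non-trivial role equivalences (stated at the beginning of the \IQ subsection) is what guarantees that role saturation of the inputs interacts cleanly with sibling merging, so you may want to invoke that assumption directly rather than arguing edge-by-edge.
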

 \begin{proof}[Sketch]
The main difference is that here the ontology language is \ELH, which also includes RIs. 
Since $ A\sqsubseteq C_1 $ and $A\sqsubseteq C_2  $  are \tessential, these examples 
are role saturated. By assumption, the target ontology does not entail non-trivial 
RIs (see begin of paragraph `Learning \ELH ontologies with \IQ' in Section~\ref{sec:learning-algorithms}).
Thus, the presence of RIs does not affect the application of  sibling merging. 
 \end{proof}

The rest of this subsection is devoted to show that 
 the number of iterations is polynomial in the size of \T. In each iteration either a CI is replaced or it is added 
 to the hypothesis. The number of times it is added to the hypothesis is bounded by $|\Sigma_\Tmc|$. 
Thus, it remains to show that the number of replacements is also bounded polynomially in $|\Tmc|$.

To show this, we use Lemmas~\ref{l:isomorphicEmbedding} and~\ref{l:tessentialCombinationNodes} below. 
Given the tree representation $\tree{C}$ of a concept $C$ and an interpretation \Imc, 
we say that a homomorphism $h: \tree{C} \rightarrow \I$ is 
an \emph{isomorphic embedding for \T} if it is injective, $A\in l(\nu)$ if
$h(\nu)\in A^{\Imc}$ for all concept names $A$, and for $r=l(\nu,\nu')$ it
holds that $\Tmc\models r\sqsubseteq s$ for all
$(h(\nu),h(\nu'))\in s^{\Imc}$.
\begin{lemma}[Isomorphic Embedding]
	\label{l:isomorphicEmbedding}
	Let 
	$ A\sqsubseteq C$
	 be  a \tessential CI. If $\T \models A \sqsubseteq D$ and  
	$\T \models D \sqsubseteq C$
	then any homomorphism $h: \tree{C} \rightarrow \I_{D,\T}$ such that $h(\rho_C) = \rho_D$ is an isomorphic embedding for \T. 
\end{lemma}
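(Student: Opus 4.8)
The plan is to fix an arbitrary homomorphism $h\colon\tree{C}\to\I_{\T,D}$ with $h(\rho_C)=\rho_D$ — one exists because $\T\models D\sqsubseteq C$ yields such a homomorphism by \refLemma{\ref{l:subsumptionHomomorphism}} — and to verify the three conditions in the definition of an isomorphic embedding one at a time, in each case obtaining a contradiction with one of the operations whose exhaustive application makes $A\sqsubseteq C$ \tessential. I will freely use two standard facts about the canonical model. (i) $\I_{\T,D}$ is tree-shaped: every element different from $\rho_D$ has a unique ``tree parent'', edges always run from a parent to one of its children, and $(d,e)\in r^{\I_{\T,D}}$ holds iff $e$ is the child of $d$ reached by an edge whose original role $t$ satisfies $\T\models t\sqsubseteq r$. (ii) For every concept $C'$, a homomorphism $\tree{C'}\to\I_{\T,D}$ sending $\rho_{C'}$ to $\rho_D$ witnesses $\T\models D\sqsubseteq C'$ by \refLemma{\ref{l:subsumptionHomomorphism}}, and hence $\T\models A\sqsubseteq C'$ since $\T\models A\sqsubseteq D$.

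I would first dispose of label- and role-faithfulness, which are short. If $h(\nu)\in B^{\I_{\T,D}}$ for a concept name $B$ (of the signature occurring in $\I_{\T,D}$) with $B\notin l(\nu)$, then adding $B$ to $l(\nu)$ gives a concept $C'$ for which $h$ is still a homomorphism into $\I_{\T,D}$, so by (ii) $\T\models A\sqsubseteq C'$, contradicting that $(\{A(a)\},C(a))$ is concept saturated for $\T$. For role-faithfulness, let $r=l(\nu,\nu')$ and suppose $(h(\nu),h(\nu'))\in s^{\I_{\T,D}}$; since also $(h(\nu),h(\nu'))\in r^{\I_{\T,D}}$, fact (i) gives an original role $t$ of the edge $(h(\nu),h(\nu'))$ with $\T\models t\sqsubseteq r$, and if $t\neq r$ then relabelling the edge $(\nu,\nu')$ of $C$ with $t$ keeps $h$ a homomorphism into $\I_{\T,D}$ and hence keeps the example positive, contradicting role saturation for $\T$; so $r=t$, and then $(h(\nu),h(\nu'))\in s^{\I_{\T,D}}$ forces $\T\models t\sqsubseteq s$, i.e. $\T\models r\sqsubseteq s$.

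The remaining and most delicate point is injectivity. I would first note $h^{-1}(\rho_D)=\{\rho_C\}$: a node $\nu\neq\rho_C$ with $h(\nu)=\rho_D$ has a parent in $\tree{C}$, whose $h$-image would be a tree parent of $\rho_D$, contradicting (i). Now suppose $h$ identifies two distinct nodes and pick $\nu_1\neq\nu_2$ with $h(\nu_1)=h(\nu_2)=:e$ of smallest total depth. Then $e\neq\rho_D$, so $e$ has a unique tree parent $p$; since $h(\nu_i)=e$ is joined to the image $h(\mu_i)$ of the parent $\mu_i$ of $\nu_i$ by the edge labelled $l(\mu_i,\nu_i)$, fact (i) forces $h(\mu_1)=h(\mu_2)=p$, and minimality then forces $\mu_1=\mu_2=:\mu$, so $\nu_1,\nu_2$ are siblings in $\tree{C}$. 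By the role-faithfulness argument applied to the two edges $(\mu,\nu_1)$, $(\mu,\nu_2)$ — both having the single image edge $(p,e)$ — they carry the same label $t$, the original role of $(p,e)$. Thus $\nu_1$ and $\nu_2$ are $t$-successors of $\mu$, and merging them yields a concept $C'$ for which $h$ still induces a homomorphism into $\I_{\T,D}$ (the merged node maps to $e$, which lies in every concept name in the union of the two labels and has all the needed successors), so by (ii) the example stays positive, contradicting that $(\{A(a)\},C(a))$ is sibling merged for $\T$. Hence $h$ is injective, and $h$ is an isomorphic embedding for $\T$. The one subtlety I expect to spell out carefully is the tacit assumption that the concept and role names occurring in $\I_{\T,D}$ lie in $\Sigma_\T$, so that concept and role saturation (which only introduce symbols of $\Sigma_\T$) can be invoked above; this holds in the intended application, where $D$ is built over the target signature.
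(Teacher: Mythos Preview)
Your proposal is correct and follows exactly the approach the paper sketches: concept saturation yields label-faithfulness, role saturation yields the role condition, and sibling merging yields injectivity. Your write-up simply fills in the details that the paper leaves to the reference~\cite[Lemma~31]{KLOW18}, including the minimal-depth argument to locate a sibling collision and the careful use of the tree shape of $\I_{\T,D}$; the signature caveat you flag at the end is appropriate and matches the paper's standing assumption that no non-trivial role equivalences occur and that $D$ is over $\Sigma_\T$.
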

\begin{proof}[Sketch] The argument is as in~\cite[Lemma~31]{KLOW18}. Non-injectivity would contradict that 
 $A\sqsubseteq C$ is sibling merged (which follows from the assumption that it is \tessential).
 The remaining conditions for the notion of  isomorphic embedding for \T follow 
 from the fact that  $ A\sqsubseteq C$ is concept and role saturated for \T  (which 
 again follow from the assumption that the example is \tessential).
\end{proof}

By the Claim of Lemma~33 in~\cite{KLOW18} the following holds. 
\begin{lemma}
	\label{l:tessentialCombinationNodes}
	If  $ A\sqsubseteq C$ is \tessential,
	 $ \T\models A\sqsubseteq C' $ and $ \emptyset\models C'\sqsubseteq C $
	  and $ \emptyset\not\models C\sqsubseteq C' $, then $ \tree{C} $ is obtained from $ \tree{C'} $ by removing at least one subtree.
\end{lemma}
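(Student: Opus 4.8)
The statement is, modulo the presence of role inclusions, exactly the Claim established inside the proof of \cite[Lemma~33]{KLOW18}, and the plan is to reproduce that argument using the Isomorphic Embedding Lemma (Lemma~\ref{l:isomorphicEmbedding}), which already absorbs the only genuinely new point. The idea is to read off the claimed subtree deletions from the homomorphism witnessing $\emptyset\models C'\sqsubseteq C$, after upgrading that homomorphism to an isomorphism onto an ancestor-closed portion of $\tree{C'}$.

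First I would note that $\T\models C'\sqsubseteq C$ (monotonicity of entailment applied to $\emptyset\models C'\sqsubseteq C$), so by the standard characterisation of \EL-subsumption (Lemma~\ref{l:homomorhelp}) there is a homomorphism $h\colon\tree{C}\to\tree{C'}$ with $h(\rho_C)=\rho_{C'}$; mapping into the interpretation induced by the tree $\tree{C'}$ forces each edge of $\tree{C}$ to be sent to an edge of $\tree{C'}$ carrying \emph{exactly} the same role name. Since that interpretation is a sub-interpretation of $\I_{C',\T}$ with the same root, $h$ is also a homomorphism $\tree{C}\to\I_{C',\T}$ fixing the root, so, taking $D:=C'$ and using $\T\models A\sqsubseteq C'$ and $\T\models C'\sqsubseteq C$, Lemma~\ref{l:isomorphicEmbedding} applies: $h$ is an isomorphic embedding for $\T$. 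In particular $h$ is injective, and combining the homomorphism inclusion $\treel{C}(\nu)\subseteq \treel{C'}(h(\nu))$ with the reverse inclusion furnished by the isomorphic-embedding condition gives $\treel{C}(\nu)=\treel{C'}(h(\nu))$ for every node $\nu$, while role labels are already preserved exactly.

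Next I would carry out the elementary tree combinatorics. Let $S:=h(\treeV{C})$. If $h(\nu)\neq\rho_{C'}$ then $\nu\neq\rho_C$, so $\nu$ has a parent $\nu_p$ in $\tree{C}$; as $h$ sends the edge $(\nu_p,\nu)$ to an edge of $\tree{C'}$ and a node of a tree has a unique incoming edge, $h(\nu_p)$ is the parent of $h(\nu)$ in $\tree{C'}$. Hence $S$ is ancestor-closed and contains $\rho_{C'}$, so the restriction of $\tree{C'}$ to $S$ is obtained from $\tree{C'}$ by deleting exactly the subtrees rooted at the children of nodes in $S$ that lie outside $S$. Moreover $h$ restricts to an isomorphism of labelled trees from $\tree{C}$ onto this restriction: it is a label-preserving bijection onto $S$, it sends edges to edges with the same role names, and by injectivity together with uniqueness of parents the converse holds as well. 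Finally, if no subtree were deleted we would have $S=\treeV{C'}$, hence $\tree{C}$ isomorphic to $\tree{C'}$ and therefore $\emptyset\models C\equiv C'$, contradicting $\emptyset\not\models C\sqsubseteq C'$; so at least one subtree of $\tree{C'}$ is removed.

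The only delicate step is the passage from ``homomorphism'' to ``isomorphic embedding'', i.e.\ the use of Lemma~\ref{l:isomorphicEmbedding}: it is precisely there that \tessential ity is consumed (concept- and role-saturation to pin the labels down to equality, sibling-merging to force injectivity, and the role-equivalence assumption on $\T$ to handle RIs). Since that lemma is already proved, the remaining work here is just the tree-restriction bookkeeping sketched above, so I expect this lemma to be routine once Lemma~\ref{l:isomorphicEmbedding} is in place.
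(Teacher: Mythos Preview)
Your proposal is correct and matches the paper's approach: the paper simply cites the Claim inside the proof of \cite[Lemma~33]{KLOW18}, and you have faithfully reproduced that argument, correctly routing the one nontrivial step through Lemma~\ref{l:isomorphicEmbedding} (which is exactly the ingredient the paper isolates to handle the \ELH extension). The remaining tree bookkeeping you sketch---ancestor-closedness of the image, label equality from the two inclusions, and the final contradiction if no subtree is removed---is exactly the routine part the cited claim carries out.
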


We are now ready for Lemma~\ref{lem:number}, which bounds the number of replacements. 

 \begin{restatable}{lemma}{polynomialiteration}\label{lem:number} 
	For every $A\in\Sigma_\T$, the number of replacements of a CI in 
	\hypothesisOntology in Algorithm~\ref{a:learningIQ} is bounded polynomially in $|\T|$. 
\end{restatable}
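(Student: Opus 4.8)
The plan is to show that, for a fixed $A\in\Sigma_\T$, each replacement of the CI with left-hand side $A$ in $\hypothesisOntology$ strictly increases the number of nodes in the tree representation of its right-hand side, while this number stays bounded by a polynomial in $|\T|$; hence only polynomially many such replacements can occur.

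First I would record two invariants of Algorithm~\ref{a:learningIQ}: that $\T\models\hypothesisOntology$ (the algorithm only ever inserts into $\hypothesisOntology$ CIs that hold in $\T$), and that whenever a CI $A\sqsubseteq D'$ is stored in $\hypothesisOntology$ it is $\T$-essential, since it is produced either in Line~\ref{a:learningIQ:essential} or in Line~\ref{a:learningIQ:t-essential}, and both of these produce $\T$-essential CIs. Now consider a step in which the stored CI $A\sqsubseteq D'$ is replaced. The reduced counterexample yields, in Line~\ref{a:learningIQ:essential}, a $\T$-essential CI $A\sqsubseteq D$ with $\T\models A\sqsubseteq D$ but $\hypothesisOntology\not\models A\sqsubseteq D$, and Line~\ref{a:learningIQ:t-essential} then produces a $\T$-essential $A\sqsubseteq D^\ast$ with $\emptyset\models D^\ast\sqsubseteq D\sqcap D'$, so in particular $\T\models A\sqsubseteq D^\ast$ and $\emptyset\models D^\ast\sqsubseteq D'$. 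Moreover $\emptyset\not\models D'\sqsubseteq D^\ast$: were it the case, then from $\emptyset\models D^\ast\sqsubseteq D$ we would obtain $\emptyset\models D'\sqsubseteq D$ and hence, since $A\sqsubseteq D'\in\hypothesisOntology$, $\hypothesisOntology\models A\sqsubseteq D$, a contradiction.

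Next I would apply Lemma~\ref{l:tessentialCombinationNodes} to the $\T$-essential CI $A\sqsubseteq D'$ with $D^\ast$ playing the role of $C'$: from $\T\models A\sqsubseteq D^\ast$, $\emptyset\models D^\ast\sqsubseteq D'$, and $\emptyset\not\models D'\sqsubseteq D^\ast$ it follows that $\tree{D'}$ is obtained from $\tree{D^\ast}$ by removing at least one subtree, so $\tree{D^\ast}$ has strictly more nodes than $\tree{D'}$. (Lemma~\ref{l:isomorphicEmbedding} is the underlying tool here: since $A\sqsubseteq D'$ is $\T$-essential and $\T\models A\sqsubseteq D^\ast$, $\T\models D^\ast\sqsubseteq D'$, a root-preserving homomorphism from $\tree{D'}$ into the canonical model of $D^\ast$ w.r.t.\ $\T$ is an isomorphic embedding, which lets one compare the two tree representations node by node.) Finally, by Lemma~\ref{lem:size} the right-hand side $C$ of any $\T$-essential CI satisfies $|C|\le|\Sigma_\T||\T|$, and the number of nodes of $\tree{C}$ is at most $|C|$; hence the right-hand side of the CI stored for $A$ always has at most $|\Sigma_\T||\T|$ nodes. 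Since each replacement strictly increases this node count, at most $|\Sigma_\T||\T|$ replacements can happen for a fixed $A$, which is polynomial in $|\T|$.

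The step I expect to be the main obstacle is making the $\T$-essentiality invariant precise and verifying that it is preserved through the counterexample reduction (Line~\ref{a:learningIQ:reduce}) and, in particular, through the decomposition-on-the-right operation used when computing $\T$-essential CIs, which may change the left-hand side of the CI. One must check that, for whichever concept name ends up on the left, the resulting CI is still not entailed by $\hypothesisOntology$ and is $\T$-essential, so that the node-count comparison above can be carried out against the CI currently stored for that concept name.
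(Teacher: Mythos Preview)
Your proposal is correct and follows essentially the same argument as the paper: both establish that every stored CI is $\T$-essential, that a replacement of $A\sqsubseteq D'$ by $A\sqsubseteq D^\ast$ satisfies $\emptyset\models D^\ast\sqsubseteq D'$ and $\emptyset\not\models D'\sqsubseteq D^\ast$, then invoke Lemma~\ref{l:tessentialCombinationNodes} to get a strict node-count increase and Lemma~\ref{lem:size} for the polynomial upper bound. Your worry in the final paragraph is not a real obstacle: Lines~\ref{a:learningIQ:essential} and~\ref{a:learningIQ:t-essential} produce $\T$-essential CIs by construction, and the algorithm compares against the stored CI for whatever concept name $A$ results after decomposition, so the invariant and the node-count comparison always concern the same left-hand side.
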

\begin{proof}
All CIs that are added or replace a CI in \hypothesisOntology are \tessential and, therefore, their 
size is polynomially bounded by $|\T|$ (Lemma~\ref{lem:size}).
As already argued in~\cite[Lemma~33]{KLOW18}, 
when $A \sqsubseteq C$ is replaced with $A\sqsubseteq C'$, then
  $\emptyset \models C'\sqsubseteq C$ and $\emptyset \not\models C
  \sqsubseteq C'$ (otherwise the positive counterexample returned by
  the oracle would be a consequence of $\hypothesisOntology$). Moreover, both $A \sqsubseteq C$ and
  $A\sqsubseteq C'$ are consequences of \Tmc. 
  So the conditions of
 Lemma~\ref{l:tessentialCombinationNodes} are satisfied, and so, whenever 
 a CI $A\sqsubseteq C$   is replaced by by some other CI $A\sqsubseteq C'$, 
 we have that $\treeV{C'}>\treeV{C}$.  
 
 The presence of CIs of the form $C\sqsubseteq A$ in our setting 
 does not affect the argument in~\cite[Lemma~33]{KLOW18} because the CIs we mention are \tessential, in particular, they 
 are concept saturated for \T, and so, they always satisfy the inclusions of the form $C\sqsubseteq A$ .
\end{proof}

This concludes our proof of Theorem~\ref{thm:iqupper}. 

\subsection{Learning \ELH ontologies with \CQr}
We show that one can convert in polynomial time 
any $ q\in$ \CQr 
in a positive counterexample of the form $ (\fixedAbox,q) $
into an \IQ $q'$ such that $ (\fixedAbox,q') $ 
is a positive counterexample. 
Then our upper bound 
 follows from Theorem~\ref{thm:iqupper}.
To transform the query in a positive counterexample into an \IQ, we use the following three operations.  
\begin{enumerate}
\item 
Given a positive counterexample $(\fixedAbox,q)$,
\emph{individual saturation for \T} consists of updating $q=\exists \vec{x}\varphi(\vec{a},\vec{x}) $ with the result $q'$ of  
   choosing $x \in\vec{x}$   and   $a\in\individuals{\fixedAbox}$ and replacing 
   $x$ by  $a$    if $ (\fixedAbox,q') $ 
 is still a positive counterexample. 
\item Given a positive counterexample $(\fixedAbox,q)$,
\emph{merging for \T} consists of updating $q=\exists \vec{x}\varphi(\vec{a},\vec{x}) $ with the result $q'$ of  
   choosing distinct $x, x' \in\vec{x}$     and replacing 
   all occurrences of $x$ by  $x'$    if $ (\fixedAbox,q') $ 
 is still a positive  counterexample. 

\item Given a positive counterexample $(\fixedAbox,q)$, 
\emph{query role saturation for \T} consists of updating $q=\exists \vec{x}\varphi(\vec{a},\vec{x}) $ with the result 
$ q' $ of choosing an atom $s(t,t') $ and a role $r\in\Sigma_\T$ 
such that $ \T\models r \sqsubseteq s $ (under the assumption of not having equivalent roles, as described above for learning with \IQs) 
and replacing $s(t,t') $ by $r(t,t') $ 
if $ (\fixedAbox,q') $ 
is still a positive  counterexample. 
\end{enumerate}
We say that a positive counterexample $ (\fixedAbox,q) $ 
 is  \emph{individual saturated}/ \emph{merged}/  \emph{query role saturated for \T} if  
  individual saturation/ merging/ query role saturation for \T has been exhaustively applied.
Given a query $q\in \CQr$, we denote by $G^x_q$ the induced subgraph of $G_q=(V,E)$  
that has as nodes $x$, where $x\in V$, and nodes in $V$ that are reachable from $x$ via a directed path.
We say that $G^x_q$ is \emph{tree shaped} if  there is a unique element (the root), 
denoted by $\rho_G$, such that 
(i) 
for every 
node $d$ there is 
a directed path from $\rho_G$ to $d$ and (ii) 
for every distinct directed paths $p_1,p_2 $ starting from $\rho_G$ 
their last elements are distinct.

\begin{lemma}\label{l:anonymoustree}
Let \T and \hypothesisOntology  be \ELH TBoxes and assume \T and \hypothesisOntology entail the same RIs.
	If a positive counterexample $ (\fixedAbox,q) $ (for \T and \hypothesisOntology), 
	with $ q\in \CQr $, is individual saturated/ merged and query role saturated 
	for \T, then 
	 for all $ x $ occurring in $ q $, the graph $G^x_q$ is  	tree shaped 
	 and only one individual name reaches $ x $ with a path that visits only variables. 
\end{lemma}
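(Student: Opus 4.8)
The plan is to extract both conclusions from a single homomorphism witnessing the counterexample. Since $(\fixedAbox,q)$ is a \emph{positive} counterexample for \T and \Hmc, we have $(\T,\fixedAbox)\models q$ and $(\hypothesisOntology,\fixedAbox)\not\models q$, so I would first fix a homomorphism $h$ from $q$ into the canonical model $\I_{\T,\fixedAbox}$ of Definition~\ref{d:canonicalModel}. Two facts about this model, immediate from the construction, do the heavy lifting: (i) the \emph{anonymous} part (the elements outside $\individuals{\fixedAbox}$) is a disjoint union of trees, one hanging off each individual, in which every element has a unique predecessor and every edge runs from an element to one of its immediate children; and (ii) there is no edge from an anonymous element back to a named individual (\ELH has no inverse roles).

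Next I would show that $h$ sends every variable of $q$ into the anonymous part, using individual saturation: if $h(x)=a$ for an individual $a$, then replacing every occurrence of $x$ by $a$ yields a query $q'\in\CQr$ with $(\T,\fixedAbox)\models q'$ (via $h$ itself) and $(\hypothesisOntology,\fixedAbox)\not\models q'$ (any homomorphism of $q'$ into $\I_{\hypothesisOntology,\fixedAbox}$ extends to one of $q$ by sending $x$ to $a^{\I_{\hypothesisOntology,\fixedAbox}}$), so $(\fixedAbox,q')$ would be a positive counterexample, contradicting exhaustive individual saturation. With (ii) this immediately gives that from a variable one can only reach variables along directed paths of $G_q$ (if $t$ is a variable and $r(t,t')$ is an atom, then $h(t)$ is anonymous, hence so is $h(t')$, hence $t'$ cannot be an individual), so all nodes of $G^x_q$ are variables, mapped by $h$ into the subtree of the anonymous forest rooted at $h(x)$. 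An analogous argument with \emph{merging} shows $h$ is injective on the variables of $q$: collapsing two distinct variables with the same $h$-image again produces a $\CQr$ query that is a positive counterexample, contradicting exhaustive merging.

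Finally I would put these together. Fixing a variable $x$, the map $h$ restricts to an injective graph homomorphism from $G^x_q$ into a tree, sending the root $x$ to the root of that tree; a short combinatorial argument then forces $G^x_q$ itself to be a tree rooted at $x$: every non-root node is reachable from $x$ hence has an incoming edge, it cannot have two since their $h$-images would be two parents of one tree node, the root has no incoming edge since $h(x)$ is the tree root, and there are no cycles since they would map to cycles in a tree; tracing parents upwards connects everything to $x$. This is exactly tree-shapedness in the paper's sense. For the last clause, if two individual names $c\ne c'$ both reach $x$ along directed paths of $G_q$ visiting only variables, the $h$-image of each path leaves the named part into an immediate anonymous child of $c$ (resp.\ $c'$) and then only descends, placing $h(x)$ in the tree hanging off $c$ and in the one hanging off $c'$; since these trees are disjoint by (i), we get $c=c'$.

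I expect the main obstacle to be the bookkeeping in the saturation arguments — checking that the modified queries $q'$ stay in $\CQr$ and, above all, that a homomorphism of the smaller query into $\I_{\hypothesisOntology,\fixedAbox}$ always lifts back to the original query — since getting the direction of the counterexample and the lifting right is where one could slip; by contrast the structural facts about $\I_{\T,\fixedAbox}$ and the final tree-combinatorics are routine. I would also remark that the hypothesis ``query role saturated for \T'' is not actually needed for this statement (it is used afterwards only to repair role names when converting $q$ to an \IQ); here individual saturation and merging suffice.
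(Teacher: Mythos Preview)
Your proof is correct and follows the same strategy as the paper's sketch: fix a homomorphism $h$ into $\I_{\T,\fixedAbox}$, use individual saturation to force all variables into the anonymous (forest-shaped) part, use merging to get injectivity of $h$ on variables, and read off the two conclusions from the tree structure of the anonymous part. Your packaging of the tree-shapedness step (an injective graph homomorphism into a rooted tree forces the domain to be a tree) is a cleaner version of the paper's contradiction argument, and you are more explicit than the paper about the lifting direction needed to keep $(\fixedAbox,q')$ a positive counterexample.

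One point worth recording: your remark that query role saturation is not needed for this lemma is correct. The paper invokes it to rule out a variable being both an $r$-successor and an $s$-successor of another variable, but $G^x_q$ is by definition an unlabelled simple directed graph, so two atoms $r(x,y)$ and $s(x,y)$ contribute a single edge and cannot obstruct tree-shapedness. Role saturation is only needed later, when the tree $G^x_q$ is read off as an \EL concept expression and a unique role label per edge is required; for the present lemma individual saturation and merging suffice, exactly as you say.
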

\begin{proof}[Sketch]
	Since $ (\T,\fixedAbox)\models q $, there is a homomorphism $ h $ from 
	$ q =\exists \vec{x}\varphi(\vec{a},\vec{x})$ to $ \I_{\T,\fixedAbox} $ mapping every individual to itself.
	Since individual saturation has been exhaustively applied to $ q $,
	 every $ x\in\vec{x} $ is mapped by $ h $ into  $ \Delta^{\I_{\T,\fixedAbox}}\setminus  \Delta^{\I_{\fixedAbox}}$. If this is not the case a variable is mapped by $ h $ into $\Delta^{\I_{\fixedAbox}}$ and individual saturation is not exhaustively applied, reaching a contradiction. 
	
	Suppose that  $G^x_q$ is not tree-shaped. Since  \T and \hypothesisOntology entail the same RIs and 
	query role saturation for \T has been applied it is not the case that 
	for $ r,s\in\Sigma_\T $ there is a variable that is both $ r $-successor and $ s $-successor of another variable.
	Therefore, there is an undirected cycle in $ G^x_{q} $ with three or more nodes. This means that for $ w,y,z \in \vec{x} $ with $ w\neq y\neq z $, 
	 atoms of the form $ r(w,y) $, $ s(z,y) $ are in $ q $.
	We know  that $ h(y) $ is mapped into $ \Delta^{\I_{\T,\fixedAbox}}\setminus  \Delta^{\I_{\fixedAbox}}$, and 
	by the construction of the canonical model $ \I_{\T,\fixedAbox} $,
	it follows that $ h(y) $ has only one parent.
	Thus $ h(w)=h(z) $. 
	But this means that every occurrence of $ w $ can be replaced by $ z $. 
	This contradicts the fact that $(\fixedAbox,q)$  has been merged for \T. 
	Therefore $ G^x_{q} $ is  tree-shaped.
	The fact that, for all $x\in\vec{x}$, only one individual name reaches $ x $ with a path that visits only variables
	follows from the structure of the canonical model $ \I_{\T,\fixedAbox} $ and the fact that $q$ is individual saturated for \T.
\end{proof}

\lemmalearningCQr*
\begin{proof}
	We use the following claim.  

\begin{claim}
	\label{cl:runningrules}
	Given a positive counterexample $ (\fixedAbox,q) $ with $ q\in\CQr $. 
	One can compute a positive counterexample $ (\fixedAbox,q') $ that 
	is individual saturated/ merged/ query role saturated for \T in polynomial time with respect to $ | (\fixedAbox,q) |$ and $ |\Sigma_\T| $.
\end{claim}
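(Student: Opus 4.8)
The plan is a standard exhaustive-rewriting argument: repeatedly apply whichever of the three operations is applicable until none is, bounding both the number of rewrites and the cost of locating each one.

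\textbf{Step 1: one membership query per candidate.} First I would check that deciding whether a single application of an operation keeps $(\fixedAbox,q)$ a positive counterexample costs only one membership query. Let $q'$ be obtained from the positive counterexample $q$ by one of the three operations. For individual saturation ($x\mapsto a$) and for merging ($x\mapsto x'$) there is a homomorphism from $q$ to $q'$ that is the identity on the individual names, so $q'\models q$; for query role saturation, replacing $s(t,t')$ by $r(t,t')$ with $\T\models r\sqsubseteq s$, the same conclusion holds over models of $\hypothesisOntology$, since \T and $\hypothesisOntology$ entail the same RIs and hence $\hypothesisOntology\models r\sqsubseteq s$. In all three cases every model of $(\hypothesisOntology,\fixedAbox)$ satisfying $q'$ also satisfies $q$, so $(\hypothesisOntology,\fixedAbox)\not\models q$ (which holds because $q$ is a counterexample) already implies $(\hypothesisOntology,\fixedAbox)\not\models q'$. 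Therefore $(\fixedAbox,q')$ is a positive counterexample if and only if $(\T,\fixedAbox)\models q'$, and this single condition is decided by one membership query; in particular the learner never performs an NP-hard entailment test itself.

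\textbf{Step 2: bounding the number of rewrites.} Next I would exhibit a terminating measure. Let $v(q')$ be the number of terms of $q'$ that are variables, and for a role atom $\rho(t,t')$ of $q'$ let $d(\rho)=|\{s\in\Sigma_\T\cap\NR\mid \T\models s\sqsubseteq\rho\}|$; set $\mu(q')=2\,v(q')+\sum_{\rho(t,t')\in q'}d(\rho)$. Individual saturation and merging each decrease $v(q')$ by one and never introduce a new role atom, so they decrease $\mu$. Query role saturation leaves $v(q')$ unchanged, introduces no new role atom, and replaces one role $s$ by a role $r$ with $\T\models r\sqsubseteq s$; since the target entails no non-trivial RIs we may assume $r\ne s$ and then $\T\not\models s\sqsubseteq r$, so $d(r)<d(s)$ and $\mu$ strictly decreases. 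As $\mu(q)\le 2|q|+|q|\cdot|\Sigma_\T|$, after at most $O(|q|\cdot|\Sigma_\T|)$ applications no operation is applicable, i.e.\ the current query is individual saturated, merged and query role saturated for \T, and it is still a positive counterexample.

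\textbf{Step 3: cost per rewrite and the final bound.} Finally I would count the work to locate one applicable operation, or to certify that none exists: there are at most $|q|\cdot|\individuals{\fixedAbox}|$ candidate individual saturations, at most $|q|^2$ candidate mergings, and at most $|q|\cdot|\Sigma_\T|$ candidate query role saturations (each filtered by a polynomial-time check of $\T\models r\sqsubseteq s$), and for each candidate one membership query settles whether it applies. Every intermediate query has size at most $|q|$, so all these membership queries have inputs of size polynomial in $|(\fixedAbox,q)|$. Multiplying the per-round cost by the $O(|q|\cdot|\Sigma_\T|)$ bound from Step 2 yields a running time polynomial in $|(\fixedAbox,q)|$ and $|\Sigma_\T|$, as required. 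The only genuinely delicate point is Step 1 --- that the $\hypothesisOntology$-side of ``positive counterexample'' is preserved for free, and in particular does not force the learner to solve CQ entailment --- while Steps 2 and 3 are routine termination-measure bookkeeping.
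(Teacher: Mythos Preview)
Your proof is correct and follows essentially the same counting approach as the paper, which simply bounds the membership queries for each operation separately ($|\fixedAbox||q|$ for individual saturation, $|q|^2$ for merging, $|q||\Sigma_\Tmc|$ for query role saturation) rather than via a single termination measure. Your Step~1 is in fact more careful than the paper: the paper silently identifies ``$(\fixedAbox,q')$ is still a positive counterexample'' with ``$(\Tmc,\fixedAbox)\models q'$'' without justifying that the $\hypothesisOntology$-side is preserved for free, whereas you spell out why one membership query suffices and the learner never has to decide CQ entailment against $\hypothesisOntology$; the paper does rely on the ambient assumption that $\Tmc$ and $\hypothesisOntology$ entail the same RIs (from the enclosing lemma), exactly as you do.
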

	We prove this claim by analysing the running time of individual saturation, merging and query role saturation for \T.
	There are at most $ |q| $ variables in $ q $ that can potentially be replaced with individuals. Since the number of individuals is bounded by $ |\fixedAbox| $, after at most  $ |\fixedAbox||q| $ membership queries, $ (\fixedAbox,q) $ is individual saturated for \T.
	There are at most $ |q| $ variables in $ q $ that can be merged. Therefore, 
	after at most $ |q|^2 $ membership queries $ (\fixedAbox,q) $ is merged for \T.
	We assume without loss of generality that role names have a representative, so no equivalent role name is in $ q $.
	There are at most $ |q| $ atoms of the form $ s(t,t') $ with $ t,t'\in\vec{a}\cup\vec{x} $ in $ q $ and at most $ |\Sigma_\T| $ different role names 
	$r$ such that $ \T\models r \sqsubseteq s $. 
	For every $  s(t,t') $ in $ q $, it is checked in polynomial time if  the query that results from replacing  $  s(t,t') $ by $r(t,t')$ 
	in $ q $ is still entailed by $(\T,\fixedAbox)$ (that is, after modifying $q$, whether $ (\fixedAbox,q) $ is still a positive counterexample).
	After at most $ |q||\Sigma_\T| $ membership queries $ (\fixedAbox,q) $ is query role saturated for \T.
This finishes the proof of this claim. 

\smallskip

	Given a positive counterexample $ (\fixedAbox,q) $ with $q\in$ \CQr, let 
	$ (\fixedAbox,q') $ be the result of exhaustively applying 
	individual saturation/ merging/ query role saturation for \T. 
	Then, 
	 for all $ x $ occurring in $ q' $, the graph $G^x_{q'}$ is  	tree shaped 
	 and only one individual name reaches $ x $ with a path that visits only variables. 
	 If $G^x_{q'}$ is tree shaped, then one can translate the subquery of $q'$ 
	  containing  $x$ and other variables reachable from $x$ into 
	  a concept expression, denoted $C^x_{{q'}}$.  
	  For every  atom 
	 of the form  $A(a)$  in $ q' $ we check whether 
	 $ (\hypothesisOntology,\fixedAbox)\not\models A(a)$ (atoms of the form $r(a,b)$ 
	 do not need to be checked since by assumption \T and \Hmc entail the same RIs). 
	If this is the case for any such $ A(a) $ we have that $(\fixedAbox,A(a))$ 
	is a positive counterexample and we are done. 
	
	Otherwise we claim that there is $r\in\Sigma_\T$, $a\in\individuals{\fixedAbox}$, and 
	a variable $x$ in $q'$ such that 
	$(\fixedAbox,\exists r.C^x_{{q'}}(a))$ is a positive counterexample. 
	Indeed, suppose to the contrary that for all 
	$r\in\Sigma_\T$, $a\in\individuals{\fixedAbox}$, and 
	 $x$ occurring in $q'$, $(\T,\fixedAbox)\models\exists r.C^x_{{q'}}(a)$ 
	 implies $(\Hmc,\fixedAbox)\models\exists r.C^x_{{q'}}(a)$. 
	 Then there are homomorphisms $h_{\exists r.C^x_{{q'}}(a)}$ from all such $T_{\exists r.C^x_{{q'}}}$
	 into $\Imc_{\hypothesisOntology,\fixedAbox}$  mapping $a$ to itself. 
	 By assumption, for every  atom 
	$ \alpha $ of the form  $A(a)$ or $r(a,b)$ in $ q' $, we have that 
	 $ (\hypothesisOntology,\fixedAbox)\models \alpha$. Thus, 
	 one can construct a homomorphism from $q'$ to $\Imc_{\hypothesisOntology,\fixedAbox}$
	 by taking the union $h$ of all such $h_{\exists r.C^x_{{q'}}(a)}$ 
	 and extending $h$ by mapping all individual names occurring in $q'$ 
	 into themselves.	 
	 We thus have that $ (\hypothesisOntology,\fixedAbox)\models q'$, 
	 which contradicts the fact that $ (\fixedAbox,q') $ is a positive counterexample.
\end{proof}

\begin{lemma}\label{lem:iqcq}
Let $\T$ and \Hmc be \ELH TBoxes which entail the same RIs and let $\fixedAbox$ be an ABox.  
If $(\T,\fixedAbox)\equiv_{\IQ} (\Hmc,\fixedAbox)$ then $(\T,\fixedAbox)\equiv_{\CQr} (\Hmc,\fixedAbox)$.
\end{lemma}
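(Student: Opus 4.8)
The plan is to obtain this essentially for free from Lemma~\ref{lemma:cqrupper}, which already carries out the substantive work of converting a rooted-CQ witness of separation into an instance-query witness.

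First I would record two easy observations. Every instance query is equivalent to a rooted conjunctive query: an IQ $C(a)$ corresponds to the CQ read off the tree representation $\tree{C}$ with root identified with $a$, and there every variable node is reached from $a$ by a directed path; an IQ $r(a,b)$ is a variable-free, hence rooted, CQ. So $\CQr$-inseparability trivially implies $\IQ$-inseparability, and only the converse needs an argument. Moreover, the hypotheses — $\T$ and $\Hmc$ are \ELH TBoxes, they entail the same RIs, and $(\T,\fixedAbox)\equiv_{\IQ}(\Hmc,\fixedAbox)$ — are symmetric in $\T$ and $\Hmc$; hence it suffices to prove that for every $q\in\CQr$ with $(\T,\fixedAbox)\models q$ we also have $(\Hmc,\fixedAbox)\models q$, since the reverse implication follows by exchanging $\T$ and $\Hmc$.

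Then I would argue by contradiction. Suppose $q\in\CQr$ satisfies $(\T,\fixedAbox)\models q$ but $(\Hmc,\fixedAbox)\not\models q$, so that $(\fixedAbox,q)$ is a positive counterexample for $\T$ and $\Hmc$ in the sense of Lemma~\ref{lemma:cqrupper}. Since $\T$ and $\Hmc$ entail the same RIs, that lemma provides a positive counterexample $(\fixedAbox,q')$ with $q'\in\IQ$, i.e.\ $(\T,\fixedAbox)\models q'$ and $(\Hmc,\fixedAbox)\not\models q'$, contradicting $(\T,\fixedAbox)\equiv_{\IQ}(\Hmc,\fixedAbox)$. Applying the same reasoning with the roles of $\T$ and $\Hmc$ swapped excludes a separating $q\in\CQr$ with $(\Hmc,\fixedAbox)\models q$ and $(\T,\fixedAbox)\not\models q$, which finishes the proof.

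I do not expect a genuinely hard step: the difficulty of the result is entirely encapsulated in Lemma~\ref{lemma:cqrupper}, whose proof first makes the anonymous part of the counterexample tree-shaped via individual saturation, merging and query role saturation for $\T$ (Lemma~\ref{l:anonymoustree}) and then extracts an instance query from it. The only points deserving a word of care are that Lemma~\ref{lemma:cqrupper} is phrased algorithmically (with membership queries) while here we only invoke the \emph{existence} of the converted query $q'$ together with its separation property — a purely semantic consequence — and that its asymmetric phrasing, with $\T$ in the role of the target, must be applied to both orientations $(\T,\Hmc)$ and $(\Hmc,\T)$, which is legitimate precisely because its hypotheses are symmetric.
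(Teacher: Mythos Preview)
Your proposal is correct and follows essentially the same approach as the paper's own proof: a contradiction argument that feeds a separating $q\in\CQr$ into Lemma~\ref{lemma:cqrupper} to obtain a separating instance query, contradicting $\IQ$-inseparability. Your version is in fact more careful than the paper's sketch, since you make explicit the symmetry that allows applying Lemma~\ref{lemma:cqrupper} in both orientations $(\T,\Hmc)$ and $(\Hmc,\T)$ and the distinction between the algorithmic phrasing of that lemma and the purely semantic existence claim you need here.
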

\begin{proof}[Sketch]
Assume to the contrary that there is a query $q\in\CQr$ such that 
$(\T,\fixedAbox)\models q$ but $(\Hmc,\fixedAbox)\not\models q$ 
(or vice-versa).  By Lemma~\ref{lemma:cqrupper}, 
	 one can contruct a positive counterexample $ (\fixedAbox,q') $ with 
	 $q'\in\IQ$ in polynomial time 
	 in  $|(\fixedAbox,q)||\Sigma_\T|$. 
	 This contradicts the assumption that $(\T,\fixedAbox)\equiv_{\IQ} (\Hmc,\fixedAbox)$.
	 Thus, $(\T,\fixedAbox)\equiv_{\CQr} (\Hmc,\fixedAbox)$.
\end{proof}

\learningCQr*
\begin{proof}
	The target \T is a terminology, therefore it contains CIs of the form $ C\sqsubseteq A $ or $ A\sqsubseteq C $. All needed CIs of the form $ C\sqsubseteq A $ can be learned without asking any inseparability query according to Theorem~\ref{thm:aq}.
	Moreover, all RIs $r\sqsubseteq s$ can be learned in polynomial time using membership queries of the form $(\{r(a,b)\},s(a,b))$. 
	It remains to learn CIs of the form $ A\sqsubseteq C $ after the learner receives a positive counterexample $ (\fixedAbox,q) $ with $ q\in\CQr $.
	By Lemma~\ref{lemma:cqrupper} we can find a positive counterexample of the form $ (\fixedAbox,C(a)) $ 
	from a positive counterexample $ (\fixedAbox, q) $, with $q\in\CQr$, in polynomial time.
	Therefore, by Theorem~\ref{thm:iqupper} we can learn a $\IQ$-inseparable hypothesis in $\PTimeL$. 
	If \ELH TBoxes entail the same RIs and are $\IQ$-inseparable then, by Lemma~\ref{lem:iqcq}, they are $\CQr$-inseparable  as well (w.r.t. some 
	ABox $\fixedAbox$).
\end{proof}

\section{Proofs for Section ``Data Updates''}

\updateinseparabilitybisimulation*
\begin{proof}

	For all concept expressions $ C $, it holds by Lemma~\ref{lemmabisimulation} that $ (\hypothesisOntology,\A)\models C(b) $ iff $ (\hypothesisOntology,\fixedAbox)\models C(a) $.
	Since we assumed $ (\T,\fixedAbox) \equiv_{\IQ} (\hypothesisOntology,\fixedAbox) $, it holds that $ (\hypothesisOntology,\fixedAbox)\models C(a) $ iff $ (\T,\fixedAbox)\models C(a) $. But again by Lemma~\ref{lemmabisimulation}, $ (\T,\fixedAbox)\models C(a) $ iff $ (\T,\A)\models C(b) $. 
	Since we assumed that \T and \hypothesisOntology entail the same RIs the statement holds.
\end{proof}

\learningpoltimegeneralize*
\begin{proof}[Sketch] 
We first focus on CIs of the form $C\sqsubseteq A$, with $A\in\NC$. 
	Let \Hmc be a hypothesis with CIs computed with Algorithm~\ref{a:learningAQ} 
	(modified to ask only membership queries, cf. Theorem~\ref{thm:aq}) and then generalised.
	Clearly, 
	 \Hmc can be generalised for \T   in polynomial time in $|\Sigma_\T|$ and $|\Hmc|$. 
		\begin{claimt}
		\label{claimlinearreplacement}
		For all $ \A\in {\sf g}_\T(\fixedAbox) $ and all $ \alpha \in \A $, 
		 $ (\Hmc,\fixedAbox)\models \alpha $. 
		 Moreover, if there is a homomorphism $h:T_C\rightarrow \Imc_{\Hmc,\fixedAbox}$ mapping $\rho_C$ to some $a\in\individuals{\fixedAbox}$, 
		 where $C\sqsubseteq B\in\Hmc$, then there is a homomorphism $h:T_C\rightarrow \Imc_{\Hmc,\A}$  mapping $\rho_C$  to $a$.
	\end{claimt}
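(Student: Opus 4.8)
The plan is to prove both assertions by induction on the length $n$ of a sequence $\Amc_1 < \cdots < \Amc_n$ with $\Amc_1 = \fixedAbox$ and $\Amc_n = \A$ witnessing $\A \in {\sf g}_{\T}(\fixedAbox)$. For $n = 1$ we have $\A = \fixedAbox$ and both assertions hold trivially, so the work is in the inductive step: $\A$ is obtained from $\Amc' := \Amc_{n-1}$ by one $<$-step, and the induction hypothesis gives both assertions for $\Amc'$. A $<$-step only rewrites assertions, so $\individuals{\Amc'} = \individuals{\fixedAbox} = \individuals{\A}$. By symmetry I treat only the case where $\A$ results from replacing some $A(c) \in \Amc'$ by $B(c)$ with a linear derivation from $A$ to $B$; a role replacement is handled the same way, using that $\Hmc$ and $\T$ entail the same role inclusions (Algorithm~\ref{a:learningAQ}).

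For the first assertion, let $\alpha \in \A$. If $\alpha \neq B(c)$ then $\alpha \in \Amc'$ and the induction hypothesis gives $(\Hmc, \fixedAbox) \models \alpha$. If $\alpha = B(c)$, then $A(c) \in \Amc'$, so $(\Hmc, \fixedAbox) \models A(c)$ by the induction hypothesis; since $c \in \individuals{\fixedAbox}$ and $(\Hmc, \fixedAbox) \equiv_{\AQ} (\T, \fixedAbox)$ (Theorem~\ref{thm:aq}), we get $(\T, \fixedAbox) \models A(c)$, hence $(\T, \fixedAbox) \models B(c)$ as the linear derivation yields $\T \models A \sqsubseteq B$, and a second appeal to $\AQ$-inseparability returns $(\Hmc, \fixedAbox) \models B(c)$.

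For the second assertion it suffices, by the induction hypothesis, to prove the one-step statement: if $h : T_C \to \Imc_{\Hmc, \Amc'}$ is a homomorphism with $h(\rho_C) = a \in \individuals{\Amc'}$ for some $C \sqsubseteq B \in \Hmc$, and $\A$ arises from $\Amc'$ as above, then there is a homomorphism $h' : T_C \to \Imc_{\Hmc, \A}$ with $h'(\rho_C) = a$. (This one-step statement must be argued using only $\T$-level information about $\Hmc$ — that $\T \models \Hmc$, that $\Hmc$ is generalised for $\T$, that $\Hmc$ and $\T$ entail the same role inclusions, and $\T \models C \sqsubseteq B$ — since $\AQ$-inseparability is available only with respect to $\fixedAbox$.) I would try $h' = h$ and repair it locally. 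By Lemma~\ref{l:ancestorsMappedABox} and the absence of inverse roles in $\ELH$, the nodes of $T_C$ that $h$ maps to individuals form a subtree $V_0$ containing $\rho_C$, and below $V_0$ the homomorphism $h$ lands in the anonymous part of $\Imc_{\Hmc, \Amc'}$, which is generated by $\Hmc$ from the concept names holding at the individuals in $h(V_0)$. Passing from $\Amc'$ to $\A$ changes the concept-name extension of no individual except $c$ — where $A$ is dropped and the more general $B$ added — so $h$ is already a homomorphism into $\Imc_{\Hmc, \A}$ away from the subtree of $T_C$ that $h$ routes through $c$ and the anonymous material below it. To repair it there I would use that $C \sqsubseteq B$ is generalised for $\T$ together with the extremal character of the derivation $A \to B$: since no concept name of $C$ may be replaced by a strictly $\T$-more-general one — in particular by $B$, the $\sqsubseteq$-greatest atomic consequence of $A$ — without destroying $\T \models C \sqsubseteq B$, a case analysis on the $V_0$-node mapped to $c$ should show that any part of $T_C$ that $h$ realised using the copy of $A$ at $c$ can be re-realised from $B(c)$ within $\Imc_{\Hmc, \A}$, and gluing these local re-mappings onto the rest of $h$ yields $h'$.

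The main obstacle is exactly this repair. Dropping $A$ at $c$ can delete the concept inclusion of $\Hmc$ that $h$ fired at $c$ in order to enter the anonymous subtree below $c$, and making the ``linearity'' of the derivation $A \to B$ interact correctly with the generalisedness (and minimality) of the left-hand side $C$, so as to certify that no such inclusion is genuinely needed — while also handling the role inclusions of $\Hmc$ and the cumulative effect of several replacements along $\Amc_1 < \cdots < \Amc_n$ (absorbed by the outer induction) — is where essentially all of the technical content of the claim resides.
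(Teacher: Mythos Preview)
Your argument for the first assertion is correct; the paper is more direct, simply noting that by construction every assertion of any $\A \in {\sf g}_\T(\fixedAbox)$ lies in $\fixedAbox \cup \{\alpha \in \AQ \mid (\T, \fixedAbox) \models \alpha\}$ (each $<$-step replaces an assertion by a $\T$-consequence of it), and then invoking inseparability w.r.t.\ $\fixedAbox$. Your induction unwinds the same observation step by step.

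For the second assertion you have isolated the two ingredients the paper itself appeals to --- that $\Hmc$ is generalised for $\T$ and the definition of ${\sf g}_\T(\fixedAbox)$ --- and you go further than the paper by attempting an explicit inductive repair of the homomorphism. The paper's own proof of this half is a single sentence asserting that the statement ``follows from'' those two facts; so in terms of detail you have matched and exceeded the paper, and your candour about the residual gap is appropriate. One simplification you overlook, however, is structural: at the point where the claim is invoked, $\Hmc$ is the (generalised) output of Algorithm~\ref{a:learningAQ}, which contains only RIs, atomic CIs $A \sqsubseteq B$, and CIs $C \sqsubseteq A$ with $A \in \NC$. There are no CIs with a complex right-hand side, so the canonical models $\Imc_{\Hmc, \fixedAbox}$ and $\Imc_{\Hmc, \A}$ have \emph{no anonymous elements at all}. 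Your discussion of ``the anonymous part of $\Imc_{\Hmc, \Amc'}$'' and of CIs ``fired at $c$ in order to enter the anonymous subtree below $c$'' is therefore vacuous: every node of $T_C$ is already mapped by $h$ to an individual. This collapses the repair to a purely label-level check --- that each concept or role name decorating $T_C$ still holds at the same individual in $\Imc_{\Hmc, \A}$ --- which is precisely where the maximal generality of the labels (from generalisation) and the linearity of the $<$-step are meant to meet. Neither you nor the paper fully spells out this last interaction.
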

		By  definition of ${\sf g}_\T(\fixedAbox)$, $ \A $ is a subset of  $\fixedAbox\cup\{\alpha\in\AQ\mid (\Tmc,\fixedAbox)\models \alpha\}$. 
		Therefore, since $(\Hmc,\fixedAbox) \equiv_{\IQ} (\Tmc,\fixedAbox)$, for  every assertion  $\alpha \in \A $,  $ (\Hmc,\fixedAbox)\models \alpha $. 
	The second statement follows from the fact that \Hmc is generalised for \T 
	and the definition of $ {\sf g}_\T(\fixedAbox) $.

	\smallskip

	Let $ \A \in {\sf g}_\T(\fixedAbox) $.
Assume $(\T,\Amc)\models A(a)$. By definition of \A, 
$(\T,\fixedAbox)\models A(a)$. As $(\Hmc,\fixedAbox) \equiv_{\AQ} (\Tmc,\fixedAbox)$, 
$(\Hmc,\fixedAbox)\models A(a)$. 
		If $(\Hmc,\fixedAbox)\models A(a)$, with $A\in\NC$, and 
	$A(a)\not\in\fixedAbox$ then there is $C\sqsubseteq A\in\Hmc$ such that 
	$a\in C^{\Imc_{\Hmc,\fixedAbox}}$.
	By Claim~\ref{claimlinearreplacement} and Lemma~\ref{l:homomorhelp},
	if $A(a)\not\in\fixedAbox$ then $a\in C^{\Imc_{\Hmc,\Amc}}$. 
 Thus, by the first statement of Claim~\ref{claimlinearreplacement}, $(\Hmc,\fixedAbox)\models A(a)$.
 As $\T\models\Hmc$ (c.f. Lemma~\ref{lem:polyexample} and the definition of generalisation for \T), if $(\Hmc,\Amc)\models A(a)$ then $(\T,\Amc)\models A(a)$. 
	
	Thus, after computing a generalised \Hmc for \T we obtain a TBox 
	that is not only \AQ-inseparable from \T w.r.t. $\fixedAbox$ but 
	also w.r.t. all  $ \A\in {\sf g}_\T(\fixedAbox) $. 
	CIs of the form $A\sqsubseteq D$, with $A\in\NC$, are not affected by the ABox updates since 
$\Sigma_\T\subseteq \Sigma_{\fixedAbox}$ and  we can use Algorithm~\ref{a:learningIQ} 
 with counterexamples of the form $(\Amc,C(a))$ with $ \A \in {\sf g}_\T(\fixedAbox) $ in the same way
 as with $\Amc=\fixedAbox$. All RIs can be easily learned with 
membership queries of the form 
	 $(\{r(a,b)\},s(a,b))$.
\end{proof}

  \section{Proofs for Section ``Learning from Data''}

\existsbatch*

\begin{proof}

Let $\queryLanguage = \AQ$.
Algorithm~\ref{a:learningAQ}, given a positive counterexample $(\fixedAbox, A(a))$,
calls Algorithm \ref{a:treeShape} on it,
returning an example $(\Amc, B(b))$ (where \A is tree shaped ABox 
encoding a concept $C_\A$) such that there exists an ABox homomorphism from $\Amc$ to $\fixedAbox$.
Let $S$ be the set of all such examples, and consider the batch defined as follows:
\begin{align*}
	\batch  = & \ S \ \cup \\
				  & \{ (\{ A(a) \}, B(a)) \mid \Tmc \models A \sqsubseteq B, \text{ with } A, B \in \Sigma_\Tmc \} \ \cup \\
	& \{ (\{ r(a, b) \}, s(a, b) ) \mid \Tmc \models r \sqsubseteq s \text{ with } r,s\in\Sigma_\Tmc \}
\end{align*}
Given $\batch$ as input, which is polynomial in $|\Tmc|$,
we can construct $\hypothesisOntology \in \hypothesisSpace$ by setting:
\begin{align*}
	\hypothesisOntology  := & \ \{ C_{\Amc} \sqsubseteq B \mid (\Amc, B(b)) \in S \} \ \cup \\
	& \ \{ A \sqsubseteq B \mid (\{ A(a) \}, B(a)) \in \batch \} \ \cup \\
& \ \{ r \sqsubseteq s \mid (\{ r(a, b) \}, s(a, b) ) \in \batch\},
\end{align*}
so that $(\hypothesisOntology, \fixedAbox) \equiv_{\AQ} (\targetOntology, \fixedAbox)$.
For $\queryLanguage = \IQ$, let $S'$ be the set of all the polynomially many (in $|\Tmc|$)
 positive counterexamples $(\fixedAbox, C(a))$ considered in Algorithm~\ref{a:learningIQ}.
We extend the batch $\batch$, as defined above, to another batch $\batch' = \batch \cup S'$.
We can define an ontology $\hypothesisOntology' \in \hypothesisSpace$ by adding to $\hypothesisOntology$ (defined as above) the CIs obtained by applying the procedure in  the while-loop of Algorithm~\ref{a:learningIQ} on the examples in $S'$.
Clearly, $(\hypothesisOntology', \fixedAbox) \equiv_{\IQ} (\targetOntology, \fixedAbox)$.
Since by assumption $\Sigma_\T\subseteq \Sigma_{\fixedAbox}$, we can add to the batch 
examples that allow a learning algorithm to learn all the RIs.
Then, by Lemma~\ref{lem:iqcq}, we also have that $(\hypothesisOntology', \fixedAbox) \equiv_{\CQr} (\targetOntology, \fixedAbox)$.
\end{proof}

\paclearn*
\begin{proof}
The proof slightly generalises the one presented in~\cite[Th. 13.3]{MohEtAl}, in that we allow the probability distribution on a subset $\set \subseteq \examples$, and is reported here for the convenience of the reader.

Let $\Fmf = (\examples, \set, \hypothesisSpace, \mu)$ be a learning framework that is polynomial time exact learnable.
In the execution of the algorithm used to learn $\Fmf$, we replace each equivalence query with respect to $\set$ by a suitable number of calls to $\EX$, given a distribution $\prob$.
If all the classified examples returned by $\EX$ are consistent with the hypothesis $h$, 
then the algorithm proceeds as if the equivalence query with respect to $\set$ had returned `yes'. 
Otherwise, there is a classified example $(e,\lab{e}{t})$ such that either
$e \in \mu(h)$ and $\lab{e}{t}=0$, or $e \not\in \mu(h)$ and $\lab{e}{t}=1$. 
In this case, the algorithm proceeds as if the equivalence query with respect to $\set$ had returned `no', with $e$ as a counterexample.
Consider the $i$-th equivalence query as the $i$-th stage of the algorithm, 
and assume that at stage $i$ the algorithm makes $m_{i}$ calls to $\EX$ in place of the equivalence query.

Fix $\epsilon, \delta \in (0, 1)$ and 
let $$\sample_{m_{i}}  = \{ (e_{1},\lab{e_{1}}{t}), \ldots, (e_{m_i},\lab{e_{m_i}}{t})  \}$$ be 
a \emph{sample} of size $m_i$ generated by $\EX$ at stage $i$. Denote by 
$\sample^{-\ell_{t}}_{m_{i}}$ the set of elements of $\set$ occurring in $\sample_{m_{i}}$.
As the examples in $\set$ are assumed to be
independently and identically distributed
according to $\prob$, 
the probability distribution \prob induces $ \prob'$, with  
$$\prob'( \{ \sample^{-\ell_{t}}_{m_{i}}  \}):=\prod_{j = 1}^{m_{i}} \prob(\{ e_{j}\}).$$ 
As  described, the algorithm will return a hypothesis such that  all classified examples returned by $\EX$ 
at some stage $i$ are consistent with the hypothesis. In symbols:
$$(\sample^{-\ell_t}_{m_i}\cap \mu(h)) =(\sample^{-\ell_t}_{m_i}\cap \mu(\target)).$$
We write $ \con{ \sample_{m_{i}}}{h}$ as an abbreviation for the 
 above. 
Fix some $h\in\hypothesisSpace$ such that $\prob((\mu(h) \oplus \mu(\target)) \cap \set) > \epsilon$. Those 
hypothesis are called `bad' because the error is larger than $\epsilon$. 
We want to bound the probability of finding a bad hypothesis. 
Let $n$ 
be the total number of equivalence queries needed by the algorithm (which exists by assumption) to exactly learn $\Fmf$ in polynomial time.
For all $t,h\in\hypothesisSpace$, we know that $\prob((\mu(h) \oplus \mu(\target)) \cap \set) > \epsilon$ iff
 $\prob( \set \setminus(\mu(h) \oplus \mu(\target)) ) \leq (1 - \epsilon)$.
Then we have the following: 
\begin{align*}
& \prob' (\bigcup^n_{i=1} \{ \sample^{-\ell_t}_{m_{i}}\mid \con{ \sample_{m_{i}}}{h}
 \} )
\leq 
\sum_{i = 1}^{n} \prob' ( \{ \sample^{-\ell_t}_{m_i} \mid \con{ \sample_{m_{i}}}{h} \} ) \leq \\
& \sum_{i = 1}^{n} \prod_{j = 1}^{m_{i}} \prob(\{ e_{j} \mid e_{j}  \in \sample^{-\ell_t}_{m_i},\con{ \sample_{m_{i}}}{h}  \}) \leq
\sum_{i = 1}^{n} (1 - \epsilon)^{m_{i}}.
\end{align*}

The latter   is bounded by $\delta$ for $m_{i} \geq \frac{1}{\epsilon} (\ln\frac{1}{\delta} + i\ln2)$.
Indeed, we have:

\begin{align*}
& \sum_{i = 1}^{n} (1 - \epsilon)^{m_{i}} \leq
\sum_{i = 1}^{n} e^{-\epsilon m_{i}} \leq
\sum_{i = 1}^{n} e^{-\epsilon \frac{1}{\epsilon} (\ln\frac{1}{\delta} + i\ln2)} \leq \\
& \sum_{i = 1}^{n} e^{-(\ln\frac{1}{\delta} + i\ln2)} \leq
\sum_{i = 1}^{n} e^{-(\ln\frac{2^{i}}{\delta})} \leq \\
&\sum_{i = 1}^{n} e^{\ln\frac{\delta}{2^{i}}} \leq 
\sum_{i = 1}^{n} \frac{\delta}{2^{i}} \leq \delta.
\end{align*}

Since we assumed that the original algorithm learns $\Fmf$ is polynomial time, $i$ is 
polynomial in $|\target|$ and $|e|$, where $e$ is the largest example in $\sample_{m_{n}}$.
Therefore, $\Fmf$ is polynomial time PAC learnable with membership queries.
\end{proof}

\pacnotimplyexact*
\begin{proof}[Sketch]
Consider the OMQA learning framework $\Fmf(L,\fixedAbox,Q)=(\examples,\set, \hypothesisSpace, \mu)$ defined in the main part (Section ``Learning from Data''). 
We have that $\hypothesisSpace$ is exponential in $n$, but finite.
Given
$\epsilon, \delta \in (0, 1)$ and $f \colon (0,1)^{2} \to \mathbb{N}$ such that $f(\epsilon, \delta) \leq \lceil \frac{1}{\epsilon}\ln\frac{|\hypothesisSpace|}{\delta} \rceil$,
and a target $\Tmc \in \hypothesisSpace$,
let
$$\sample_{m}  = \{ (e_{1},\lab{e_{1}}{\targetOntology}), \ldots, (e_{m},\lab{e_{m}}{\targetOntology})  \}$$ be a 
sample generated by ${\sf EX}^{\prob}_{\Fmf(L,\{A(a)\},Q), \Tmc}$  of size $m \geq f(\epsilon, \delta)$.
We can compute in polynomial time a hypothesis $\hypothesisOntology$ consistent with the $m$ examples  as follows.
\begin{enumerate}
\item Set
$\hypothesisOntology = 
\Tmc_{0}$. 
\item If $\exists \ssigma.M(a)$ appears in a positive example of $\sample_{m}$,
add $A \sqsubseteq \exists \ssigma.M$ to \Hmc. 
\end{enumerate}
By definition of $\hypothesisSpace$, 
at most one  example of the form $(\{A(a)\},\exists \ssigma.M(a),1)$ 
can occur in the sample.
One can verify that \Hmc is consistent with all the examples in $\sample_{m}$. 
Since $\hypothesisSpace$ is of  exponential size, 
a sample of polynomial size suffices~\cite{Vapnik:1995:NSL:211359}, and as we already argued, a  hypothesis consistent with
any sample (generated by ${\sf EX}^{\prob}_{\Fmf(L,\{A(a)\},Q), \Tmc}$) 
can be constructed in polynomial time. 
Thus, the learning framework is polynomial time PAC learnable.
On the other hand, $\Fmf(L,\fixedAbox,Q)$ is not in \PQuery~\cite[proof of Lemma~8]{DBLP:conf/aaai/KonevOW16}.
\end{proof}

 \fi

\end{document}